\newtheorem{theorem}{Theorem}
\newtheorem{lemma}{Lemma}
\newtheorem{corollary}{Corollary}
\title{Online Resource Allocation: Bandits feedback and Advice on Time-varying Demands}
\author[1]{Lixing Lyu}
\author[2]{Wang Chi Cheung}
\affil[1]{Institute of Operations Research and Analytics, National University of Singapore, \textit{lyulixing@u.nus.edu}}
\affil[2]{Department of Industrial Systems Engineering and Management, National University of Singapore, \textit{isecwc@nus.edu.sg}}
\begin{document}

\bibliographystyle{plainnat}
	
\maketitle
\newcommand*\abs[1]{\lvert#1\rvert}

\begin{abstract}
We consider a general online resource allocation model with bandit feedback and time-varying demands. While online resource allocation has been well studied in the literature, most existing works make the strong assumption that the demand arrival process is stationary. In practical applications, such as online advertisement and revenue management, however, this process may be exogenous and non-stationary, like the constantly changing internet traffic. Motivated by the recent Online Algorithms with Advice framework [Mitazenmacher and Vassilvitskii, \emph{Commun. ACM} 2022], we explore how online advice can inform policy design. We establish an impossibility result that any algorithm perform poorly in terms of regret without any advice in our setting. In contrast, we design an robust online algorithm that leverages the online predictions on the total demand volumes. Empowered with online advice, our proposed algorithm is shown to have both theoretical performance and promising numerical results compared with other algorithms in literature. We also provide two explicit examples for the time-varying demand scenarios and derive corresponding theoretical performance guarantees. Finally, we adapt our model to a network revenue management problem, and numerically demonstrate that our algorithm can still performs competitively compared to existing baselines.

\end{abstract}

\section{Introduction}

Online resource allocation is a fundamental framework in Operations Research, encompassing models where a decision maker (DM) endowed with resource constraint serves sequentially comming customers, each of which consumes resources and generates a reward. At each time step, the DM takes an action, aiming to maximize cumulative reward subject to resource constraints. This area has been studied extensively over many decades, with several applications such as network revenue management (\cite{talluri1998analysis}, \cite{liu2008choice}, \cite{zhang2009approximate}) and online advertising (\cite{mehta2007adwords,mehta2013online}). 

While there is a substantial body of literature in this area, most existing works rely on the assumption that the demand arrival process is stationary (\cite{besbes2012blind,ferreira2018online}). 
In practice, however, this assumption could be too strong. Consider the example of online advertising. The internet traffic volume is constantly changing, leading to a non-stationary model. While the problem model exhibits non-stationarity, the model is structured in the sense that 
the non-stationariety in internet traffic is often exogenous, while the attractiveness of an online advertisement can be considered stationary.
How could we effectively harness this problem structure to design policies?


Motivated by the aforementioned considerations, we address an online resource allocation problem in which the outcome, represented an array of reward and resource consumptions, is scaled by a non-stationary quantity that signifies changing demand volumes.
Specifically, the DM is endowed with a set of available actions and an array of resource budgets to be allocated in a fixed and finite horizon $T$.
The outcome of each action at time $t \in [T]$ involves an adversarial and a stochastic component. 
The adversarial part, $q_t$, represents the non-stationary demand volume, and the stochastic term refers to the outcome per unit demand, which is sampled from an unknown distribution, characterizing the intrinsic quality of each action. The DM only receives information about the adversarial $q_t$ and the latent outcome distribution via the bandit feedback after each allocation.
The DM needs to choose an action at each time step with the objective of maximizing the total cumulative rewards and subject to the fixed resource budget constraint. Replenishment is not allowed during the planning horizon.
Given the unknown distribution of outcome and bandit feedback setting, the DM faces a common \emph{exploration-exploitation} tradeoff, and the limited resource increases the urgency of \emph{exploration}. Moreover, the non-stationarity of demands brings about additional uncertainty in the outcome of each allocation decision each round and the future total demand volume, further complicating the online optimization challenge.

The uncertainties arising from both latent outcome distribution and non-stationary time-varying demand pose significant challenges in designing robust online policies that perform well. \cite{badanidiyuru2013bandits} is the most relevant work to ours, except for the stationary demand, refered to as \textit{Stochastic Bandits with Knapsacks} (Stochastic BwK) problem. This model is a special case of ours by setting $q_t = 1$ for all $t \in [T]$. \cite{badanidiyuru2013bandits,agrawal2014bandits} provide optimal online algorithms for stochastic BwK with regret sub-linear in $T$ when the resource budget is in the same level of $T$. The regret of an algorithm is the difference between the optimum and the expected cumulative reward earned by the algorithm, and a sub-linear-in-$T$ regret implies the convergence to optimality as $T$ grows. An alternative setting, \textit{adversarial BwK}, is introduced by \cite{immorlica2019adversarial}. Each action's outcome distribution can change arbitrarily over time. Contrary to the stochastic BwK, it is impossible to achieve a regret sub-linear in $T$, even when the outcome distribution is changed only once during the horizon, as shown in \citep{liu2022non}. Although our model is less adversarial compared to the adversarial BwK, as the outcome distribution changes due to non-stationary demand, our setting still presents significant challenges. Indeed, it is still impossible to achieve a sub-linear-in-$T$ regret in our setting, if the DM does not receive any online advice, as shown in our forthcoming Section \ref{sec:lower_bound}.

Motivated by the Online Algorithm with Advice framework \citep{mitzenmacher2022algorithms}, we are interested in incorporating predictions on non-stationary demand to enhance the tractability of our problem. These predictions as a form of advice can be used to model consulting services, which is broadly used in applications in supply chain management, revenue management and risk management. For example, in the context of revenue management, consultants may provide forecast about market size trend, which can help firms or government organizations to optimize their pricing strategies. Interesting readers can refer to \cite{ribeiro2001quality,mughan2004management,makovetskiy2021consulting}. Intuitively, the performance of a policy that leverages the advice crucially depends on the accuracy of the advice. Our paper aims to develop an algorithm with theoretical performance guarantees, by robustly incorporating an online advice irrespective to its quality. To achieve this, we assume that the DM only has access to the advice, but not its accuracy of. In other words, our algorithm takes the advice as a black box, and performs well when the advice is informative. By leveraging the growing capabilities from machine learning methods, our algorithm can harness the increasing accuracy of predictions.

\subsection{Main Contributions}

Here we briefly summarize our contributions.

\textbf{Model. } We propose an online resource allocation model with bandit feedback and time-varying demand, incorporating a prediction oracle. Such an incorporation is novel compared to existing literature. We identify the value of total demand volume $Q = \sum_{t=1}^T q_t$ as a crucial (but latent) parameter that affects the online decision. The prediction oracle provides a prediction to this value at each time step. The oracle corresponds to how the DM (who can be a firm or the government) constantly acquires updated information about the total demand 
through some consulting service. In practice, various existing time series or machine learning tools can be used to construct such prediction oracle. 

\textbf{Impossibility Results. } We establish two impossibility results in terms of regret for our proposed model: Firstly, without any advice, we prove that any online algorithm performs poorly, suffering a linear-in-$T$ regret. This result highlights the challenging nature of the problem when no advice is available. Second, with the access of a prediction oracle, we establish the regret lower bound that depends on the accuracy of the prediction. When the predictions align perfectly with the groundtruth, the regret lower bound will reduce to that of the well-studied stochastic BwK problem. This result demonstrates that the quality of the prediction can significantly impact the achievable regret in our model, indicating the importance of accurate demand forecasting.

\textbf{Algorithms and Analysis. } We design a robust online algorithm, OA-UCB, that utilizes the predictions judiciously. OA-UCB is innovative in its incorporation of the prediction and demand volume into the estimation of opportunity costs of the resources, in relation to the predicted demand volumes. We derive a regret upper bound on OA-UCB that depends on the accuracy of the predictions, even though the algorithm does not have knowledge about the accuracy of each prediction. OA-UCB is shown to achieve near optimal regret bounds when the accuracy of the predictions improves across time. This highlights the effectiveness of OA-UCB in leveraging the predictions to make informed online decisions and achieve competitive regret performance.

\textbf{Two Specific Examples. } We propose two specific examples of the time-varying demand: Linearly increasing model and AR(1) model. For each example, we propose a prediction oracle and derive explicit regret upper bound, respectively. Both of these bounds are shown to be near optimal.
 
\textbf{Numerical Validations. } We propose two parts of numerical experiment to validate the advancement of our proposed OA-UCB algorithm. First, we compare the performance our OA-UCB with other existing algorithms in our setting, demonstrating that the OA-UCB can outperform others with the help of online advice.
Moreover, we investigate the impact of different prediction oracles by comparing the performance of OA-UCB with various prediction methods. This analysis shows the benefit of receiving accurate predictions in our model.
Second, we adapt our model to a network revenue management problem and develop a variant of our algorithm, dubbed as OA-UCB-DP. We perform numerical experiments to evaluate the performance of OA-UCB-DP in comparison to other dynamic pricing algorithms in the literature. In this experiment, our algorithm can still outperform others.


\subsection{Literature Review}

Our work relates to the following areas of literature:

\textbf{Bandits with Knapsacks (BwK): } The Bandits with Knapsacks (BwK) problem has been extensively studied. \cite{badanidiyuru2013bandits} first introduced the stochastic BwK problem, which bears applications in dynamic pricing (\cite{besbes2009dynamic,besbes2012blind}) and ad allocation (\cite{mehta2007adwords}). The BwK problem is generalized by  \cite{agrawal2014bandits} to incorporate convex constraints and concave reweards.
Several variants are studied, such as the settings of contextual bandits (\cite{agrawal2016efficient,badanidiyuru2014resourceful}), combinatorial semi-bandits (\cite{sankararaman2018combinatorial}). 

Non-stationary BwK problems, where the outcome distribution of each arm is changing over time, are studied recently. \cite{immorlica2019adversarial} achieves a $O(\log T)$ competitive ratio against the best fixed distribution benchmark in an adversarial setting. \cite{rangi2018unifying} consider both stochastic and adversarial BwK problems in the single resource case. \cite{liu2022non} design a sliding window learning algorithm with sub-linear-in-$T$ regret, assuming the amount of non-stationarity is upper bounded and known. A sub-linear-in-$T$ regret on non-stationary BwK is only possible in restrictive settings. For example, as shown in  \cite{immorlica2019adversarial,liu2022non} and our forthcoming Lemma \ref{lem-lowerbound-oblivious}, for any online algorithm, there exists a non-stationary BwK instance where the outcome distribution only changes once during the horizon, for which the algorithm incurs a linear-in-$T$ regret. 

\textbf{Non-stationary Bandits: } Non-stationary stochastic bandits with no resource constraints are studied in \citep{besbes2014stochastic,cheung2019learning,zhu2020demands}, who provide sub-linear-in-$T$ regret bounds in less restrictive non-stationary settings than \cite{liu2022non}, while the amount of non-stationariety, quantified as the variational budget or the number of change points, has to be sub-linear in $T$. Our work goes in an orthogonal direction. Instead of studying settings with limited non-stationariety, we seek an improved regret bound when the decision maker is endowed with information additional (in the form of prediction oracle) to the online observations.

\textbf{Online Resource Allocation with horizon uncertainty: } 
Our work is also related to a recent stream of work on resource allocation with horizon uncertainty. \cite{BaiEJRTW22,AouadMa22} consider a stochastic resource allcation setting under full model certainty. In their model, the total demand volume is a random variable, whose probability distribution is known to the decision maker, but the \textit{realization} of the total demand volume is not known. \cite{BalseiroKR22} consider an online resource allocation setting with model uncertainty on the horizon, which is closely related to our model uncertainty setting on the total demand volume $Q$. A cruciall difference between the uncertainty settings in \cite{BalseiroKR22} and ours is that, the former focuses on the case when the DM is provided with \emph{static} advices, while our work complementarily consider the case of \emph{dynamic} advices. More precisely, \cite{BalseiroKR22} consider a model where $q_t\in \{0, 1\}$. They first consider a model when the DM knows $Q\in [Q_{lower}, Q_{upper}]$ but not the actual value of $Q$ at the beginning, and then consider a case when the DM is additionally endowed with a static prediction $\hat{Q}$ at the beginning. In both cases, the performance guarantees are quantified as \emph{competitive ratios} that depends on the static advices. By contrast, our study quantifies the benefit of receiving dyanmically updated advices $\hat{Q}_t$, and pinpoint conditions on $\{\hat{Q}_t\}^T_{t=1}$
that leads to a sublinear-in-$T$ regret. 

\textbf{Online algorithm with machine learning advice: } Our work is related to an active stream of research works on online algorithm design with machine learned advice (\cite{Workshop19,MitzenmancherV22}). While traditional online algorithm research focuses on worst case performance guarantee in full model uncertainty setting, this stream of works focuses on enhancing the performance guarantee when the decision maker (DM) is provided with a machine learned advice at the start of the online dynamics. A variety of results are derived in different settings, such as online caching (\cite{lykouris2021competitive}), rent-or-buy (\cite{PurohitSK18}), scheduling (\cite{Mitzenmacher19,LattanziLMV20}), online set cover problems (\cite{BamasAS20,AlmanzaCLPR21}), online matching (\cite{AntoniadisGKK20}). Our research seeks to take a further step, by investigating the case when the DM receives pregressively updated predictions across the horizon, instead of being given a fixed prediction at the beginning. 

\textbf{Online optimization with predictions: } Lastly, our prediction model is also related to a line of works online optimization with predictions, which concerns improving the performance guarantee with the help of predictions. These predictions are provided to the DM at the beginning of each round sequentially. A variety of full feedback settings are studied in \cite{ rakhlin2013online,rakhlin2013optimization,steinhardt2014adaptivity,jadbabaie2015online}, and the contextual bandit setting is studied in \cite{wei2020taking}. We remark that the abovementioned works do not involve resource constraints, and they are fundamental different from ours, as shown in the forthcoming Lemma \ref{lem-lowerbound-oblivious}.

\subsection{Notation}

For a positive integer $d$, we denote $[d] = \{1, \ldots, d\}$; We denote $\Delta_d = \{\boldsymbol{x} \in [0,1]^d: \sum_{i=1}^d x_i = 1\}$ as the probabilistic simplex with dimension $d$. We adopt the $O(\cdot), o(\cdot), \Omega(\cdot)$ notation in \cite{CormenLRS09}.

\section{Model}

Now we describe our model. Our model instance can be represented by the tuple
\begin{equation*}
    ({\cal A}, d, B, T, \{q_t\}^T_{t=1}, \{P_{a,t}\}_{a\in \mathcal{A}, t \in [T]}).
\end{equation*}
We denote ${\cal A}$ as the set of $K$ actions. There are $d$ types of resources, and the DM is endowed with $B_i  \ge 0 $ units of resource $i$ for each $i\in [d]$. The planning horizon consists of $T$ discrete time steps. Following the convention in \citep{badanidiyuru2013bandits}, we assume for all $i\in [d]$ that $B_i = B = b T$, where $b$ is the normalized budget. At time $t$, there are $q_t$ units of demands arriving at the DM's platform. For example, $q_t$ can be the number of customers visiting an online shop at time step $t$, and a time step can be a fifteen minute interval. We assume $q_t\in [\underline{q},\overline{q}]$, where $0 < \underline{q} < \overline{q}$, and is unknown to the DM. The sequence $\{q_t\}^T_{t=1}$ is an arbitrary element of $[\underline{q},\overline{q}]^T$ fixed by the nature before the online dynamics. The arbitrariness represents the exogenous nature of the demands.

When the DM chooses action $a \in \mathcal{A}$ at time $t$, s/he receives a vector $(R_t;C_{t,1},\ldots,C_{t,d}) \in [0,1]^{d+1}$ of random outcomes, sampled independently from latent distribution $P_{a,t}$. The quantity $R_t$ is the reward earned per demand unit, and $C_{t,i}$ is the amount of type $i$ resource consumed per demand unit. The latent distribution $P_{a,t}$ for arm $a$ is varies over time, sharing a common support of $[0,1]^{d+1}$ and mean. We denote $(r(a);c(a,1),\ldots,c(a,d)) = \mathbb{E}[P_{a,t}]$, and $\boldsymbol{r} = (r(a))_{a\in \mathcal{A}}$, $\boldsymbol{c} = (c(a, i))_{a\in \mathcal{A}, i\in [d]}$. To ensure feasiblity, we assume there is a \textit{null action} $a_0 \in {\cal A}$, which yields no reward and no resource consumption whenever it is pulled. Equivalently, we allow the DM to do nothing (not take any action) in a time step. This is a common assumption in BwK literature. In the context of dynamic pricing, for example, the null action corresponds to a "no-sale" price (like pricing a hotel room stay at USD 999,999) to indicate a product's unavailability. For technical convenience, we also introduce an additional \textit{null resource} such that each action does not yield any comsumption on this resource. We dub it as the $(d+1)^{\text{th}}$ resource. Clearly, it satisfies $C_{t,d+1}=0$ with certainty and $c(a,d+1)=0$ for all $a \in \mathcal{A}$.


At each time $t$, the DM is provided with a \emph{prediction oracle} ${\cal F}_t$. The oracle is a function ${\cal F}_t: [\underline{q}, \overline{q}]^{t-1} \rightarrow [\underline{q}T, \overline{q}T]$ that provides an prediction $\hat{Q}_t = {\cal F}_t(q_1, \ldots, q_{t-1})$ on $Q = \sum_{t=1}^{T} q_t$ with the past observations $\{q_s\}^{t-1}_{s=1}$. At time $t$, the DM knows ${\cal A}, B, T, d, \{q_s\}^{t-1}_{s=1}$, and has the access to ${\cal F}_t$ in a sequential manner. In contrast, the DM does not know $\{P_{a,t}\}_{a\in {\cal A},t\in [T]}, \{q_s\}^T_{s=t}$, and lower and upper bound $\underline{q}$, $\overline{q}$. 

\textbf{Dynamics.} At each time $t$, three events happen. 
Firstly, the DM receives a prediction $\hat{Q}_t = \mathcal{F}_t(q_1,\ldots ,q_{t-1})$ on $Q$. Secondly, based on $\hat{Q}_t$ and the history observation, the DM chooses action $A_t \in \mathcal{A}$. Thirdly, the DM observes the feedback consisting of (i) demand volume $q_t$, (ii) reward earned $q_t R_t$, (iii) resources consumed $ \{q_t C_{t, i}\}_{i\in [d]}$. Recall that $(R_t, C_{t, 1}, \ldots, C_{t, d})\sim P_{A_t, t}$. Then, the DM proceeds to time $t+1$. If some resource is depleted, \emph{i.e.} $\exists j \in [d]$ such that $\sum_{s=1}^t q_s C_{s,j} > B_j$, then the null action $a_0$ is to be choosed in the remaining horizon $t+1, \ldots, T$. We denote the stopping time here as $\tau$. The DM aims to maximize the total reward $  \sum_{t=1}^{\tau - 1} q_t R_t$, subject to the resource constraints and model uncertainty.

\textbf{On $q_t$. }Our feedback model on $q_t$ is more informative than \cite{LykourisVR20}, where none of $q_1, \ldots, q_T$ is observed during the horizon. In contrast, ours is less informative than \cite{traca2021regulating}, where $q_1, \ldots, q_T$ are all observed at time 1. Our assumption of observing $q_t$ at the end of time $t$ is mild in online retail settings. For example, the number of visitors to a website within a time interval can be extracted from the electronic records when the interval ends.

While the nature sets $\{q_t\}^T_{t=1}$ to be fixed but arbitrary, the sequence is set without knowing the DM's online algorithm and prediciton oracle ${\cal F} = \{{\cal F}_t\}^T_{t=1}$. Our model is milder than the \emph{oblivious adversary} model, where the nature sets a latent quantity (in this case $\{q_t\}^T_{t=1}$) \emph{with} the knowledge of the DM's algorithm before the online dynamics. Our milder model allows the possibility of $\hat{Q}_t = \mathcal{F}_t(q_1,\ldots ,q_{t-1})$ being a sufficiently accurate (to be quantified in our main results) estimate to $Q$ for each $t$, for example when $\{q_t\}^T_{t=1}$ is govenred by a latent time series model. In contrary, an oblivious adversary can set $Q$ to be far away from the predictions $\hat{Q}_1, \ldots, \hat{Q}_T$ in response to the information on ${\cal F}$. 

\textbf{On $\mathcal{F} = \{\mathcal{F}_t\}$.} Our prediction oracle is a general Black-Box model. We do not impose any structural or parameteric assumption on ${\cal F}$ or $\{q_t\}^T_{t=1}$. It is instructive to understand ${\cal F}$ as a side information provided to the DM by an external source, like some consulting service. In the dynamic pricing example, $\hat{Q}_t$ could be a forecast on the customer base population provided by an external marketing research firm. A prime candidate of ${\cal F}$ is the cornucopia of time series prediction models proposed in decades of research works on time series \citep{ShumwayS2017,HyndmanA2021,LimZ21}. These prediction models allow \emph{one step prediction}, where for any $t$, the predictor ${\cal P}$ inputs $\{q_s\}^{t-1}_{s=1}$ and outputs an estimate $\hat{q}_t$ on $q_t$. The prediction $\hat{Q}_t$ can be constructed by (1) iteratively applying ${\cal P}$ on $\{q_s\}^{t-1}_{s=1} \cup \{\hat{q}\}^{t+\rho-1}_{t}$ to output $\hat{q}_{t+\rho}$, for $\rho\in \{0, \ldots, T-t\}$, (2) summing over $q_1, \ldots, q_{t-1}, \hat{q}_t, \ldots, \hat{q}_T$ and return $\hat{Q}_t$. We provide two specific examples in Section 5.






\textbf{Regret.} To measure the performance of an online algorithm, we define the regret of it as
\begin{equation}
    \begin{aligned}
    \text{Regret}_T = \text{OPT} -  \sum_{t=1}^{\tau - 1} q_t R_t,
    \end{aligned}
    \label{eq-regret}
\end{equation}
where $\text{OPT}$ denotes the expected cumulative reward of the offline optimal dynamic policy given all latent distribution $\{P_{a,t}\}_{a \in \mathcal{A}, t \in [T]}$ and demand sequence $\{q_t\}_{t=1}^T$. More specifically, our model can be in fact phrased as a Markov decision process (MDP) problem with $T$ time steps, where a state is represented by the vector of remaining inventory on each resource, and a state transition corresponds to consumption of resources. The offline optimal dynamic policy is defined as the optimal policy (the policy that accrues the highest expected total reward) in the MDP, and in particular a DM who has full model certainty on all parameters would be able to compute such a policy. The scalar quantity $\text{OPT}>0$ is the expected total reward under the offline optimal dynamic policy. Thus, we are comparing our online algorithm, which suffers from model uncertainty, with the optimal algorithm with full information.

For analytical tractabililty in our regret upper bound, we consider an alternative benchmark 
\begin{equation}
    \begin{aligned}
    \text{OPT}_{\text{LP}} = \max_{\boldsymbol{u} \in \Delta_{K}} \quad & \left( \sum_{t=1}^T q_t \right) \boldsymbol{r}^\top \boldsymbol{u} \qquad \\
    \text{s.t.} \quad & \left( \sum_{t=1}^T q_t \right) \boldsymbol{c}^{\top} \boldsymbol{u} \le B \boldsymbol{1}_d,
    \end{aligned}
    \label{eq-linear-relaxation2}
\end{equation}
The benchmark (\ref{eq-linear-relaxation2}) is justified by the following Lemma, which is proved in Appendix A: 
\begin{lemma}
$
    \text{OPT}_{\text{LP}} \ge \text{OPT}.
$
\label{lem-lpupperbound}
\end{lemma}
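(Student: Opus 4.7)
The plan is to take the offline optimal dynamic policy $\pi^*$ and extract from it a deterministic action-frequency vector $\boldsymbol{u}\in\Delta_K$ that is feasible for \eqref{eq-linear-relaxation2} and attains LP-objective value exactly $\text{OPT}$. This is the standard LP-relaxation trick for BwK-type problems, adapted to the scaling by $q_t$.

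First, let $\tau^*$ denote the (random) stopping time of $\pi^*$, and let $A^*_t$ denote the action it plays at time $t$. For every $a\in \mathcal{A}$ and $t\in[T]$, define
\[
p_{t,a} \;=\; \Pr\!\left[A^*_t = a,\ t\le \tau^*-1\right],
\]
so that $\sum_{a\in\mathcal{A}} p_{t,a}\le 1$. Using the tower property and the fact that $\mathbb{E}[R_t\mid A^*_t=a]=r(a)$ and $\mathbb{E}[C_{t,i}\mid A^*_t=a]=c(a,i)$ (with independence of the outcome draw from the event $\{t\le \tau^*-1\}$, which is measurable with respect to the history strictly before time $t$), I would obtain
\[
\text{OPT}=\mathbb{E}\!\left[\sum_{t=1}^{\tau^*-1} q_t R_t\right]=\sum_{t=1}^T q_t \sum_{a\in\mathcal{A}} r(a)\, p_{t,a},\qquad
\mathbb{E}\!\left[\sum_{t=1}^{\tau^*-1} q_t C_{t,i}\right]=\sum_{t=1}^T q_t \sum_{a\in\mathcal{A}} c(a,i)\, p_{t,a}.
\]

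Second, I would construct the LP-candidate by averaging the per-step, demand-weighted probabilities across the horizon. Set, for each $a\neq a_0$,
\[
u_a \;=\; \frac{\sum_{t=1}^T q_t\, p_{t,a}}{Q},\qquad u_{a_0} \;=\; 1-\sum_{a\neq a_0} u_a,
\]
which is nonnegative because $\sum_{a\neq a_0} u_a\le \sum_a \sum_t q_t p_{t,a}/Q \le 1$. Hence $\boldsymbol{u}\in\Delta_K$.

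Third, I would verify feasibility and the objective value. Since the depletion event forces null play, we have $\sum_{t=1}^{\tau^*-1} q_t C_{t,i}\le B_i=B$ almost surely for every $i\in[d]$; taking expectations and using the identity above gives $\sum_t q_t \sum_a c(a,i) p_{t,a}\le B$, and because $c(a_0,i)=0$ the $u_{a_0}$ term contributes nothing, so $Q\,\boldsymbol{c}^\top \boldsymbol{u}\le B\mathbf{1}_d$. For the objective, $r(a_0)=0$ similarly kills the null-action contribution, so $Q\,\boldsymbol{r}^\top \boldsymbol{u}=\sum_t q_t\sum_a r(a) p_{t,a}=\text{OPT}$. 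Therefore $\boldsymbol{u}$ is LP-feasible with value $\text{OPT}$, proving $\text{OPT}_{\text{LP}}\ge \text{OPT}$.

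The only subtlety I anticipate is the measurability/independence step needed to push the expectations inside: one must argue that the stopping indicator $\mathbf{1}[t\le \tau^*-1]$ depends only on the filtration strictly before time $t$ (so it is independent of $(R_t,C_{t,1},\dots,C_{t,d})$ conditional on $A^*_t$), and that the pointwise resource inequality holds for every realization up to $\tau^*-1$ by definition of the stopping time. Once these two observations are in place, the rest is bookkeeping.
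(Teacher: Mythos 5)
Your overall route is the same as the paper's: both proofs extract expected action frequencies from the offline dynamic policy and collapse them into a single $\boldsymbol{u}\in\Delta_K$ via a $q_t$-weighted average, using the null action's zero reward and zero consumption to absorb the leftover probability mass. The paper merely inserts an intermediate time-indexed program $\text{OPT}'_{\text{LP}}$ (one simplex vector per round) before averaging; folding the two steps into one, as you do, is a cosmetic difference.

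The one subtlety you flag, however, is exactly the one you get wrong. The event $\{t\le\tau^*-1\}=\{\tau^*>t\}$ is \emph{not} measurable with respect to the history strictly before time $t$: by definition $\tau^*$ is the first round at which $\sum_{s=1}^{t}q_sC_{s,j}>B_j$ for some $j$, so whether $\tau^*>t$ depends on the consumption vector realized \emph{at} round $t$. Hence $(R_t,C_{t,1},\dots,C_{t,d})$ is not independent of $\mathbf{1}[t\le\tau^*-1]$ given $A^*_t$; indeed $\mathbf{1}[\tau^*>t]$ is (conditionally) a decreasing function of $C_{t,\cdot}$, so it is negatively associated with $C_{t,i}$ and possibly with $R_t$, and neither of your two displayed identities follows from the tower property as written. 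The standard repair is to condition on the pre-$t$-measurable event $\{\tau^*\ge t\}$ instead: with $p_{t,a}=\Pr[A^*_t=a,\ \tau^*\ge t]$ the tower property legitimately gives $\sum_t q_t\sum_a r(a)p_{t,a}=\mathbb{E}\bigl[\sum_{t=1}^{\tau^*}q_tR_t\bigr]\ge\text{OPT}$, which is the only direction needed on the reward side and is in effect what the paper's indicator $\mathbf{1}(\text{action $a$ is chosen at $t$})$ computes, since forced null play begins only at round $\tau^*+1$. On the consumption side this choice yields $Q\,\boldsymbol{c}^\top\boldsymbol{u}=\mathbb{E}\bigl[\sum_{t=1}^{\tau^*}q_tC_{t,i}\bigr]$, which includes the overshoot round and can exceed $B$ by up to $\overline{q}$; this residual slack is also left untreated in the paper's own feasibility claim, so if you want exact feasibility for the budget-$B$ LP you still owe an argument for why the single overshoot round is harmless.
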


Based on this lemma, any regret upper bound derived using $\text{OPT}_{\text{LP}}$ as a benchmark will also serve as a regret upper bound for the original benchmark $\text{OPT}$.

\section{Impossibility Results: Regret Lower Bounds}\label{sec:lower_bound}
In this section, we provide impossibility results of our model in the form of regret lower bounds. 
Firstly, we show that a linear-in-$T$ regret is inevitable in the absence of the prediction oracle ${\cal F}$.
\begin{lemma}
Consider a fixed but arbitrary online algorithm that knows
\begin{equation*}
    \{P_{a,t}\}_{a\in {\cal A},t \in [T]},\quad \{(q_s, q_s R_s, q_sC_{s, 1},\ldots q_s C_{s, d})\}^{t-1}_{s=1},\quad q_t,
\end{equation*}
but does not have any access to a prediction oracle when the action $A_t$ is to be chosen at each time $t$. There exists an instance such that the online algorithm suffers $\text{Regret}_T = \Omega(T).$
\label{lem-lowerbound-oblivious}
\end{lemma}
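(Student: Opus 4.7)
My plan is to exhibit two problem instances that are observationally indistinguishable to the algorithm during the first half of the horizon but whose offline optima differ enough that any first-half commitment must incur $\Omega(T)$ regret on at least one of them. The argument will implicitly invoke Yao's principle through the elementary bound $\max(R_1,R_2)\ge (R_1+R_2)/2$.

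Concretely, I will set $d=1$, $B=T$, and use three actions: a null action $a_0$ together with $a_1$ and $a_2$ whose per-unit-demand outcomes are deterministic (point-mass distributions) with $(r_1,c_1)=(1,1)$ and $(r_2,c_2)=(1/2,1/4)$. Their reward-to-consumption ratios are $1$ and $2$, so $a_1$ is preferable under slack and $a_2$ under scarcity. In both instances set $q_t=1$ for $t\le T/2$. In Instance~1, $q_t=1$ for all $t$, giving $Q_1=T$ and $\mathrm{OPT}_1=T$ (pull $a_1$ throughout). In Instance~2, $q_t=8$ for $t>T/2$, giving $Q_2=9T/2$ and $\mathrm{OPT}_2=2T$ (spend the entire budget on $a_2$). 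One checks that these offline optima coincide with their LP relaxations from (\ref{eq-linear-relaxation2}).

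Since outcomes are deterministic and the first-half demands coincide, any (possibly randomized) algorithm induces the same distribution over first-half action sequences in both instances. Let $\bar u_a$ denote the expected empirical frequency with which action $a$ is chosen during the first half, and let $C_{\mathrm{1st}}$ denote the (random) first-half consumption; then $\mathbb{E}[C_{\mathrm{1st}}]=(T/2)(\bar u_1+\bar u_2/4)$ and the expected first-half reward equals $(T/2)(\bar u_1+\bar u_2/2)$. Let $f_s(B')$ denote the value of the second-half LP (analogous to (\ref{eq-linear-relaxation2})) in instance $s$ with remaining budget $B'$; by Lemma~\ref{lem-lpupperbound}, this upper-bounds the expected reward of any online continuation starting from $B'$ units of budget. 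Since $f_s$ is concave in $B'$ (a standard LP sensitivity fact), Jensen's inequality gives $\mathbb{E}[f_s(B-C_{\mathrm{1st}})]\le f_s(B-\mathbb{E}[C_{\mathrm{1st}}])$. A direct LP computation yields $f_1(B')=T/2$ and $f_2(B')=2B'$ on the relevant range $B'\in[T/2,T]$, whence
\[
R_1\;\ge\;\tfrac{T}{4}\bigl(2-2\bar u_1-\bar u_2\bigr), \qquad R_2\;\ge\;\tfrac{T}{2}\,\bar u_1.
\]
Summing cancels the $\bar u_1$ terms and leaves $R_1+R_2\ge (T/4)(2-\bar u_2)\ge T/4$ uniformly over $(\bar u_1,\bar u_2)$, so $\max(R_1,R_2)\ge T/8=\Omega(T)$, and one of the two instances certifies the claim.

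The main obstacle I anticipate is the upper bound on the algorithm's expected second-half reward: the algorithm adapts its second-half play to the realized random first-half consumption $C_{\mathrm{1st}}$, and this adaptivity could in principle give it more power. Concavity of $f_s$ in the budget, combined with Jensen's inequality, is the mechanism that handles this cleanly. A minor technicality is ensuring that the algorithm cannot exhaust its budget during the first half (which would otherwise trigger the stopping time $\tau\le T/2$ and complicate $f_s$); since per-step consumption is at most $1$ and the first half has only $T/2$ steps while $B=T$, we have $C_{\mathrm{1st}}\le T/2$ deterministically, so this worry is moot.
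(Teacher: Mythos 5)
Your proposal is correct and follows essentially the same route as the paper's proof: two single-resource instances with deterministic outcomes that coincide on the first half of the horizon but diverge in second-half demand volume, so that the algorithm's (necessarily identical) first-half action frequencies must favor the slack-optimal action in one instance or the scarcity-optimal action in the other, yielding $\Omega(T)$ regret on at least one. The only cosmetic differences are your use of the averaging bound $\max(R_1,R_2)\ge (R_1+R_2)/2$ together with an LP-plus-concavity bound on the second-half continuation, where the paper instead does an explicit case split on the first-half pull counts and accounts for rewards directly.
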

Lemma \ref{lem-lowerbound-oblivious} is proved in Appendix \ref{sec:pf-lem-lowerbound-oblivious}. Lemma \ref{lem-lowerbound-oblivious} shows that even when all model information on time steps $1, \ldots, t$ are revealed when $A_t$ is to be chosen, the DM still suffers $\text{Regret}_T = \Omega(T).$ Thus, Our model is fundamentally different from non-stationary bandits without resource constraints such as \cite{besbes2015non}, and online optimization with predictions problems, such as \cite{rakhlin2013online}. In these settings, we can achieve $\text{Regret}_T = 0$ if all model information on time steps $1, \ldots, t$ are available at the time point of choosing $A_t$ or the action at time $t$. Indeed, given all model information at time $t$, the DM achieves the optimum by choosing an arm or an action that maximizes the reward function of time $t$ for every $t\in [T]$. 

In view of Lemma \ref{lem-lowerbound-oblivious}, we seek to understand if the DM can avoid $\text{Regret}_T = \Omega(T)$ when providing with an accurate prediction on $Q$. Certainly, if the DM only recieves an uninformative prediction, such as a worst case prediction $\hat{Q}_t = 0$, at each time step, $\text{Regret}_T = \Omega(T)$ still cannot be avoided. In contrast, if the DM received an \emph{accurate} prediction at a time step, we demonstrate our first step for deriving a better regret bound, in the form of a more benign regret lower bound compared to Lemma \ref{lem-lowerbound-oblivious}. We formalize the notion of being \emph{accurate} by the following two concepts.

For $T_0 \in [T-1]$ and $\epsilon_{T_0+1} \ge 0$, an instance $\{q_t\}_{t=1}^T$ is said to be \emph{$(T_0+1, \epsilon_{T_0+1})$-well estimated by oracle ${\cal F}$}, if the prediction $\hat{Q}_{T_0+1} = {\cal F}_{T_0+1}(q_1, \ldots, q_{T_0})$ returned by the oracle at time $T_0+1$ satisfies $|Q - \hat{Q}_{T_0+1}| \in \left [ \epsilon_{T_0+1}, 2 \epsilon_{T_0+1} \right ]$. This notion measures the power of prediction oracle $\mathcal{F}$. We say that $\epsilon_{T_0+1}$ is \emph{$(T_0+1, \{q_t\}_{t=1}^{T_0})$-well response} by oracle $\mathcal{F}$ if $\epsilon_{T_0+1}$ satisfies $\epsilon_{T_0+1} \leq \min\{\hat{Q}_{T_0+1} - \sum^{T_0}_{s=1} q_s - \underline{q}(T-T_0), \overline{q} (T-T_0)- (\hat{Q}_{T_0+1} - \sum^{T_0}_{t=1}q_t), \hat{Q}_{T_0+1}/2\}$, where $\hat{Q}_{T_0+1} = {\cal F}_{T_0+1}(q_1, \ldots, q_{T_0})$. This concept imposes requirements on the quality of prediction by introducing a non-trivial upper bound on $\epsilon_{T_0+1}$ for the "well-estimate" notion. This can help us eliminate trivial and uninformative predictions such as $\hat{Q}_t = 0$ or $\bar{q}T$.
\begin{theorem}
Consider our model setting, and consider a fixed but arbitrary online algorithm and prediciton oracle ${\cal F} = \{{\cal F}_t\}^T_{t=1}$. For any $T_0\in [T-1]$ and any $\epsilon_{T_0+1} > 0$ that is $(T_0+1, \{q_t\}_{t=1}^{T_0})$-well response, there exists a $(T_0+1, \epsilon_{T_0+1})$-well estimated instance $I = \{q_t\}^{T_0}_{t=1} \cup \{q_t\}^{T}_{t=T_0 + 1}$ such that
\begin{equation}\label{eq:regret_lb_overall}
    \text{Regret}_T= \Omega \left( \max\left\{ \frac{1}{Q} \sum^{T_0}_{t=1} q_t \epsilon_{T_0+1}~, ~\Lambda \right\} \right),
\end{equation}
where $Q = \sum^T_{t=1}q_t,$ and %
$$
\Lambda = 
\min \left \{ \text{OPT},\text{OPT} \sqrt{\frac{\overline{q}K}{B}} + \sqrt{ \overline{q} K \text{OPT}} \right \}.
$$
\label{thm-lowerbound-estimation}
\end{theorem}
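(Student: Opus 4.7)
The plan is to prove the two arguments of the $\max$ in \eqref{eq:regret_lb_overall} via separate constructions, each yielding one of the two terms; the stated bound then follows because regret is nonnegative and each construction satisfies the hypotheses of the theorem. Throughout, I would work with the LP benchmark $\text{OPT}_{\text{LP}}$ permitted by Lemma \ref{lem-lpupperbound}.

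For the $\Lambda$ term, I would reduce to the classical stochastic BwK lower bound. Fix $q_t \equiv q^*$ constant (so that $Q$ is already known perfectly at time $1$ and any sensible oracle satisfies the $(T_0+1,\epsilon_{T_0+1})$-well-estimated property with room to spare), and choose the latent distributions $\{P_{a,t}\}$ to be time-invariant. In this degenerate case the model collapses to a scaled stochastic BwK with horizon $T$, $K$ arms, and per-resource budget $B$. Applying the lower bound of \cite{badanidiyuru2013bandits} (in the refined $\min\{\text{OPT},\text{OPT}\sqrt{K/B}+\sqrt{K\,\text{OPT}}\}$ form), together with the $\overline{q}$ scaling induced by the per-round demand, yields a regret of $\Omega(\Lambda)$. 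This step is almost turnkey once the reduction is set up.

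For the prediction-dependent term $\frac{1}{Q}\sum_{t=1}^{T_0}q_t\epsilon_{T_0+1}$, I would use a two-point/Le Cam style argument. Fix any $\{q_t\}_{t=1}^{T_0}$; then $\hat{Q}_{T_0+1}=\mathcal{F}_{T_0+1}(q_1,\ldots,q_{T_0})$ is a deterministic function of this prefix and is common to every continuation. Build two continuations $\{q_t^{(+)}\}_{t=T_0+1}^T$ and $\{q_t^{(-)}\}_{t=T_0+1}^T$ lying in $[\underline{q},\overline{q}]^{T-T_0}$ with total demands $Q^{(\pm)}=\hat{Q}_{T_0+1}\pm\epsilon_{T_0+1}$; the well-response hypothesis is exactly what guarantees such continuations exist. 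On top of this I would place a two-arm, one-resource instance with arm $1$ giving high reward and high consumption and arm $2$ giving low reward and low consumption, tuned so that the LP optimum is a mixed strategy whose optimal weights shift by a $\Theta(\epsilon_{T_0+1}/Q)$ amount between $Q^{(+)}$ and $Q^{(-)}$ (this is the usual LP-sensitivity calculation on a single binding budget constraint). Because the algorithm sees identical history and identical oracle output on $[1,T_0]$ across both instances, its (possibly randomized) action distribution on $[1,T_0]$ is identical, hence cannot be simultaneously near-optimal for both $Q^{(+)}$ and $Q^{(-)}$. A standard averaging argument then yields, for at least one of the two instances, regret accrued on $[1,T_0]$ of order
\[
\Omega\!\left(\sum_{t=1}^{T_0}q_t\cdot \frac{\epsilon_{T_0+1}}{Q}\right),
\]
which is the desired bound.

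The main obstacle I anticipate is the third paragraph's LP-sensitivity step: I must design the two-arm instance so that (i) both $Q^{(+)}$ and $Q^{(-)}$ land on the same side of the budget's binding regime so the LP value is differentiable in $Q$, (ii) the gap in per-round reward between the $Q^{(+)}$-optimum and the $Q^{(-)}$-optimum is exactly $\Theta(\epsilon_{T_0+1}/Q)$, and (iii) the null action and the $[T_0+1,T]$ horizon do not allow the algorithm to "repair" the regret cheaply after time $T_0$. Careful choice of the per-unit consumption $c_1,c_2$ relative to $b=B/T$ makes this go through, and the resulting $\Omega(\cdot)$ lower bound combines with the BwK lower bound above to give \eqref{eq:regret_lb_overall}.
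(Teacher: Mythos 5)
Your proposal matches the paper's proof in all essential respects: the $\Lambda$ term is obtained exactly as you describe, by fixing $q_t \equiv \overline{q}$ and importing the stochastic BwK lower-bound instances of \cite{badanidiyuru2013bandits} with the $\overline{q}$ scaling, and the prediction term is obtained by a two-point argument with continuations achieving $Q^{(\pm)} = \hat{Q}_{T_0+1} \pm \epsilon_{T_0+1}$, a two-arm single-resource deterministic instance (high-reward/high-consumption vs.\ low-reward/low-consumption), the observation that the algorithm's behavior on $[1,T_0]$ is identical across the two instances, and an averaging/case analysis. The only cosmetic difference is in how the sensitivity step is realized: the paper tunes $c = \frac{\hat{Q}_{T_0+1}-\epsilon_{T_0+1}}{\hat{Q}_{T_0+1}+\epsilon_{T_0+1}}$ and $B = \hat{Q}_{T_0+1}-\epsilon_{T_0+1}$ so that the two optima are \emph{pure} and antipodal (all arm~1 vs.\ all arm~2) with a per-unit reward gap of $\Theta(\epsilon_{T_0+1}/Q)$, rather than having the optimal mixture weights shift by $\Theta(\epsilon_{T_0+1}/Q)$ as you suggest — both parameterizations yield the same product and hence the same bound.
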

Theorem \ref{thm-lowerbound-estimation} is proved in Appendix \ref{sec:pf-thm-lowerbound-estimation}. 
In (\ref{eq:regret_lb_overall}), the regret lower bound $\Lambda$ is due to the uncertainty on $\{P_{a,t}\}_{a\in \mathcal{A}, t \in [T]}$, and $\Lambda$ is derived directly from \cite{badanidiyuru2013bandits}. The regret lower bound  $\frac{1}{Q} \sum^{T_0}_{t=1} q_t \epsilon_{T_0+1}$ is due to the oracle's prediction error on $\hat{Q}_{T_0+1}$. Theorem \ref{thm-lowerbound-estimation} demonstrates a more benign regret lower bound than $\Omega(T)$, under the condition that the prediction on $Q$ is sufficiently accurate (as formalized as $(T_0+1, \epsilon_{T_0+1})$-well estimated). 

More specifically, let us consider the following \emph{accurate prediction condition} at time $T_0$ by oracle $\mathcal{F}$: $\epsilon_{T_0+1}$ is $(T_0+1, \{q_t\}_{t=1}^{T_0})$-well response by oracle $\mathcal{F}$ and
\begin{equation}\label{eq:alpha}
\frac{\epsilon_{T_0+1}}{Q} = O(T_0^{-\alpha}) \quad \text{for some $\alpha >0$}.
\end{equation}
The condition implies that, for the prediction $\hat{Q}_{T_0+1}$ made using $T_0$ data points $q_1, \ldots, q_{T_0}$, it holds that $|1- (\hat{Q}_{T_0+1}/Q)| = O(T_0^{-\alpha})$. For example, when $\{q_t\}^T_{t=1}$ are i.i.d. generated, the accurate prediction condition holds with $\alpha=1/2$.
\begin{corollary}\label{cor:benign}
    Consider the setting of Theorem \ref{thm-lowerbound-estimation}. Suppose the accurate prediction condition (\ref{eq:alpha}) holds at $T_0$, then the refined regret lower bound $\text{Regret}_T = \Omega(\max\{\underline{q}T^{1-\alpha}_0, \Lambda\})$ holds. 
\end{corollary}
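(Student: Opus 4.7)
The plan is to read the bound off Theorem \ref{thm-lowerbound-estimation} and simplify its demand-dependent component using the two hypotheses we have in hand: $q_t\in[\underline{q},\overline{q}]$ and the accurate prediction condition (\ref{eq:alpha}). The corollary is essentially a bookkeeping consequence of the theorem, so I would present it as a short deduction rather than a new argument.

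First, I would note that the accurate prediction condition builds in well-responsiveness of $\epsilon_{T_0+1}$, so Theorem \ref{thm-lowerbound-estimation} applies and immediately yields an instance on which
\[
\text{Regret}_T = \Omega\left( \max\left\{ \frac{1}{Q}\sum_{t=1}^{T_0} q_t\,\epsilon_{T_0+1},\ \Lambda \right\} \right).
\]
Then I would lower bound $\sum_{t=1}^{T_0} q_t \ge \underline{q}\,T_0$ using $q_t \ge \underline{q}$, giving
\[
\frac{1}{Q}\sum_{t=1}^{T_0} q_t\,\epsilon_{T_0+1} \;\ge\; \underline{q}\,T_0\cdot\frac{\epsilon_{T_0+1}}{Q}.
\]
Finally, substituting the order relation $\epsilon_{T_0+1}/Q$ of order $T_0^{-\alpha}$ supplied by (\ref{eq:alpha}) turns the right-hand side into $\Omega(\underline{q}\,T_0^{1-\alpha})$, and merging with the $\Omega(\Lambda)$ term via the outer $\max$ yields the claimed $\text{Regret}_T = \Omega(\max\{\underline{q}\,T_0^{1-\alpha},\Lambda\})$.

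I do not expect a substantive obstacle, since every step reduces to either invoking Theorem \ref{thm-lowerbound-estimation} or a one-line inequality. The only delicate interpretive point is the rate condition (\ref{eq:alpha}): to conclude a positive lower bound, the statement $\epsilon_{T_0+1}/Q = O(T_0^{-\alpha})$ has to be read as fixing the order of $\epsilon_{T_0+1}/Q$ (so that one simultaneously has a matching lower bound of the same order), rather than as a pure upper bound; the accompanying remark in the text that this implies $|1-\hat{Q}_{T_0+1}/Q| = O(T_0^{-\alpha})$ makes this intended reading consistent. Under this reading the deduction is direct and no further work is required.
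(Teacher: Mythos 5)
Your deduction is correct and matches the paper's intent: the paper gives no separate proof of Corollary \ref{cor:benign}, treating it exactly as you do -- apply Theorem \ref{thm-lowerbound-estimation}, bound $\sum_{t=1}^{T_0} q_t \ge \underline{q}\,T_0$, and substitute the rate from (\ref{eq:alpha}). Your interpretive remark is also on point: the condition must be read as fixing $\epsilon_{T_0+1}/Q$ at order $T_0^{-\alpha}$ (equivalently, the adversary selects the largest $\epsilon_{T_0+1}$ permitted by the accuracy constraint), since a pure upper bound on the prediction error would not yield a positive lower bound on the first term.
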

Altogether, under the accurate prediction condition, the corollary presents a strictly smaller regret lower bound than that in Lemma \ref{lem-lowerbound-oblivious}, which has no prediction oracle available. In complement, we design and analyze an online algorithm in the next section that leverages the benefits of predictions, and in particular nearly matches the regret lower bound in Corollary \ref{cor:benign} under the accurate prediction condition. Thus, a $o(T)$-regret is possible in a non-stationary environment given accurate predictions as prescribed above, even though the amount of non-stationarity in the underlying model is not bounded in general.

\section{Algorithm and Analysis}
We propose the Online-Advice-UCB (OA-UCB) algorithm, displayed in Algorithm \ref{alg-OAU}, for solving our model. 
The algorithm design involves constructing confidence bounds to address the model uncertainty on $\boldsymbol{r}, \boldsymbol{c}$, as discussed in Section \ref{sec:conf_bd}. 
In Section \ref{sec:main_alg}, we provide detail description on OA-UCB, which utilizes Online Convex Optimization (OCO) tools to 
balance the trade-off among rewards and resources. Crucially, at each time $t$, we incorporate the prediction $\hat{Q}_t$ to scale the opportunity costs of the resources. Furthermore, both $q_t$ and $\hat{Q}_t$ are judiciously integrated into the OCO tools to factor the demand volumes into the consideration of the abovemention trade-off. In Section \ref{sec:main_thm}, we provide a regret upper bound to OA-UCB, and demonstrate its near-optimality when the accurate prediction condition (\ref{eq:alpha}) holds and when capacity is large. In Section \ref{sec:sketch} we provide a sketch proof of the regret upper bound, where the complete proof is in Appendix \ref{sec:pf-thm-upperbound-1}.



\subsection{Confidence Bounds}\label{sec:conf_bd}
We consider the following confidence radius function:
\begin{equation}
    \text{rad}(v,N,\delta) = \sqrt{\frac{2 v \log \left(\frac{1}{\delta} \right) }{N}} + \frac{4 \log\left(\frac{1}{\delta} \right)}{N}.
    \label{eq-def-conradius}
\end{equation}
The function (\ref{eq-def-conradius}) satisfies the following property:
\begin{lemma}[\cite{babaioff2015dynamic,agrawal2014bandits}]\label{lem:conf}
Let random variables $\{V_i\}^N_{i=1}$ be independently distributed with support in $[0,1]$. Denote $\hat{V} = \frac{1}{N}\sum_{i=1}^N V_i$, then with probability $\ge 1 - 3 \delta$, we have
\begin{equation*}
    \left |\hat{V} - \mathbb{E} [\hat{V} ] \right | \le \text{rad} (\hat{V},N,\delta ) \le 3 \text{rad} (\mathbb{E} [\hat{V} ],N,\delta ).
\end{equation*}
\label{lem-radius}
\end{lemma}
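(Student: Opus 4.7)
The plan is to derive both inequalities from a Bernstein-type concentration inequality on $\hat{V}$ together with a self-bounding argument that converts bounds phrased in terms of the population mean $\mu := \mathbb{E}[\hat{V}]$ into bounds phrased in terms of the empirical mean $\hat{V}$. The failure probability $3\delta$ is to be absorbed via a union bound across the two-sided Bernstein event and the auxiliary comparison event used in the conversion step.

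First, I would invoke Bernstein's inequality. Since each $V_i$ is supported in $[0,1]$, we have $\operatorname{Var}(V_i) \le \mathbb{E}[V_i^2] \le \mathbb{E}[V_i]$, so $\sum_{i=1}^N \operatorname{Var}(V_i) \le N\mu$. The classical Bernstein inequality for sums of independent bounded random variables then yields, with probability at least $1-2\delta$, a bound of the form $|\hat{V}-\mu| \le \sqrt{2\mu \log(1/\delta)/N} + \tfrac{2}{3}\log(1/\delta)/N$. Because the definition of $\text{rad}$ uses the larger constant $4$ in its linear term, this immediately gives the population-level statement $|\hat{V}-\mu|\le \text{rad}(\mu,N,\delta)$, with plenty of slack.

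Next, to upgrade to the empirical-radius bound $|\hat{V}-\mu|\le \text{rad}(\hat{V},N,\delta)$, I would substitute the trivial estimate $\mu \le \hat{V}+|\hat{V}-\mu|$ inside the square-root term of the Bernstein bound, apply $\sqrt{a+b}\le \sqrt{a}+\sqrt{b}$, and solve the resulting implicit inequality $e \le \sqrt{\alpha + \beta e} + \gamma$ for $e:=|\hat{V}-\mu|$ via the elementary fact that $t^2 \le p+qt$ implies $t \le q+\sqrt{p}$. The bookkeeping is arranged so that the leading term is $\sqrt{2\hat{V}\log(1/\delta)/N}$ and the residual low-order term fits under the $4\log(1/\delta)/N$ envelope. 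For the second inequality $\text{rad}(\hat{V},N,\delta)\le 3\,\text{rad}(\mu,N,\delta)$, I would use the reverse one-sided consequence $\hat{V}\le \mu+\text{rad}(\mu,N,\delta)$ of the same Bernstein event, plug it into the square-root term of $\text{rad}(\hat{V},N,\delta)$, and again apply $\sqrt{a+b}\le\sqrt{a}+\sqrt{b}$. The cross term $\sqrt{2\log(1/\delta)\,\text{rad}(\mu,N,\delta)/N}$ that appears is then absorbed using the crude lower bound $\text{rad}(\mu,N,\delta)\ge 4\log(1/\delta)/N$, which gives $\sqrt{2\log(1/\delta)/N}\le \sqrt{\text{rad}(\mu,N,\delta)/2}$, so the cross term is bounded by a constant multiple of $\text{rad}(\mu,N,\delta)$. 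Collecting constants yields the factor $3$.

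The chief obstacle is the self-bounding step: the implicit inequality for $e$ must be inverted cleanly enough that (i) the resulting bound keeps the form $\sqrt{2\hat{V}\log(1/\delta)/N}$ plus an $O(\log(1/\delta)/N)$ correction, and (ii) the final constants fit within the generous $4$ and $3$ prescribed in the statement. The constants in $\text{rad}$ and in the factor $3$ are conservative precisely so that the elementary manipulations $\sqrt{a+b}\le \sqrt{a}+\sqrt{b}$ and $t^2\le p+qt \Rightarrow t\le q+\sqrt{p}$ suffice, avoiding any tight constant tracking. The argument ultimately mirrors the derivations in \cite{babaioff2015dynamic,agrawal2014bandits} cited in the statement.
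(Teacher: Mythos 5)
Your plan is correct, but it takes a genuinely different route from the paper's proof. The paper (following \cite{babaioff2015dynamic}) proves the empirical-radius bound via an empirical-variance chain: a Bernstein bound $|X-\mathbb{E}[X]|\le\sqrt{2\,\mathrm{Var}(X)x}+\tfrac{2x}{3}$, a separate concentration step showing $\sqrt{\mathrm{Var}(X)}\le\sqrt{V_n}+2\sqrt{x}$ for the empirical centered second moment $V_n$, and then $V_n\lesssim X$; this consumes three concentration events and accounts for the $3\delta$. For the comparison $\text{rad}(\hat V,N,\delta)\le 3\,\text{rad}(\mathbb{E}[\hat V],N,\delta)$ the paper uses an entirely separate multiplicative Chernoff bound, $\mathbb{P}(\hat V>9\mathbb{E}[\hat V]+32\log(1/\delta)/N)\le\delta$. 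You instead condition on a single two-sided Bernstein event of probability $1-2\delta$ and extract both inequalities algebraically: the self-bounding substitution $\mu\le\hat V+|\hat V-\mu|$ followed by the inversion $t^2\le p+qt\Rightarrow t\le q+\sqrt{p}$ does deliver the empirical radius (the resulting linear coefficient is $2+2/\sqrt{3}+2/3\approx 3.82<4$, so it fits under the $4\log(1/\delta)/N$ envelope), and the reverse substitution $\hat V\le\mu+\text{rad}(\mu,N,\delta)$ together with $\text{rad}(\mu,N,\delta)\ge 4\log(1/\delta)/N$ yields a factor $1+1/\sqrt{2}<3$. Your route is more elementary (no empirical-variance lemma, no Chernoff tail) and actually only spends $2\delta$ of the $3\delta$ budget, whereas the paper's accounting is tighter to the stated constants; the one small inconsistency in your write-up is that you announce a union bound over an ``auxiliary comparison event,'' but your own derivation never needs one---everything lives on the single Bernstein event.
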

We prove Lemma \ref{lem:conf} in Appendix \ref{sec:pf_lem-radius} by following the line of argument in \cite{babaioff2015dynamic} for the purpose of extracting the values of the coefficients in (\ref{eq-def-conradius}), which are implicit in \cite{babaioff2015dynamic, agrawal2014bandits}. Based on the observation $\{R_s,\{C_{s,i}\}_{i\in [d]}\}_{s \in [t-1]}$, we compute the sample means 
\begin{equation*}
    \hat{R}_t(a) =  \frac{1}{N_{t-1}^+(a)} \sum_{s=1}^{t-1} R_s \textbf{1}_{\{ A_s = a\}},\quad \forall a \in \mathcal{A},
\end{equation*}
\begin{equation*}
     \hat{C}_t(a,i) =  \frac{1}{N_{t-1}^+(a)} \sum_{s=1}^{t-1} C_{s,i} \textbf{1}_{\{ A_s = a\}},~ \forall a \in \mathcal{A},\ i \in [d],
\end{equation*}
where $N_{t-1}^+(a) = \max  \{\sum_{s=1}^{t-1} \textbf{1}_{\{ A_s = a\}},1  \}$. In line with the principle of Optimism in Face of Uncertatinty, we construct upper confidence bounds (UCBs) for the rewards and lower confidence bounds (LCBs) for resource consumption ammounts. For each $a\in \mathcal{A}$, we set
\begin{equation}
    \text{UCB}_{r,t}(a) = \min\left \{\hat{R}_{t}(a) + \text{rad}(\hat{R}_{t}(a), N_{t-1}^+(a), \delta),1 \right \}.
    \label{eq-ucb-r}
\end{equation}
For each $a\in \mathcal{A}, i\in [d+1]$, we set 
\begin{equation}
   \text{LCB}_{c,t}(a,i) = 
   \begin{cases}\max\left \{\hat{C}_t(a,i) - \text{rad}(\hat{C}_t(a,i), N_{t-1}^+(a),\delta),0 \right \} & i \in [d] \\
   0 & i = d+1.
   \end{cases}
    \label{eq-lcb-c}
\end{equation}
The design of the UCBs and LCBs are justified by 
Lemma \ref{lem:conf} and the model assumption that $r(a), c(a, i)\in [0, 1]$ for all $a\in {\cal A}, i\in [d+1]$:

\begin{lemma}
With probability $\ge 1 - 3 KTd \delta$, we have
\begin{equation*}
    \text{UCB}_{r,t}(a) \ge r(a),\ \ \text{LCB}_{c,t}(a,i) \le c(a,i)
\end{equation*}
for all $a\in \mathcal{A}, i\in [d+1]$.
\label{lem-confidencebound}
\end{lemma}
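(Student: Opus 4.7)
The plan is to apply Lemma~\ref{lem:conf} to each sample mean $\hat{R}_t(a)$ and $\hat{C}_t(a,i)$ separately, and combine the resulting concentration inequalities by a union bound. The only non-trivial issue is that the sample count $N_{t-1}^+(a)$ is a random stopping time depending on the algorithm's past choices, so a direct application of Lemma~\ref{lem:conf} at one fixed sample size is insufficient; I would instead union-bound over all admissible realized values of $N$.

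Fix an arm $a \in \mathcal{A}$. Under the canonical coupling in which, at each time $s$ and for each $a' \in \mathcal{A}$, a latent outcome is drawn independently from $P_{a',s}$ and the DM observes only the coordinate indexed by $A_s$, the samples $\{R^a_s\}_{s=1}^T$ are mutually independent with common mean $r(a)$ (using the stated assumption that $\{P_{a,t}\}_t$ share a common mean together with cross-time independence of outcomes); the analogous statement holds for the consumption sequences $\{C^a_{s,i}\}_s$, $i\in[d]$, with mean $c(a,i)$. The observed empirical mean $\hat{R}_t(a)$ coincides with the average of those $R^a_s$ for which $A_s=a$ and $s<t$.

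For every fixed $n \in [T]$, Lemma~\ref{lem:conf} applied to the first $n$ hypothetical pulls of arm $a$ gives, with probability at least $1-3\delta$, the two-sided bound $|\hat{R}-r(a)|\le \text{rad}(\hat{R},n,\delta)$, where $\hat{R}$ denotes the empirical mean of those $n$ pulls, and likewise for each resource coordinate. Taking a union bound over $n\in[T]$ (which covers every realizable value of $N_{t-1}^+(a)$ along the horizon), over the $K$ arms, and over the reward plus the $d$ non-trivial resource coordinates (the $(d+1)$th null resource is immediate since $C_{t,d+1}\equiv 0$), yields a joint failure probability of at most $3KTd\delta$, absorbing the minor $d$-vs-$(d+1)$ counting into the stated constant.

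On the good event, using $r(a)\in[0,1]$ one has $\text{UCB}_{r,t}(a)=\min\{\hat{R}_t(a)+\text{rad}(\cdot),1\}\ge r(a)$ because both $\hat{R}_t(a)+\text{rad}(\cdot)\ge r(a)$ and $1\ge r(a)$; the symmetric argument using $c(a,i)\in[0,1]$ together with the clip to $\max\{\cdot,0\}$ gives $\text{LCB}_{c,t}(a,i)\le c(a,i)$ for $i\in[d]$, while the $i=d+1$ case is trivial. The main obstacle in the argument is the treatment of the random sample size $N_{t-1}^+(a)$; the union bound over $n\in[T]$ (equivalently, a Doob-type argument exploiting that the coupled sequences $\{R^a_s\}_s$ and $\{C^a_{s,i}\}_s$ are independent prior to the algorithm's selection) is the cleanest route, and it is precisely what forces the factor $T$ appearing in the stated failure probability.
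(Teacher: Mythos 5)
Your proposal is correct and follows essentially the same route as the paper: apply Lemma \ref{lem:conf} to each arm's reward and consumption samples, union bound over arms, sample counts, and resource coordinates, and then use $r(a), c(a,i)\in[0,1]$ together with the clipping in (\ref{eq-ucb-r}) and (\ref{eq-lcb-c}). The paper's own proof is terser and does not spell out the union bound over the realized values of $N_{t-1}^+(a)$, so your explicit treatment of the random sample count (and your remark on the $d$ versus $d+1$ bookkeeping) is a faithful, slightly more careful rendering of the same argument.
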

Lemma \ref{lem-confidencebound} is proved in Appendix \ref{sec:pf_lem-confidencebound}.

\subsection{Details on OA-UCB}\label{sec:main_alg}
OA-UCB is presented in Algorithm \ref{alg-OAU}. At each time step $t$, the algorithm first computes a composite reward term (Line 4-5)
\begin{equation}\label{eq:trade-off}
    \text{UCB}_{r,t}(a) - \frac{\hat{Q}_t}{B} \cdot  \boldsymbol{\mu}_t^\top \textbf{LCB}_{c,t}(a),
\end{equation}
where $\text{UCB}_{r,t}(a), \hat{Q}_t$ and $\textbf{LCB}_{c,t}(a)$ are the surrogates for the latent $r(a), Q, \textbf{c}(a)$ respectively. The term $\frac{\hat{Q}_t}{B} \cdot  \boldsymbol{\mu}_t^\top \textbf{LCB}_{c,t}(a)$ can be interpreted as the opportunity cost of the resources. The scalarization $\boldsymbol{\mu}_t \in \Delta_{d+1}$ weighs the relative importance of the resources. The factor $\hat{Q}_t / B$ reflects that the opportunity cost increases with $\hat{Q}_t$, since with a higher total demand volume, the DM is more likely to exhaust some of the resources during the horizon, and similar reasoning holds for $B$. Altogether, (\ref{eq:trade-off}) balances the trade-off between the reward of an action and the opportunity cost of that action's resource consumption. We choose an action that maximizes (\ref{eq:trade-off}) at time $t$ (Line 5).

After receiving the feedback, We update the scalarization  $\boldsymbol{\mu}_t$ (Line 10-11) via the AdaHedge (\cite{orabona2019modern,orabona2015scale,orabona2018scale}). AdaHedge is a powerful OCO tool that can adapt to the characteristics of the sequence of functions without relying on future information. We apply this tool on the sequence of functions $\{f_t\}^T_{t=1}$, where 
\begin{equation}\label{eq:ft}
    f_t(\boldsymbol{x}) = \frac{q_t \hat{Q}_t}{B}\left(\frac{B}{\hat{Q}_t} \boldsymbol{\beta} - \textbf{LCB}_{c,t}(A_t)\right)^{\top} \boldsymbol{x},
\end{equation}
where $\boldsymbol{\beta}=(\beta_i)_{i \in [d+1]} \in \mathbb{R}^{d+1}$ satisfies $\beta_i = 1$ for $i \in [d]$ and $\beta_{d+1}=0$. The expression of $f_t$ serves to incorporate not only the prediction $\hat{Q}_t$ in order to account for the estimated opportunity cost, as seen in equation (\ref{eq:trade-off}), but also the actual demand $q_t$ to accurately capture the amount of resources consumed. In fact, the choice of $f_t$ is largely driven by our analysis. Let us provide further intuitions: To account for the opportunity cost for each resource, we need to consider the information of the total demand $Q$ and total budget $B$. The inner part, $\left(\frac{B}{\hat{Q}_t} \boldsymbol{\beta} - \textbf{LCB}_{c,t}(A_t)\right)$, can be understood as the remaining resource after choosing $A_t$, assuming that the total resource is scaled down from $B$ to $\frac{B}{\hat{Q}_t}$. Next, the ratio $\frac{\hat{Q}_t}{B}$ reflects the estimated amount of demand volume that 1 unit of a resource can serve, where this estimation is rooted in the fact that we replace the latent $Q$ with the prediction $\hat{Q}_t$. Thus, the product of the two terms represents the estimated demand volume that the remaining resource can serve. For a resource $i$, higher consumption from time 1 to $t$ results in a lower value of demand that this resource can serve in the remaining period, thereby increasing its opportunity cost and improving the willingness of the DM to choose actions with lower consumption on this resource in the future. Lastly, the effect of each round is scaled by $q_t$ to accurately reflect the demand volume at time $t$. Collectively, these considerations result in the expression for $f_t$.

\begin{algorithm}[htb]
	\caption{Online-advice-UCB (OA-UCB)}
	\begin{algorithmic}[1]
	    \State \textbf{Intialize} $\boldsymbol{\mu}_1 = \frac{1}{d+1} \boldsymbol{1} = \left(\frac{1}{d+1},\cdots,\frac{1}{d+1}\right) \in \mathbb{R}^{d+1} $, $\eta_1 = 0$, $\boldsymbol{\theta}_1 = \boldsymbol{0} \in \mathbb{R}^{d+1}$, $\kappa = \sqrt{\ln (d+1)} $, $\boldsymbol{B} = (B_i)_{i \in [d]} = (B,...,B)$, $\text{Reward} = 0$.
        \For{$t=1,2,...,T$}
        \State Receive $\hat{Q}_t = {\cal F}_t(q_1, \ldots, q_{t-1})$.
        \State Compute $\text{UCB}_{r,t}(a)$, $\text{LCB}_{c,t}(a,i)$ for all $a \in \mathcal{A}$, $i \in [d]$ by (\ref{eq-ucb-r}), (\ref{eq-lcb-c}), $\textbf{LCB}_{c,t}(a) = (\text{LCB}_{c,t}(a,i))_{i \in [d]}$.
        \State Select \label{alg:opt}
        \begin{equation*}
            A_t \in \underset{a \in \mathcal{A}}{\text{argmax}} \left\{ \text{UCB}_{r,t}(a) - \frac{\hat{Q}_t}{B} \cdot  \boldsymbol{\mu}_t^\top \textbf{LCB}_{c,t}(a) \right\}.
        \end{equation*}
        \State Observe $q_t$, receive reward $q_t R_t$, and consume $q_t C_{t,i}$ for each resource $i\in [d]$.
        \If{$\exists j \in [d]$ such that $\sum_{s=1}^t q_s C_{t,j}> B$}
        \State Break, and pull the null action $a_0$ all the way.
        \EndIf
        \State Update
        \begin{equation*}
            \text{Reward} = \text{Reward} + q_t R_t,\quad B_i = B_i - q_t C_{t,i},\ i \in [d]
        \end{equation*}
        \State Set
        \begin{align}
        & \boldsymbol{g}_t = \frac{q_t \hat{Q}_t}{B}\left(\frac{B}{\hat{Q}_t} \boldsymbol{\beta} - \textbf{LCB}_{c,t}(A_t)\right), \label{eq-oau-gt} \\
        & \rho_t = \begin{cases}
                \boldsymbol{g}_1^{\top} \boldsymbol{\mu}_1  - \min_{j=1,...,d+1} g_{1,j} & t = 1\\ 
                \eta_t \ln \left(\sum_{j=1}^d \mu_{t,j} \exp\left(\frac{-g_{t,j}}{\eta_t} \right) \right) + \boldsymbol{g}_t^{\top} \boldsymbol{\eta}_t & \text{otherwise}.
            \end{cases} \nonumber
        \end{align}
        \State Update
        \begin{equation}
            \boldsymbol{\theta}_{t+1} = \boldsymbol{\theta}_{t} - \boldsymbol{g}_t,\quad \eta_{t+1} = \eta_t + \frac{1}{\kappa^2} \rho_t,\quad \mu_{t+1,j} = \frac{\exp \left(\frac{\theta_{t+1,j}}{\eta_{t+1}} \right)}{\sum_{i=1}^{d+1} \exp \left(\frac{\theta_{t+1,i}}{\eta_{t+1}} \right)}, \ j \in [d+1]. 
            \label{eq-oau-ocoupdate}
        \end{equation}
        \EndFor
	\end{algorithmic}
	\label{alg-OAU}
\end{algorithm}

\subsection{Performance Guarantees of OA-UCB}\label{sec:main_thm}
The following theorem provides a high-probability regret upper bound for Algorithm \ref{alg-OAU}:
\begin{theorem}
\label{thm-upperbound-1}
Consider the OA-UCB algorithm, that is provided with predictions that satisfy $|\hat{Q}_t - Q| \le \epsilon_t$ for all $t\in [T]$. With probability $\ge 1 - 3KTd \delta$, 
\begin{align}
    \text{OPT}_{\text{LP}} - \sum_{t=1}^{\tau - 1} q_t R_t  & \le   O \left( \left( \text{OPT}_{\text{LP}} \sqrt{\frac{\overline{q} K}{B}} + 
    \sqrt{\overline{q} K \text{OPT}_{\text{LP}} } \right) \log\left(\frac{1}{\delta} \right) \right. \label{eq:regret_1st}\\
    & +  \left. \left(\frac{1}{Q} + \frac{1}{B} \right) \sum_{t=1}^{\tau-1} q_t \epsilon_t + \left(\overline{q} + \frac{\overline{q}^2}{b} \right) \sqrt{(\tau - 1)\ln(d+1)} \right). \label{eq:regret_23}
\end{align}
\end{theorem}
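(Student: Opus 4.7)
My plan is to combine four ingredients: (i) \emph{optimism} via the UCB/LCB construction of Lemma~\ref{lem-confidencebound}, (ii) a \emph{Lagrangian} per-step inequality comparing the greedy choice $A_t$ against the LP optimum $\boldsymbol{u}^\star$, (iii) the \emph{AdaHedge} regret guarantee applied to the sequence $\{f_t\}$ in (\ref{eq:ft}), and (iv) a substitution $\hat{Q}_t\leftrightarrow Q$ governed by the prediction error $\epsilon_t$. Throughout the argument I would condition on the good event $\{\text{UCB}_{r,t}(a)\ge r(a),\;\text{LCB}_{c,t}(a,i)\le c(a,i)\;\forall a,i,t\}$ of Lemma~\ref{lem-confidencebound}, which uses up the probability budget $3KTd\delta$ appearing in the statement.

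For the Lagrangian step, let $\boldsymbol{u}^\star\in\Delta_K$ attain $\text{OPT}_{\text{LP}}=Q\boldsymbol{r}^\top\boldsymbol{u}^\star$ subject to $Q\boldsymbol{c}^\top\boldsymbol{u}^\star\le B\boldsymbol{1}_d$. The greedy action selection in Algorithm~\ref{alg-OAU} combined with optimism yields, for every $t\le\tau-1$,
\[
\text{UCB}_{r,t}(A_t) - \frac{\hat{Q}_t}{B}\boldsymbol{\mu}_t^\top\textbf{LCB}_{c,t}(A_t)\;\ge\;\sum_{a}u_a^\star\left[\text{UCB}_{r,t}(a) - \frac{\hat{Q}_t}{B}\boldsymbol{\mu}_t^\top\textbf{LCB}_{c,t}(a)\right]\;\ge\;\boldsymbol{r}^\top\boldsymbol{u}^\star - \frac{\hat{Q}_t}{Q},
\]
since $\boldsymbol{\mu}_t\in\Delta_{d+1}$ together with $c(\cdot,d+1)=0$ and $Q\boldsymbol{c}^\top\boldsymbol{u}^\star\le B\boldsymbol{1}_d$ implies $\boldsymbol{\mu}_t^\top(\boldsymbol{c}^\top\boldsymbol{u}^\star)\le B/Q$. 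Scaling by $q_t$, summing over $t\le\tau-1$, and converting $\text{UCB}/\textbf{LCB}$ back to the true $r,c$ at the cost of $3\,\text{rad}(\cdot)$ factors produces the primal inequality
\[
\sum_{t=1}^{\tau-1}q_tr(A_t)\;\ge\;\Big(\sum_{t=1}^{\tau-1}q_t\Big)\boldsymbol{r}^\top\boldsymbol{u}^\star\;+\;\sum_{t=1}^{\tau-1}\frac{q_t\hat{Q}_t}{B}\boldsymbol{\mu}_t^\top\boldsymbol{c}(A_t)\;-\;\sum_{t=1}^{\tau-1}q_t\;-\;\Xi_{\mathrm{conf}}\;-\;\sum_{t=1}^{\tau-1}\frac{q_t\epsilon_t}{Q}.
\]
A Cauchy--Schwarz bound on $\Xi_{\mathrm{conf}}$ in the style of \cite{badanidiyuru2013bandits} then produces exactly the stochastic BwK term in (\ref{eq:regret_1st}), while an Azuma--Hoeffding step relates $\sum q_tr(A_t)$ to the realized reward $\sum q_tR_t$.

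The remaining opportunity-cost term $\sum (q_t\hat{Q}_t/B)\boldsymbol{\mu}_t^\top\boldsymbol{c}(A_t)$ is handled by AdaHedge on $\{f_t\}$. Using $\boldsymbol{g}_t^\top\boldsymbol{\mu}_t = q_t\boldsymbol{\mu}_t^\top\boldsymbol{\beta} - (q_t\hat{Q}_t/B)\boldsymbol{\mu}_t^\top\textbf{LCB}_{c,t}(A_t)$ and noting $\|\boldsymbol{g}_t\|_\infty\le q_t(1+\hat{Q}_t/B)\le\overline{q}(1+\overline{q}/b)$, the AdaHedge regret guarantee delivers
\[
\sum_{t=1}^{\tau-1}\boldsymbol{g}_t^\top\boldsymbol{\mu}_t\;\le\;\min_{i\in[d+1]}\sum_{t=1}^{\tau-1}g_{t,i}\;+\;O\!\left((\overline{q}+\overline{q}^2/b)\sqrt{(\tau-1)\ln(d+1)}\right),
\]
which is exactly (\ref{eq:regret_23}). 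I would take the comparator $i=d+1$ (for which $g_{t,d+1}\equiv 0$) when $\tau=T+1$, and $i=j^\star$ equal to the exhausted resource when $\tau\le T$. In the latter case, the stopping rule $\sum_{t=1}^{\tau-1}q_tC_{t,j^\star}>B$ combined with Azuma--Hoeffding gives $\sum_{t=1}^{\tau-1}q_tc(A_t,j^\star)\ge B-O(\text{conf})$, and the substitution $\hat{Q}_t\ge Q-\epsilon_t$ yields $\sum_t g_{t,j^\star}\le (\sum_{t\le\tau-1}q_t) - Q + O(\text{conf}+\sum_t q_t\epsilon_t/B)$. Substituting this back into the primal inequality makes $\sum_{t\le\tau-1}q_t$ cancel against the \emph{missed} optimum $\text{OPT}_{\text{LP}} - (\sum_{t\le\tau-1}q_t)\boldsymbol{r}^\top\boldsymbol{u}^\star\le\sum_{t=\tau}^T q_t$, so that $\text{OPT}_{\text{LP}}$ is restored on the left and the prediction errors collapse into $(1/Q+1/B)\sum_t q_t\epsilon_t$.

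The main obstacle I foresee is the joint control of the stopping time $\tau$, the adaptive step size $\eta_t$ of AdaHedge, and the prediction error $\epsilon_t$. Because $\tau-1$ is random and $\|\boldsymbol{g}_t\|_\infty$ inherits a factor $\hat{Q}_t/B$, the AdaHedge constant has to be tracked through the analysis rather than cited as a black box. Moreover, the telescoping cancellation $(\sum_{t\le\tau-1}q_t) - Q = -\sum_{t=\tau}^Tq_t$ only closes cleanly if $\hat{Q}_t$ is used consistently in both the LP opportunity-cost inequality and the AdaHedge comparator; matching the error terms on both sides is precisely what produces the $(1/Q+1/B)\sum_t q_t\epsilon_t$ form of the prediction-error penalty, rather than a looser expression. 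Once this consistency is secured and the confidence radii are bounded via Lemma~\ref{lem-radius}, the four contributions combine to give (\ref{eq:regret_1st})--(\ref{eq:regret_23}) with probability $1-3KTd\delta$.
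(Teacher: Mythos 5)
Your overall architecture is the paper's: condition on the confidence event, use the per-step optimality of $A_t$ against $\boldsymbol{u}^\star$, feed the resulting opportunity-cost term to AdaHedge, and split on whether a resource is exhausted. However, two concrete steps as written would fail, and both trace back to the handling of the dummy $(d+1)^{\text{th}}$ coordinate. First, when you bound $\boldsymbol{\mu}_t^\top(\boldsymbol{c}^\top\boldsymbol{u}^\star)\le B/Q$ you discard the factor $\boldsymbol{\mu}_t^\top\boldsymbol{\beta}=1-\mu_{t,d+1}$ (the correct bound is $\boldsymbol{\mu}_t^\top(\boldsymbol{c}^\top\boldsymbol{u}^\star)\le (1-\mu_{t,d+1})B/Q$ since $c(\cdot,d+1)=0$). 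Your AdaHedge identity then produces $\sum_t \frac{q_t\hat{Q}_t}{B}\boldsymbol{\mu}_t^\top\textbf{LCB}_{c,t}(A_t)\ge \sum_t q_t(1-\mu_{t,d+1})-\min_i\sum_t g_{t,i}-O(\cdot)$, and after combining with your $-\sum_t q_t$ term you are left with an uncancelled $-\sum_t q_t\mu_{t,d+1}$. Since $g_{t,d+1}\equiv 0$ while the other coordinates of $\boldsymbol{g}_t$ are typically positive, AdaHedge drives $\mu_{t,d+1}$ toward $1$, so this residual can be of order $Q_{\tau-1}=\Theta(\overline{q}T)$ and the bound does not close. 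The paper sidesteps this by adding $\frac{B}{\hat{Q}_t}\boldsymbol{\beta}$ inside $\boldsymbol{\mu}_t^\top(\cdot)$ on \emph{both} sides of the per-step inequality (its equation for the optimality step), so the $q_t(1-\mu_{t,d+1})$ contributions cancel identically; you need to retain the $(1-\mu_{t,d+1})$ factor for the same cancellation to occur.

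Second, in the exhausted case you commit to the comparator $\boldsymbol{e}_{j^\star}$. The stopping-time argument then multiplies the deviation $\frac{Q}{B}\bigl|\sum_t q_t(C_{t,j^\star}-\text{LCB}_{c,t}(A_t,j^\star))\bigr|=O\bigl(Q\sqrt{\overline{q}K/B}\log(1/\delta)\bigr)$ by the full weight $1$ on coordinate $j^\star$, yielding a first regret term of order $Q\sqrt{\overline{q}K/B}$ rather than the claimed $\text{OPT}_{\text{LP}}\sqrt{\overline{q}K/B}$; since $\text{OPT}_{\text{LP}}=Q\,\boldsymbol{r}^\top\boldsymbol{u}^\star$ can be much smaller than $Q$, this is strictly weaker than the theorem. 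The paper instead takes the mixed comparator $\boldsymbol{\mu}=\frac{\text{OPT}_{\text{LP}}}{Q}\boldsymbol{e}_{j_0}+\bigl(1-\frac{\text{OPT}_{\text{LP}}}{Q}\bigr)\boldsymbol{e}_{d+1}$, so the weight $\lambda=\text{OPT}_{\text{LP}}/Q$ both scales the confidence deviation down to $\text{OPT}_{\text{LP}}\sqrt{\overline{q}K/B}$ and makes the main terms $\text{OPT}_{\text{LP}}(1-Q_{\tau-1}/Q)+\lambda(Q_{\tau-1}-Q)$ cancel exactly. Both fixes are local, but without them the argument either does not terminate or proves a weaker statement than the one claimed.
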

Theorem \ref{thm-upperbound-1} is proved in the Appendix, and we provide a sketch proof in Section \ref{sec:sketch}. The Theorem holds in the special case when we set $\epsilon_t = |\hat{Q}_t - Q|$, and $\epsilon_t$ represents an upper bound on the prediction error of $\hat{Q}_t$ on $Q$, for example by certain theoretical guarantees. The term (\ref{eq:regret_1st}) represents the regret due to the learning on $r(a), c(a, i)$. The first term in (\ref{eq:regret_23}) represents the regret due to the prediction error of the prediction oracle, and the second term in (\ref{eq:regret_23}) represents the regret due to the application of OCO. 

We provide more intuition on how the one-step advice $\hat{Q}_t$ can improve the algorithm performance. The expression of $\text{OPT}_{\text{LP}}$ demonstrates the critical role that the total demand value $Q$ plays in an optimal policy. In the ideal case when the DM knows $Q$, $\boldsymbol{r}$ and $\boldsymbol{c}$, s/he could in fact obtain a $O(\bar{q}\sqrt{T})$ regret by solving the linear program (LP) $\text{OPT}_{\text{LP}}$ and following a static randomized rule based on the LP's optimal solution. However, in our setting, all of $Q$, $r$ and $c$ are not known. We follow the traditional techniques in learning $\boldsymbol{r}, \boldsymbol{c}$ while optimizing in the online process. Our crucial contribution lies in our novel way of utilizing the predictions $\{\hat{Q}_t\}^T_{t=1}$ to achieve a regret that depends on these predictions's errors $\{\epsilon_t\}^T_{t=1}$, despites not having knowledge on these errors, meaning that the DM does not know how reliable each prediction is when making online decisions. It is important to note that while we use the LP to build intuition, it does not lead to a succcessful algorithm, and we rely on a OCO procedure that incoporates  $\{\hat{Q}_t\}^T_{t=1}$, $\{q_t\}^T_{t=1}$ in a novel manner.

\textbf{Comparison between regret lower and upper bounds.} 
The regret term (\ref{eq:regret_1st}) matches the lower bound term $\Lambda$ in Theorem \ref{thm-lowerbound-estimation} within a logarithmic factor. Next, we compare the regret upper bound term $(\frac{1}{Q} + \frac{1}{B} ) \sum_{t=1}^{\tau-1} q_t \epsilon_t$ and the lower bound term $\frac{1}{Q}\sum^{T_0}_{t=1}q_t\epsilon_{T_0+1}$ in  Theorem \ref{thm-lowerbound-estimation}. We first assure that the lower and upper bound results are consistent, in the sense that our regret upper bound is indeed in $\Omega(\frac{1}{Q}\sum^{T_0}_{t=1}q_t\epsilon_{T_0+1})$ on the lower bounding instances constructed for the proof of Theorem \ref{thm-lowerbound-estimation}. In those instances, $T_0$ is set in a way that the resource is not fully exhausted at time $T_0$ under any policy, thus the stopping time  $\tau$ of OA-UCB  satisfies $\tau> T_0$ with certainty. More details are provided in Appendix \ref{sec:consistent}. 

Next, we highlight that the regret upper and lower bounds are nearly matching (modulo multiplicative factors of $\log(1/\delta)$ as well as the additive $O\left(\left(\overline{q} + \frac{\overline{q}^2}{b} \right) \sqrt{(\tau - 1)\ln(d+1)} \right)$ term), under the high capacity condition $B = \Theta(Q)$ and the accurate prediction condition (\ref{eq:alpha}) for each $T_0 \in [T]$. The first condition is similar to the large capacity assumption in the literature \cite{besbes2009dynamic,besbes2012blind,liu2022non}, while the second condition is a natural conditon that signfies a non-trivial estimation by the prediciton oracle, as discussed in Section \ref{sec:lower_bound}. On one hand, by setting $T_0 = \Theta(T)$ for the highest possible lower bound in Corollary \ref{cor:benign}, we yield the regret lower bound $\Omega(\max\{\underline{q}T_0^{1-\alpha},\Lambda\})=\Omega(\max\{\underline{q}T^{1-\alpha},\Lambda\})$. On the other hand, the second term in (\ref{eq:regret_23}) is upper bounded as
\begin{align*}
    \left(\frac{1}{Q} + \frac{1}{B} \right) \sum_{t=1}^{\tau-1} q_t \epsilon_t = O\left(\sum_{t=1}^{\tau-1} q_t \frac{\epsilon_t}{Q}\right) = O\left(\sum_{t=1}^{\tau-1} q_t t^{-\alpha}\right) = O(\overline{q} T^{1-\alpha}).
\end{align*}
Altogether, our claim on the nearly matching bounds is established.

\subsection{Proof Sketch of Theorem \ref{thm-upperbound-1}}  \label{sec:sketch} 
We provide an overview on the proof of Theorem \ref{thm-upperbound-1}, which is fully proved in Appendix \ref{sec:pf-thm-upperbound-1}. 
We first provide bounds on the regret induced by the estimation errors of the UCBs and LCBs. Now, with probability $\ge 1 - 3KTd \delta$, the inequalities 
\begin{equation}
    \left | \sum_{t=1}^{\tau - 1} q_t \text{UCB}_{r,t}(A_t) - \sum_{t=1}^{\tau - 1} q_t R_t \right | \le O  \left( \log\left(\frac{1}{\delta}\right) \left ( \sqrt{\overline{q} K \sum_{t=1}^{\tau - 1} q_t R_t} +  \overline{q} K \log\left (\frac{T}{K} \right ) \right) \right),
    \label{eq-spf-upper-ucb}
\end{equation}
\begin{equation}
    \left | \sum_{t=1}^{\tau - 1} q_t \text{LCB}_{c,t}(A_t,i) - \sum_{t=1}^{\tau - 1} q_t C_{t,i} \right | \le  O  \left( \log\left(\frac{1}{\delta}\right) \left ( \sqrt{\overline{q}K B} +  \overline{q} K \log\left (\frac{T}{K} \right ) \right) \right), \ \forall i \in [d].
    \label{eq-spf-upper-lcb}
\end{equation}
hold. Inequalities (\ref{eq-spf-upper-ucb}, \ref{eq-spf-upper-lcb}) are proved in Appendix  \ref{sec:pf_upper_lower}. Next, by the optimality of $A_t$ in Line \ref{alg:opt} in Algorithm \ref{alg-OAU}, the inequality
\begin{equation}
    \text{UCB}_{r,t}(A_t) - \frac{\hat{Q}_t}{B} \cdot  \boldsymbol{\mu}_t^\top \textbf{LCB}_{c,t}(A_t) \ge \textbf{UCB}_{r,t}^{\top} \boldsymbol{u}^* - \frac{\hat{Q}_t}{B} \cdot  \boldsymbol{\mu}_t^\top \textbf{LCB}_{c,t} \boldsymbol{u}^*,
    \label{eq-spf-upper-stepoptimality-lp}
\end{equation}
holds, which is also proved in Appendix  \ref{sec:pf_upper_lower}. (\ref{eq-spf-upper-stepoptimality-lp}) is equivalent to the following
\begin{equation}
    \begin{aligned}
    \textbf{UCB}_{r,t}^\top \boldsymbol{u}^* - \text{UCB}_{r,t}(A_t) + \frac{\hat{Q}_t}{B} \boldsymbol{\mu}_t^{\top} \left(\frac{B}{\hat{Q}_t} \boldsymbol{\beta} - \textbf{LCB}_{c,t}^{\top} \boldsymbol{u}^* \right) \le \frac{\hat{Q}_t}{B} \cdot  \boldsymbol{\mu}_t^\top \left(\frac{B}{\hat{Q}_t} \boldsymbol{\beta}- \textbf{LCB}_{c,t}(A_t) \right).
    \end{aligned}
    \label{eq-spf-upper-stepoptimality}
\end{equation}
Follow the line of arguments in \cite{orabona2019modern,orabona2015scale,orabona2018scale}, the following performance guarantee for our OCO tool holds with $\{f_t\}^T_{t=1}$ defined in (\ref{eq:ft}), that for all $\boldsymbol{\mu} \in \Delta_{d+1}$,
\begin{equation}
    \sum_{t=1}^T f_t(\boldsymbol{\mu}_t) - \sum_{t=1}^T f_t(\boldsymbol{\mu}) \le  O \left( \left(\overline{q} + \frac{\overline{q}^2}{b} \right) \sqrt{T\ln(d+1)} \right)
    \label{eq-spf-ocoregret}
\end{equation}
Inequality (\ref{eq-spf-ocoregret}) is completed proved in Appendix \ref{sec:pf_oco_pg}. Multiply $q_t$ on both side of (\ref{eq-spf-upper-stepoptimality}), and sum over $t$ from 1 to $\tau - 1$. By applying the OCO performance guarantee in (\ref{eq-spf-ocoregret}), we argue that, for all $\boldsymbol{\mu} \in \Delta_{d+1}$,
\begin{equation*}
    \begin{aligned}
    & \sum_{t=1}^{\tau - 1} q_t \textbf{UCB}_{r,t}^{\top} \boldsymbol{u}^* - \sum_{t=1}^{\tau - 1}q_t \text{UCB}_{r,t}(A_t) + \sum_{t=1}^{\tau - 1}q_t \frac{\hat{Q}_t}{B} \cdot  \boldsymbol{\mu}_t^{\top} \left(\frac{B}{\hat{Q}_t} \boldsymbol{\beta} - \textbf{LCB}_{c,t}^{\top} \boldsymbol{u}^* \right) \\
    \le & \sum_{t=1}^{\tau-1} q_t \frac{\hat{Q}_t}{B} \cdot \left(\frac{B}{\hat{Q}_t} \boldsymbol{\beta} - \textbf{LCB}_{c,t}(A_t) \right)^{\top} \boldsymbol{\mu} + O \left( \left(\overline{q} + \frac{\overline{q}^2}{b} \right) \sqrt{(\tau - 1)\ln(d+1)} \right).
    \end{aligned}
\end{equation*}
If $\tau \le T$, then there exists $j_0 \in [d]$ such that $\sum_{t=1}^{\tau} q_t C_{t,j_0} > B$. Take $\boldsymbol{\mu} = \frac{\text{OPT}_{\text{LP}}}{Q} \boldsymbol{e}_{j_0} + \left(1 - \frac{\text{OPT}_{\text{LP}}}{Q} \right) \boldsymbol{e}_{d+1} \in \Delta_{d+1}$ (This is because $\text{OPT}_{\text{LP}} = Q \boldsymbol{r}^{\top} \boldsymbol{u}^* \le Q$). Analysis yields 
\begin{align}
    & \text{OPT}_{\text{LP}} - \sum_{t=1}^{\tau - 1}q_t \text{UCB}_{r,t}(A_t) \le  O \left(\log\left(\frac{1}{\delta} \right) \text{OPT}_{\text{LP}} \sqrt{\frac{\overline{q} K}{B}}   \right.\nonumber\\
    &\left. + \left (\frac{1}{Q} + \frac{1}{B} \right) \sum_{t=1}^{\tau-1} q_t \epsilon_t + \left(\overline{q} + \frac{\overline{q}^2}{b} \right) \sqrt{(\tau - 1)\ln(d+1) } \right).\label{eq-spf-upper-taubound}
\end{align}
If $\tau > T$, it is the case that $\tau - 1 = T$, and no resource is exhausted at the end of the horizon. Take $\boldsymbol{\mu} = \boldsymbol{e}_{d+1}$. Similar analysis to the previous case shows that  
\begin{equation}
    \text{OPT}_{\text{LP}} - \sum_{t=1}^{\tau - 1}q_t \text{UCB}_{r,t}(A_t) \le O\left(\frac{1}{Q} \sum_{t=1}^{T} q_t \epsilon_t + \left(\overline{q} + \frac{\overline{q}^2}{b} \right) \sqrt{T \ln(d+1)} \right).
    \label{eq-spf-upper-Tbound}
\end{equation}
Combine (\ref{eq-spf-upper-ucb}), (\ref{eq-spf-upper-taubound}), (\ref{eq-spf-upper-Tbound}) and the fact that $\text{OPT}_{\text{LP}} \ge \sum_{t=1}^{\tau - 1} q_t R_t$, the theorem holds.



\section{Two Specific Demand Models}

This section presents two specific demand models: the linearly increasing model and the AR(1) model. We provide asymptotically accurate prediction oracles for each of them. By using these advice, we derive explicit expressions of the upper bound for the prediction error, i.e., $\epsilon_t$ in Theorem \ref{thm-upperbound-1}, as well as the corresponding regret upper bound. Our analysis shows that the regret for both models are near optimal except for a $\Tilde{O}\left(\sqrt{\tau - 1} \right)$ term.

\subsection{Linearly Increasing Model}

The linearly increasing model is a commonly used demand model in many operational applications, such as inventory management and revenue management. It assumes that the demand for a product increases linearly over time. This model is appropriate when there is a "expanding market", that is, the demand for a product has an increasing trend due to 
promotions or other external factors. We consider the following linearly increasing model for $\{q_t\}_{t=1}^T$:
\begin{equation}
    q_t = \alpha + \beta t + \xi_t,
    \label{eq-timeseries-linear}
\end{equation}
where $\{\xi_t\}_{t=1}^T$ is a sequence of independent random variables, each of which is bounded and zero-mean, supported in $[-M,M]$, and $\alpha, \beta$ are unknown parameters satisfying $\alpha > M > 0$ and $\beta > 0$. This model can be applied to some expanding markets. We apply the Least Square method and provide the following prediction oracle:
\begin{equation}
    \hat{Q}_t = \sum_{s=1}^{t-1} q_s + \sum_{s=t}^T \left( \hat{\alpha}_t + \hat{\beta}_t s\right),
    \label{eq-timeseries-linear-prediction}
\end{equation}
where $\hat{\alpha}_t$, $\hat{\beta}_t$ are calculated via Least Square method as following:
\begin{equation*}
    \hat{\alpha}_t = \frac{\sum_{s=1}^{t-1} q_s - \hat{\beta}_t \sum_{s=1}^{t-1} s}{t-1}, \quad \hat{\beta}_t = \frac{(t-1)\sum_{s=1}^{t-1} s q_s - \left(\sum_{s=1}^{t-1} s \right) \left(\sum_{s=1}^{t-1} q_s \right)}{(t-1)\sum_{s=1}^{t-1} s^2 - \left(\sum_{s=1}^{t-1} s \right)^2}.
\end{equation*}

The following lemma provides a theoretical guarantee on the accuracy of the prediction oracle:

\begin{lemma}
    Consider the linearly increasing demand model (\ref{eq-timeseries-linear}) with prediction oracle $\hat{Q}_t$ as (\ref{eq-timeseries-linear-prediction}) on $Q = \sum_{t=1}^T q_t$, then with probability $\ge 1 - T \delta$,
    \begin{equation*}
        \left |Q - \hat{Q}_t \right | \le O \left(M T^2 \sqrt{ \log \left( \frac{1}{\delta}\right)}  \left(t-1 \right)^{-\frac{3}{2}}\right):= \epsilon_t.
    \end{equation*}
    \label{lem-timeseries-accuracy-linear}
\end{lemma}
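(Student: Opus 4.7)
The plan is to directly decompose the prediction error by substituting the true model $q_s = \alpha + \beta s + \xi_s$ into both $Q$ and $\hat{Q}_t$, so that
\[
Q - \hat{Q}_t \;=\; \sum_{s=t}^{T}\bigl[(\alpha - \hat{\alpha}_t) + (\beta - \hat{\beta}_t)s + \xi_s\bigr],
\]
which splits into an $\alpha$-error term of size $\asymp T\,|\hat{\alpha}_t - \alpha|$, a $\beta$-error term of size $\asymp T^{2}\,|\hat{\beta}_t - \beta|$, and a pure noise term $|\sum_{s=t}^T \xi_s|$. The first two are controlled by least-squares analysis; the last is a bounded martingale-type sum and is handled by Hoeffding's inequality, giving $O(M\sqrt{T\log(1/\delta)})$ with probability at least $1-\delta$.

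Next I would derive closed-form expressions for the estimator errors. Writing $\bar{s}_t = \frac{1}{t-1}\sum_{s=1}^{t-1} s$ and using the standard least-squares identity,
\[
\hat{\beta}_t - \beta \;=\; \frac{\sum_{s=1}^{t-1}(s - \bar{s}_t)\,\xi_s}{\sum_{s=1}^{t-1}(s - \bar{s}_t)^2}.
\]
The denominator is deterministic and equals $\tfrac{(t-1)((t-1)^2-1)}{12} = \Theta((t-1)^3)$. The numerator is a weighted sum of independent, zero-mean, bounded random variables with weights $|s - \bar{s}_t| \le t-1$, so Hoeffding's inequality yields, with probability at least $1-\delta$,
\[
\Bigl|\textstyle\sum_{s=1}^{t-1}(s-\bar{s}_t)\xi_s\Bigr| \;\le\; O\!\left(M\sqrt{(t-1)^{3}\log(1/\delta)}\right),
\]
hence $|\hat{\beta}_t - \beta| \le O\!\bigl(M\,(t-1)^{-3/2}\sqrt{\log(1/\delta)}\bigr)$. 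An analogous argument for $\hat{\alpha}_t - \alpha = (\bar{q}_t - \bar{s}_t \hat{\beta}_t) - \alpha = \bar{\xi}_t - \bar{s}_t(\hat{\beta}_t - \beta)$ yields $|\hat{\alpha}_t - \alpha| \le O\!\bigl(M\,(t-1)^{-1/2}\sqrt{\log(1/\delta)}\bigr)$, using Hoeffding on $\bar{\xi}_t$ together with the already-derived bound on $\hat{\beta}_t - \beta$.

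Plugging these three bounds back into the decomposition gives
\[
|Q - \hat{Q}_t| \;\le\; O\!\Bigl(MT(t-1)^{-1/2}\sqrt{\log(1/\delta)}\Bigr) + O\!\Bigl(MT^{2}(t-1)^{-3/2}\sqrt{\log(1/\delta)}\Bigr) + O\!\Bigl(M\sqrt{T\log(1/\delta)}\Bigr).
\]
Since $t-1 \le T$, the second term dominates, producing the claimed $O\!\bigl(MT^{2}\sqrt{\log(1/\delta)}(t-1)^{-3/2}\bigr)$ bound. The probability accounting uses a union bound over the three concentration events for each $t$, which after unioning over $t \in [T]$ gives the stated $1 - T\delta$ (absorbing constant factors into $\delta$ or $O(\cdot)$).

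The main obstacle I foresee is handling small $t$ cleanly. For $t \le 2$ the denominator $\sum (s-\bar{s}_t)^2$ vanishes and $\hat{\beta}_t$ is not defined, so the bound is vacuous (or we restrict to $t \ge 3$, where the $(t-1)^{-3/2}$ factor is anyway large); making the $O(\cdot)$ constant explicit would require care. A secondary subtlety is ensuring the Hoeffding bound for the numerator $\sum_{s=1}^{t-1}(s-\bar{s}_t)\xi_s$ is correctly normalized by the sum of squared weights $\Theta((t-1)^3)$ rather than by $t-1$, since a naive application would overshoot by a factor of $\sqrt{t-1}$; this is the one place a careful calculation is needed rather than an off-the-shelf invocation.
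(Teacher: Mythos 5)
Your proposal is correct and follows essentially the same route as the paper: decompose $Q-\hat{Q}_t$ into an $\hat\alpha_t$-error term, an $\hat\beta_t$-error term, and the residual noise $\sum_{s=t}^T\xi_s$, then control each via Hoeffding using the closed-form least-squares error (your centered form $\sum(s-\bar s_t)\xi_s/\sum(s-\bar s_t)^2$ is exactly the paper's explicit weighted sum), with the denominator of order $(t-1)^3$ driving the $(t-1)^{-3/2}$ rate. The paper likewise restricts the concentration step to $t\ge 4$ and absorbs the constant-factor slack in the union bound, so your noted caveats match its treatment.
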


Lemma \ref{lem-timeseries-accuracy-linear} is proved in Appendix F. Combining with the regret upper bound for general prediction upper bound $\epsilon_t$ (Theorem \ref{thm-upperbound-1}), the high-probability regret upper bound under the linearly increasing model (\ref{eq-timeseries-linear}) is provided as follow:

\begin{theorem}
Consider the model under the linearly increasing demand model (\ref{eq-timeseries-linear}), and the DM is provided with predictions $\hat{Q}_t$ as (\ref{eq-timeseries-linear-prediction}) for all $t \in [T]$. Take $\overline{q} = \alpha + \beta T + M$. Assume $b = \Theta(\overline{q})$, then with probability $\ge 1 - T \delta$, the OA-UCB algorithm satisfies
\begin{equation*}
    \left(\frac{1}{Q} + \frac{1}{B} \right) \sum_{t=1}^{\tau - 1} q_t \epsilon_t = O \left(M\left(1 + \frac{\overline{q}}{b}\right) \sqrt{(\tau-1) \log\left( \frac{1}{\delta}\right)} \right) = \Tilde{O}(M\sqrt{\tau - 1})
\end{equation*}
Thus, with probability $\ge 1 - 3KTd\delta - T\delta$, the OA-UCB algorithm induces regret 
\begin{equation*}
        \text{OPT}_{\text{LP}} - \sum_{t=1}^{\tau - 1} q_t R_t \le \Tilde{O} \left(\text{OPT}_{\text{LP}} \sqrt{\frac{\overline{q} K}{B}} + 
    \sqrt{\overline{q} K \text{OPT}_{\text{LP}} } + (M+\overline{q})\sqrt{\tau - 1} \right).
\end{equation*}
\label{thm-upperbound-linear}
\end{theorem}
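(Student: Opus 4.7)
The plan is to apply Lemma~\ref{lem-timeseries-accuracy-linear} to obtain the explicit error sequence $\{\epsilon_t\}$, to control the prediction-error contribution $(\tfrac{1}{Q}+\tfrac{1}{B})\sum_{t=1}^{\tau-1}q_t\epsilon_t$ that appears in Theorem~\ref{thm-upperbound-1}, and then to substitute back into that bound.

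First I would take a union bound of Lemma~\ref{lem-timeseries-accuracy-linear} over $t\in[T]$, so that with probability $\ge 1-T\delta$, $|Q-\hat{Q}_t|\le \epsilon_t = O\!\bigl(MT^{2}\sqrt{\log(1/\delta)}\,(t-1)^{-3/2}\bigr)$ holds for every $t\ge 2$; the $t=1$ contribution will be absorbed using the trivial bound $\epsilon_1\le 2\overline{q}T$, since $(M+\overline{q})\sqrt{\tau-1}$ already dominates any constant-in-$\tau$ term for $\tau\ge 2$. Under the linear demand model with $\alpha>M>0$, we also have $q_t\le \alpha+\beta t+M$ almost surely and $Q\ge \sum_{t=1}^{T}(\alpha-M+\beta t) = \Omega(T(\alpha+\beta T)) = \Omega(\overline{q}T)$; by assumption, $B=bT=\Theta(\overline{q}T)$.

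The core computation is to split
\begin{equation*}
\sum_{t=2}^{\tau-1} q_t (t-1)^{-3/2} \;\le\; (\alpha+M)\sum_{t=2}^{\tau-1}(t-1)^{-3/2} + \beta \sum_{t=2}^{\tau-1} t\,(t-1)^{-3/2}.
\end{equation*}
Standard integral estimates give $\sum_{t=2}^{\tau-1}(t-1)^{-3/2}=O(1)$, and via $s=t-1$ together with $\sum_{s=1}^{\tau-2}s^{-1/2}\le 2\sqrt{\tau-2}$, also $\sum_{t=2}^{\tau-1} t(t-1)^{-3/2}=O(\sqrt{\tau-1})$. Combined with $\alpha+M+\beta\le \overline{q}$, this yields $\sum_{t=1}^{\tau-1} q_t \epsilon_t = O\!\bigl(MT^{2}\overline{q}\sqrt{(\tau-1)\log(1/\delta)}\bigr)$. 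Dividing by $Q=\Theta(\overline{q}T)$ and $B=bT$ gives
\begin{equation*}
\Bigl(\tfrac{1}{Q}+\tfrac{1}{B}\Bigr)\sum_{t=1}^{\tau-1} q_t \epsilon_t \;=\; O\!\Bigl(M\bigl(1+\tfrac{\overline{q}}{b}\bigr)\sqrt{(\tau-1)\log(1/\delta)}\Bigr) \;=\; \Tilde{O}(M\sqrt{\tau-1}),
\end{equation*}
which is exactly the first displayed claim of the theorem.

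Finally I would substitute this into Theorem~\ref{thm-upperbound-1}. The OCO term simplifies to $(\overline{q}+\overline{q}^{2}/b)\sqrt{(\tau-1)\ln(d+1)}=\Tilde{O}(\overline{q}\sqrt{\tau-1})$ since $b=\Theta(\overline{q})$, and combining it with the prediction-error contribution produces the $\Tilde{O}((M+\overline{q})\sqrt{\tau-1})$ term in the statement; a union bound over the failure events of Theorem~\ref{thm-upperbound-1} (probability $3KTd\delta$) and of Lemma~\ref{lem-timeseries-accuracy-linear} (probability $T\delta$) yields the overall probability $1-3KTd\delta-T\delta$. The main obstacle I foresee is the interplay of $\sum_t t(t-1)^{-3/2}=O(\sqrt{\tau-1})$ with the linear growth $q_t=\Theta(\beta t)$: it is precisely the cancellation between the decay $\epsilon_t\sim t^{-3/2}$ and the growth $q_t\sim t$ that forces the prediction contribution to scale as $\sqrt{\tau-1}$ rather than $\tau-1$ (which is what the naive bound $q_t\le\overline{q}$ would deliver), and one must shepherd the factor of $\overline{q}$ carefully through the $Q$ and $B$ denominators to land on the advertised $\Tilde{O}(M\sqrt{\tau-1})$ rate.
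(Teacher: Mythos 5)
Your overall route is the same as the paper's: invoke Lemma~\ref{lem-timeseries-accuracy-linear}, exploit the cancellation between $q_t=\Theta(\beta t)$ and $\epsilon_t\propto (t-1)^{-3/2}$ to get a $\sqrt{\tau-1}$ rate for $\sum_t q_t\epsilon_t$, divide by $Q=\Theta(\overline{q}T)$ and $B=bT$, and substitute into Theorem~\ref{thm-upperbound-1} with a union bound. However, there is a concrete bookkeeping error in the core computation that breaks the chain of implications. You correctly obtain
\begin{equation*}
\sum_{t\ge 2} q_t\epsilon_t = O\!\left(MT^2\sqrt{\log(1/\delta)}\left[(\alpha+M)\cdot O(1)+\beta\cdot O(\sqrt{\tau-1})\right]\right),
\end{equation*}
but you then collapse the bracket via $\alpha+M+\beta\le\overline{q}$ to claim $\sum_t q_t\epsilon_t=O(MT^2\overline{q}\sqrt{(\tau-1)\log(1/\delta)})$. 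Dividing \emph{that} quantity by $Q=\Theta(\overline{q}T)$ gives $O(MT\sqrt{(\tau-1)\log(1/\delta)})$, which is a factor of $T$ larger than the theorem's first display; the stated conclusion does not follow from your stated intermediate bound. The bound $\beta\le\overline{q}$ is too lossy: since $\overline{q}=\alpha+\beta T+M=\Theta(\beta T)$, one has $\beta=\Theta(\overline{q}/T)$, so the dominant term is $MT^2\cdot\beta\sqrt{\tau-1}=O(MT\,\overline{q}\sqrt{\tau-1})$ (one power of $T$, not two), and only then does division by $Q=\Theta(\overline{q}T)$ and $B=bT$ yield $O(M(1+\overline{q}/b)\sqrt{(\tau-1)\log(1/\delta)})$ as claimed. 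This is exactly the accounting the paper performs, and the fix is a one-line correction; the remainder of your argument (the integral estimates $\sum_{t}(t-1)^{-3/2}=O(1)$ and $\sum_t t(t-1)^{-3/2}=O(\sqrt{\tau-1})$, the treatment of the small-$t$ terms, the simplification of the OCO term under $b=\Theta(\overline{q})$, and the union bound giving $1-3KTd\delta-T\delta$) is sound and matches the paper.
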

Theorem \ref{thm-upperbound-linear} is proved in Appendix F. Clearly in this case, the regret lower and upper bound are nearly matched except for a $\Tilde{O} \left( (M+\overline{q})\sqrt{\tau - 1} \right)$ term, which shows the near-optimality of our algorithm.

\subsection{AR(1) Model}

The Autoregressive Moving Average (ARMA) model is a widely used time series model in literature to forecast demand in various applications such as inventory management, production planning, and supply chain management \cite{Aviv03}. The ARMA model assumes that the demand in a time period is a function of the demand in previous time periods and a random error term. The ARMA model captures both the autoregressive and moving average effects of a time series. The autoregressive effect refers to the relationship between the current demand and past demand values, while the moving average effect refers to the relationship between the current demand and past error terms. The ARMA model can be expressed as ARMA($p$, $q$), where $p$ and $q$ are the orders of the autoregressive and moving average parts, respectively. In this section, we focus on a special case of ARMA model, AR(1), where $p = 1$ and $q = 0$. The AR(1) model for $\{q_t\}^T_{t=1}$ is defined as 
\begin{equation}
    q_t = \alpha + \beta q_{t-1} + \xi_t.
    \label{eq-timeseris-ar1}
\end{equation}
where $\{\xi_t\}_{t=1}^T$ is a sequence of independent random variables, each of which follows a zero-mean $\sigma^2-$subgaussian distribution, and $\alpha$, $\beta$, $\sigma$ are unknown parameters satisfying $\alpha>0$, $|\beta| \in [-M,M]$, where $M < 1$.
Denote $\boldsymbol{\gamma} = (\alpha,\beta)$. Motivated by \cite{bacchiocchi2022autoregressive}, we construct the parameter estimation $\hat{\boldsymbol{\gamma}}_t$ for $\boldsymbol{\gamma}$ by solving the following ridge regression problem:
\begin{equation*}
    \hat{\boldsymbol{\gamma}}_t = \left(\hat{\alpha}_t,\hat{\beta}_t \right)=  \mathop{\arg\min}_{\boldsymbol{\gamma}'=(\alpha',\beta') \in \mathbb{R}^2} \left \{\sum_{s=1}^{t-1} \left(q_s - \alpha' - \beta' q_{s-1} \right)^2 + \lambda \|\boldsymbol{\gamma}'\|_2^2 \right \} = \boldsymbol{V}_t^{-1} \boldsymbol{\zeta}_t,
\end{equation*}
where
\begin{equation*}
    \boldsymbol{V}_t = \lambda I_2 + \sum_{s=1}^{t-1} \boldsymbol{z}_{s-1} \boldsymbol{z}_{s-1}^{\top}, \quad \boldsymbol{\zeta}_t = \sum_{s=1}^{t-1} \boldsymbol{z}_{s-1} q_{s}, \quad \boldsymbol{z}_s = (1,q_s),\quad q_0 = 0.
\end{equation*}

Now we provide the following prediction oracle of $Q$:
\begin{equation}
    \hat{Q}_t = \sum_{s=1}^{t-1} q_s + \frac{\hat{\beta}_t - \hat{\beta}_t^{T-t+1}}{1 - \hat{\beta}_t} q_{t-1} + \hat{\phi}_t \left(T-t+1 - \hat{\beta}_t + \hat{\beta}_t^{T-t+2} \right),
    \label{eq-timeseries-ar1-prediction}
\end{equation}
where $\hat{\phi}_t = \frac{\hat{\alpha}_t}{1-\hat{\beta}_t}$. The following lemma provides the theoretical guarantee for the accuracy of the prediction oracle:

\begin{lemma}
Consider the AR(1) model (\ref{eq-timeseris-ar1}) with prediction oracle $\hat{Q}_t$ as (\ref{eq-timeseries-ar1-prediction}) on $Q = \sum_{t=1}^T q_t$. Assume the parameters $\alpha$, $\beta$, $\sigma$, $\delta$, $q_1$ satisfies the following condition:
\begin{equation}
    \min \left \{q_1,\frac{\alpha}{1 - \beta} \right \} > \sigma \sqrt{\frac{2}{1 - \beta^2} \log \left(\frac{2}{\delta} \right)},
    \label{eq-timeseries-ar1-assumption}
\end{equation}
then with probability $\ge 1 - 3 T \delta$, $\underline{q} \le q_t \le \overline{q}$ for all $t$, and
\begin{equation*}
    \left|\hat{Q}_t - Q \right| \le O \left(\frac{T-t+1}{\sqrt{t-1}} \cdot \frac{\alpha A}{\underline{q}(1-M)^2} + \sqrt{T-t+1} \cdot \frac{\sigma}{1 - \beta} \sqrt{\log \left(\frac{1}{\delta} \right)} \right) : = \epsilon_t,
\end{equation*}
where
\begin{equation*}
    \overline{q} = \max \left \{q_1,\frac{\alpha}{1 - \beta} \right \} + \sigma \sqrt{\frac{2}{1 - \beta^2} \log \left(\frac{2}{\delta} \right)}, \quad \underline{q} = \min \left \{q_1,\frac{\alpha}{1 - \beta} \right \} - \sigma \sqrt{\frac{2}{1 - \beta^2} \log \left(\frac{2}{\delta} \right)},
\end{equation*}
and
\begin{equation*}
    A = O \left(\sqrt{\log \left(\frac{1}{\delta} \right)} + \sqrt{\log \left(T \right)} \right).
\end{equation*}
\label{lem-timeseries-accuracy-ar1}
\end{lemma}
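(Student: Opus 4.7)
\textbf{Proof Proposal for Lemma \ref{lem-timeseries-accuracy-ar1}.}

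The plan is to establish the two claims separately: (i) boundedness of $\{q_t\}_{t=1}^T$, and (ii) the explicit prediction error bound. For (i), I would unroll the AR(1) recursion as $q_t = \alpha \sum_{k=0}^{t-2}\beta^k + \beta^{t-1}q_1 + \sum_{k=2}^{t}\beta^{t-k}\xi_k$, so that $\mathbb{E}[q_t]$ is a convex combination of $q_1$ and $\phi = \alpha/(1-\beta)$, hence lies in $[\min\{q_1,\phi\}, \max\{q_1,\phi\}]$. The residual $q_t - \mathbb{E}[q_t]$ is a weighted sum of independent $\sigma^2$-subgaussians with variance proxy bounded by $\sigma^2/(1-\beta^2)$, so a single-step Hoeffding-type tail bound gives $|q_t - \mathbb{E}[q_t]| \le \sigma\sqrt{(2/(1-\beta^2))\log(2/\delta)}$ with probability $\ge 1-\delta$. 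A union bound over $t \in [T]$, combined with condition (\ref{eq-timeseries-ar1-assumption}) (which guarantees the lower endpoint remains positive), yields $q_t \in [\underline{q}, \overline{q}]$ for all $t$ simultaneously, on an event of probability $\ge 1 - T\delta$.

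For (ii), I first decompose $Q - \hat{Q}_t = \sum_{s=t}^{T}q_s - (\hat{Q}_t - \sum_{s=1}^{t-1}q_s)$, and split the right-hand side into a \emph{deterministic parameter-error} term and a \emph{stochastic noise} term. Writing $k$-step predictions from $q_{t-1}$ recursively, the conditional mean is $\mathbb{E}[q_{t+k-1}\mid q_{t-1}] = \phi(1-\beta^k) + \beta^k q_{t-1}$, while the oracle replaces $(\alpha,\beta,\phi)$ by $(\hat{\alpha}_t,\hat{\beta}_t,\hat{\phi}_t)$ in the summed-over-$k$ version. Subtracting, the deterministic error is $\sum_{k=1}^{T-t+1}[(\hat{\phi}_t - \phi)(1-\hat{\beta}_t^k) + \phi(\beta^k - \hat{\beta}_t^k) + (\hat{\beta}_t^k-\beta^k)q_{t-1}]$. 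Using $|\hat{\beta}_t^k - \beta^k| \le k M^{k-1}|\hat{\beta}_t - \beta|$ and $\sum_{k\le T-t+1} k M^{k-1} = O(1/(1-M)^2)$, together with the algebraic identity $\hat{\phi}_t-\phi = [(1-\beta)(\hat{\alpha}_t-\alpha)+\alpha(\hat{\beta}_t-\beta)]/[(1-\hat{\beta}_t)(1-\beta)]$, the deterministic error is controlled by $(T-t+1) \cdot O(|\hat{\alpha}_t-\alpha|+\alpha|\hat{\beta}_t-\beta|/(1-M)^2)$.

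To bound $|\hat{\alpha}_t - \alpha|$ and $|\hat{\beta}_t - \beta|$, I would invoke the self-normalized martingale concentration of Abbasi-Yadkori, P\'al and Szepesv\'ari (a standard ingredient in linear bandit analyses and the one referenced via \cite{bacchiocchi2022autoregressive}) applied to the martingale $\{\xi_s\}$ with features $z_{s-1}$. This yields $\|\hat{\boldsymbol{\gamma}}_t - \boldsymbol{\gamma}\|_{\boldsymbol{V}_t} \le A$ with $A = O(\sigma\sqrt{\log(1/\delta)+\log T})$ on an event of probability $\ge 1-\delta$. Combining this with the lower bound $\lambda_{\min}(\boldsymbol{V}_t) \ge \Omega((t-1)\underline{q}^2)$ (which follows, on the good event from part (i), from the fact that the features $(1,q_{s-1})$ have $q_{s-1} \ge \underline{q}$, controlling the smallest eigenvalue of the Gram matrix) produces $|\hat{\alpha}_t-\alpha|+|\hat{\beta}_t-\beta| = O(A/(\underline{q}\sqrt{t-1}))$, which feeds into the bound above to yield the first term of $\epsilon_t$. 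The stochastic noise term $\sum_{s=t}^T (q_s - \mathbb{E}[q_s\mid q_{t-1}])$ can be rewritten by Fubini as $\sum_{u=t}^T \xi_u \cdot (1-\beta^{T-u+1})/(1-\beta)$, which is subgaussian with variance proxy $\le (T-t+1)\sigma^2/(1-\beta)^2$, giving by a subgaussian tail bound the second term of $\epsilon_t$. A final union bound over the three high-probability events ($O(T\delta)$ total) completes the argument.

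The main technical obstacle will be the lower bound on $\lambda_{\min}(\boldsymbol{V}_t)$: the features $(1,q_{s-1})$ are serially correlated, and a naive Gram-matrix argument does not immediately separate the two coordinates, so some care is needed to show that the variability of $q_{s-1}$ (inherited from the AR(1) noise on the good event) is sufficient to produce a non-degenerate second eigendirection. An alternative is to argue the bound $|\hat{\boldsymbol{\gamma}}_t-\boldsymbol{\gamma}|_2 = O(A/(\underline{q}\sqrt{t-1}))$ directly through the explicit $2\times 2$ inverse of $\boldsymbol{V}_t$, using the positivity of the $q_s$'s on the good event. The rest of the argument is routine, but the book-keeping of geometric factors $1/(1-M)^2$ and the aggregation of failure probabilities must be done carefully.
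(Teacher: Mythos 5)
Your overall architecture coincides with the paper's. Part (i) is proved exactly as in the paper: unroll the recursion so that $\mathbb{E}[q_t]$ is a convex combination of $q_1$ and $\alpha/(1-\beta)$, bound the geometric noise sum as a zero-mean subgaussian with variance proxy at most $\sigma^2/(1-\beta^2)$, and union bound over $t$. For part (ii) you use the same decomposition into a parameter-estimation error and a future-noise term, the same self-normalized confidence bound $\|\hat{\boldsymbol{\gamma}}_t-\boldsymbol{\gamma}\|_{\boldsymbol{V}_t}\le A$ imported from \cite{bacchiocchi2022autoregressive}, and the same subgaussian treatment of $\sum_{s=t}^T \frac{1-\beta^{T-s+1}}{1-\beta}\xi_s$, which yields the second term of $\epsilon_t$.

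The one step where you diverge is the conversion of $\|\hat{\boldsymbol{\gamma}}_t-\boldsymbol{\gamma}\|_{\boldsymbol{V}_t}\le A$ into the coordinate-wise rates $|\hat{\alpha}_t-\alpha|,\,|\hat{\beta}_t-\beta| = O\bigl(A/(\underline{q}\sqrt{t-1})\bigr)$, and this is where your proposal has a genuine gap, which to your credit you flag yourself. The claim $\lambda_{\min}(\boldsymbol{V}_t)=\Omega((t-1)\underline{q}^2)$ does not follow from $q_{s-1}\ge\underline{q}$: if the $q_s$ were all (nearly) equal to a constant $c\ge\underline{q}$, the Gram matrix $\sum_s \boldsymbol{z}_{s-1}\boldsymbol{z}_{s-1}^\top$ would be (nearly) rank one and $\lambda_{\min}(\boldsymbol{V}_t)$ would be of order $\lambda$, independent of $t$. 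Positivity gives no control over the collinearity of the features $(1,q_{s-1})$; to make $\lambda_{\min}$ grow linearly in $t$ one would need anti-concentration of the empirical variance of $\{q_{s-1}\}$, which subgaussianity of $\xi_s$ does not supply (a $\sigma^2$-subgaussian noise may be identically zero). Your explicit-inverse alternative runs into the same wall: Cauchy--Schwarz only gives $\det\boldsymbol{V}_t\ge\lambda\bigl(\lambda+t-1+\sum_s q_{s-1}^2\bigr)$, which is linear rather than quadratic in $t$, so it does not recover the $1/\sqrt{t-1}$ rate either. The paper avoids the $\lambda_{\min}$ route entirely: it reads off the two diagonal entries of $\boldsymbol{V}_t$ separately, asserting $(\lambda+t-1)\,|\hat{\alpha}_t-\alpha|^2\le\|\hat{\boldsymbol{\gamma}}_t-\boldsymbol{\gamma}\|_{\boldsymbol{V}_t}^2$ and $\bigl(\lambda+\sum_s q_{s-1}^2\bigr)\,|\hat{\beta}_t-\beta|^2\le\|\hat{\boldsymbol{\gamma}}_t-\boldsymbol{\gamma}\|_{\boldsymbol{V}_t}^2$, and then uses $\sum_s q_{s-1}^2\ge (t-2)\underline{q}^2$. (That diagonal-reading inequality is itself not automatic for a non-diagonal PSD matrix, so even adopting the paper's device one should supply a justification; but it is the step the paper relies on, whereas your $\lambda_{\min}$ route, as written, cannot be completed.) The remaining bookkeeping in your deterministic-error decomposition, including the $\sum_k kM^{k-1}=O(1/(1-M)^2)$ factor and the identity for $\hat{\phi}_t-\phi$, is fine and in fact slightly sharper than the paper's cruder bounds.
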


Lemma \ref{lem-timeseries-accuracy-ar1} is proved in Appendix F. The assumption (\ref{eq-timeseries-ar1-assumption}) is to assure the positivity of $q_t$ with high probability. Combine Theorem \ref{thm-upperbound-1}, the high-probability regret upper bound under the AR(1) model is provided as follow:

\begin{theorem}
Consider the model under AR(1) demand model (\ref{eq-timeseris-ar1}), that is provided with predictions $\hat{Q}_t$ as (\ref{eq-timeseries-ar1-prediction}) for all $t\in [T]$. Assume the parameters $\alpha$, $\beta$, $\sigma$, $\delta$, $q_1$ satisfies the following condition:
    \begin{equation*}
    \min \left \{q_1,\frac{\alpha}{1 - \beta} \right \} > \sigma \sqrt{\frac{2}{1 - \beta^2} \log \left(\frac{2}{\delta} \right)},
\end{equation*}
then with probability $\ge 1 - 3 T\delta$, $\underline{q} \le q_t \le \overline{q}$ for all $t$, and OA-UCB satisfies
\begin{equation*}
    \left(\frac{1}{Q} + \frac{1}{B} \right) \sum_{t=1}^{\tau - 1} q_t \epsilon_t = O \left(\Gamma \sqrt{\tau - 1} \right),
\end{equation*}
where
\begin{equation*}
    \Gamma = \left(\frac{1}{\underline{q}} + \frac{1}{b} \right) \left(\frac{\alpha}{\underline{q}(1-M)^2} \left( \sqrt{\log \left(\frac{1}{\delta} \right)} + \sqrt{\log \left(T \right)}  \right)+ \frac{\sigma}{1 - \beta} \sqrt{\log \left(\frac{1}{\delta} \right)} \right),
\end{equation*}
and
\begin{equation*}
    \overline{q} = \max \left \{q_1,\frac{\alpha}{1 - \beta} \right \} + \sigma \sqrt{\frac{2}{1 - \beta^2} \log \left(\frac{2}{\delta} \right)}, \quad \underline{q} = \min \left \{q_1,\frac{\alpha}{1 - \beta} \right \} - \sigma \sqrt{\frac{2}{1 - \beta^2} \log \left(\frac{2}{\delta} \right)}.
\end{equation*}
Thus, with probability $\ge 1 - 3 KTd\delta - 3T \delta$, the OA-UCB algorithm induces regret
\begin{equation*}
        \text{OPT}_{\text{LP}} - \sum_{t=1}^{\tau - 1} q_t R_t \le \Tilde{O} \left( \text{OPT}_{\text{LP}} \sqrt{\frac{\overline{q} K}{B}} + 
    \sqrt{\overline{q} K \text{OPT}_{\text{LP}} } + \left(\Gamma + \overline{q} \right) \sqrt{\tau - 1} \right).
    \end{equation*}
\label{thm-upperbound-ar1}
\end{theorem}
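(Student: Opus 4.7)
}

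The plan is to specialize the general regret bound of Theorem \ref{thm-upperbound-1} to the AR(1) demand setting by substituting the concrete prediction-error bound $\epsilon_t$ from Lemma \ref{lem-timeseries-accuracy-ar1}, and then carefully controlling $\sum_{t=1}^{\tau-1} q_t \epsilon_t$. First I would invoke Lemma \ref{lem-timeseries-accuracy-ar1}, which under the stated parameter condition (\ref{eq-timeseries-ar1-assumption}) simultaneously yields, on an event of probability at least $1-3T\delta$, (i) the two-sided bound $\underline{q} \le q_t \le \overline{q}$ for every $t$, and (ii) the explicit prediction-error bound
\[
\epsilon_t = O\!\left(\frac{T-t+1}{\sqrt{t-1}}\cdot \frac{\alpha A}{\underline{q}(1-M)^2} \;+\; \sqrt{T-t+1}\cdot \frac{\sigma}{1-\beta}\sqrt{\log(1/\delta)}\right),
\]
with $A = O(\sqrt{\log(1/\delta)}+\sqrt{\log T})$. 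These two consequences are exactly what is needed to invoke Theorem \ref{thm-upperbound-1} with this explicit $\epsilon_t$.

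Next I would bound $\sum_{t=1}^{\tau-1} q_t \epsilon_t$ by splitting $\epsilon_t$ into its two summands and uniformly using $q_t \le \overline{q}$. For the first summand, I would use $\sum_{t=1}^{\tau-1} \frac{T-t+1}{\sqrt{t-1}} \le T\sum_{t=2}^{\tau-1} \frac{1}{\sqrt{t-1}} = O(T\sqrt{\tau-1})$ (handling $t=1$ via the trivial bound $\epsilon_1 \le \overline{q}T$, which costs only an additive $O(\overline{q}T)$ absorbed in the leading constants). For the second summand, changing variables $u=T-t+1$ gives $\sum_{t=1}^{\tau-1}\sqrt{T-t+1} \le (\tau-1)\sqrt{T}$, and I then invoke $(\tau-1)/\sqrt{T} \le \sqrt{\tau-1}$ because $\tau-1 \le T$. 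Combining these two estimates produces
\[
\sum_{t=1}^{\tau-1} q_t \epsilon_t \;\le\; O\!\left(\overline{q}\,T\sqrt{\tau-1}\right)\cdot\!\left[\frac{\alpha A}{\underline{q}(1-M)^2} + \frac{\sigma\sqrt{\log(1/\delta)}}{1-\beta}\right].
\]
Using $Q \ge \underline{q}T$ and $B = bT$ gives $\tfrac{1}{Q}+\tfrac{1}{B}\le \tfrac{1}{T}\!\left(\tfrac{1}{\underline{q}}+\tfrac{1}{b}\right)$, which kills the $T$ factor and delivers the intermediate claim $(\tfrac{1}{Q}+\tfrac{1}{B})\sum_{t=1}^{\tau-1} q_t\epsilon_t = O(\Gamma\sqrt{\tau-1})$ (with $\overline{q}$ absorbed into the hidden constants of $\Gamma$, consistent with the $\tilde{O}$ convention used here).

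Finally, I would plug this estimate into the regret bound of Theorem \ref{thm-upperbound-1}, whose remaining terms directly yield $\tilde{O}\!\bigl(\mathrm{OPT}_{\mathrm{LP}}\sqrt{\overline{q}K/B} + \sqrt{\overline{q}K\,\mathrm{OPT}_{\mathrm{LP}}}\bigr)$ from (\ref{eq:regret_1st}), while the OCO term $(\overline{q}+\overline{q}^2/b)\sqrt{(\tau-1)\ln(d+1)}$ contributes the $\overline{q}\sqrt{\tau-1}$ summand, together producing the stated $(\Gamma+\overline{q})\sqrt{\tau-1}$ residual. A union bound over the $1-3T\delta$ event from Lemma \ref{lem-timeseries-accuracy-ar1} and the $1-3KTd\delta$ event from Theorem \ref{thm-upperbound-1} yields the claimed failure probability $3KTd\delta+3T\delta$.

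I expect the main obstacle to be \emph{not} the algebra but the careful bookkeeping in step two: the two components of $\epsilon_t$ have different $t$-dependence (one decaying in $\sqrt{t-1}$, one growing in $\sqrt{T-t+1}$), and one must verify that each leads to the same $\sqrt{\tau-1}$ scaling after multiplying by $q_t$ and dividing by $Q$ and $B$. In particular, the step $(\tau-1)/\sqrt{T}\le \sqrt{\tau-1}$ is the crucial inequality that equalizes these two contributions; without the assumption $\tau-1 \le T$, the second term would dominate with a worse rate. Once this dimensional analysis is performed, the remainder of the proof reduces to a direct invocation of Theorem \ref{thm-upperbound-1}.
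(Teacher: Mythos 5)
Your proposal is correct and follows essentially the same route as the paper: invoke Lemma \ref{lem-timeseries-accuracy-ar1} for the event $\{\underline{q}\le q_t\le\overline{q}\}$ and the explicit $\epsilon_t$, split $\epsilon_t$ into its two summands, bound the resulting sums by $O(T\sqrt{\tau-1})$ and $(\tau-1)\sqrt{T}\le T\sqrt{\tau-1}$ respectively, divide through using $Q\ge\underline{q}T$ and $B=bT$, and plug the result into Theorem \ref{thm-upperbound-1} with a union bound. The only divergence is the bookkeeping of the $\overline{q}$ factor coming from $q_t\le\overline{q}$, which you carry explicitly while the paper silently drops it after its first display; your accounting is the more careful one, and the discrepancy affects only the precise form of $\Gamma$, not the $\sqrt{\tau-1}$ rate.
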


Theorem \ref{thm-upperbound-ar1} is proved in Appendix. Clearly in this case, the regret lower and upper bound are nearly matched except for a $\Tilde{O} \left( (\Gamma + \overline{q})\sqrt{\tau - 1} \right)$ term, which shows the near-optimality of our algorithm.

\section{Numerical Experiments}\label{sec:num}

In this section, we provide numerical results, and compare our algorithm with several existing benchmarks in literature. In the first part, the experiments are in the context of general BwK problem with only 1 resource, i.e. $d=1$. In the second part, we illustrate an application of our algorithm in a network revenue management problem with unknown demand distribution, similar as \cite{agrawal2019bandits}. 

\textbf{Demand sequence and Prediction oracle: }We apply AR(1) demand model to generate the demand sequence $\{q_t\}$ as introduced in the previous section. To achieve time-efficiency, we consider an alternative "power-of-two" policy for updating the prediction $\left\{\hat{Q}_t \right\}$: We only recompute $\hat{Q}_t$ as (\ref{eq-timeseries-ar1-prediction}) when $t = 2^k$ for some $k \in \mathbb{N}^+$, as shown in Algorithm \ref{alg-estimation}. The estimation error of Algorithm \ref{alg-estimation} in terms of additive gap is plotted in Figure \ref{fig-estimationerror}.

\begin{algorithm}[htb]
	\caption{Estimation Generation Policy}
    \hspace*{0.02in}{\bf Input:} Time step $t$, history observation $\{q_s\}_{s=1}^{t-1}$, previous estimation $\hat{Q}_{t-1}$. 
	\begin{algorithmic}[1]
	    \If{$t = 2^k$ for some $k \in \mathbb{N}^+$}
        \State Compute $\hat{Q}_t$ as (\ref{eq-timeseries-ar1-prediction}).
        \Else
        \State Set $\hat{Q}_t = \hat{Q}_{t-1}$.
        \EndIf
        \State \textbf{return} $\hat{Q}_t$.
	\end{algorithmic}
	\label{alg-estimation}
\end{algorithm}

\begin{figure}[htb]
\vskip -0.1in
\begin{center}
\centerline{\includegraphics[width=.5\columnwidth]{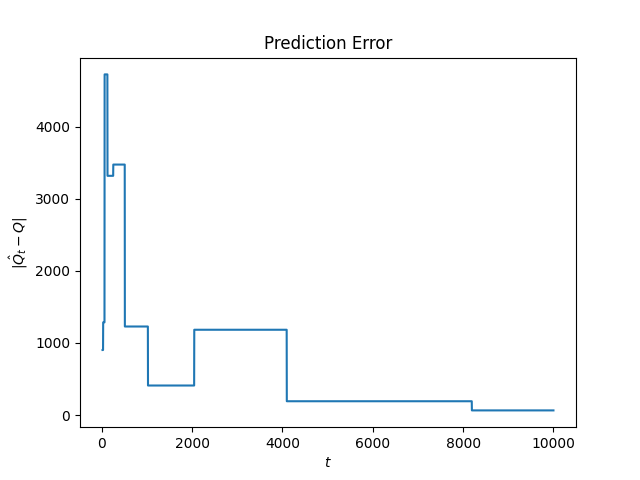}}
\vskip -0.15in
\caption{Estimation error}
\label{fig-estimationerror}
\end{center}
\vskip -0.3in
\end{figure}

\subsection{Bandits with Knapsack Problem with 1 resource}

\textbf{Benchmarks}: Our model can be viewed as a kind of "Non-stationary BwK" problem, if we view our "action" as "arm". Hence, we would like to compare our algorithm with other existing online algorithms design for BwK:

\begin{enumerate}
    \item PDB: The PrimalDualBwK algorithm in \cite{badanidiyuru2013bandits} (Algorithm 2), which is designed to solve stochastic BwK problems and has been shown to achieve optimality.
    \item AD-UCB: The UCB algorithm presented in  \cite{agrawal2014bandits} (Algorithm 2), which is designed to solve stochastic BwK problems and has been shown to achieve optimality.
    \item SW-UCB: The Sliding-Window UCB in \cite{liu2022non}, designed to solve a non-stationary BwK problems, where the amount of non-stationarity is limited by a variational budget. In implementing SW-UCB, we set the sliding window size according to the suggestion in \cite{liu2022non}, and we input the required non-stationarity measures by computing them from the ground truth $\{q_t\}^{T}_{t=1}$.
    \item EXP3++: The EXP3++.BwK in \cite{rangi2018unifying}, working in both stochastic and adversarial BwK setting in 1 resource case. 
\end{enumerate}
In the experiment, we simulate our algorithm and benchmarks on a family of instances, with $K=4$, $d=1$, $\alpha = 12$, $\beta = 0.5$, $\sigma=2$, $\boldsymbol{r} = (1,0.8,0.5,0.3)$, $\boldsymbol{c} = (0.95,0.7,0.4,0.2)$, and varies $b$ and $T$. Each arms's per-unit-demand outcome $(R_t,\{C_{t,i}\}^d_{i=1})$ follows the standard Gaussian distribution truncated in $[0,1]^{d+1}$, which has mean denoted as $(\boldsymbol{r},\boldsymbol{c})$. We perform two groups of the experiment: In the first group, we fix time horizon $T = 10000$ and varies inventory level $b = 10, 15, 20$; In the second group, we fix inventory level $b = 15$ and varies horizon length $T$ from 5000 to 20000. Then, for each fixed $(b,T)$, we simulate each algorithm 100 times with demand volume sequence $\{q_t\}_{t=1}^T$, and compute the regret and competitve ratio (CR) based on the sample average and standard error, where the competitive ratio is computed as the ratio of the total cumulative reward $\sum_{t=1}^{\tau-1} q_t r_t$ obtained by the algorithm to the offline benchmark $\text{OPT}_{\text{LP}}$.

Table \ref{table-regretcr-com-b} compares the regret/competitive ratio of each algorithm on different inventory level, where the average $\pm$ standard error is reported and the best performed algorithm in each setting has been bold; Figure \ref{fig-regret-com} and \ref{fig-cr-com} plot the regret/competitive ratio of each algorithm on different horizon lengths, respectively. The superiority in numerical performance for OA-UCB does not mean that our algorithm is strictly superior to the baselines. Indeed, our algorithm OA-UCB receives online advice, while the benchmarks do not. The numerical results instead indicate the benefit of prediciting the underlying non-stationary demand sequence, and showcase how a suitably designed algoirhtm such as OA-UCB could reap the benefit of predictions. In addition, we remark that the sliding window UCB algorithm proposed by \cite{liu2022non} is designed to handle arbitrarily changing mean outcome distributions, subject to constraints on the amount of temporal variations. On the one hand, the sliding window UCB algorithm has been shown to outperform the stationary BwK benchmarks in piece-wise stationary models where the mean outcome distributions change abruptly \citep{liu2022non}. On the other hand, in our non-stationary scaling setting,  the per-demand-unit mean outcomes $r(a), c(a, 1), \ldots, c(a, d)$ are time stationary for each arm $a$. Hence, historical data are useful for estimating these mean outcomes, which explain why the stationary benchmark could appear to out-perform sliding window UCB.

\begin{table}[htb]
\centering
\begin{tabular}{|cccccc|}
\hline
       & OA-UCB                    & PDB                & AD-UCB         & SW-UCB             & EXP3++            \\ \hline
\multicolumn{6}{|l|}{$b = 10$}                                                                                         \\
Regret & \textbf{4807 $\pm$ 313} & 42874 $\pm$ 382  & 20298 $\pm$ 151 & 51992 $\pm$ 206  & 8161 $\pm$ 327 \\
CR     & \textbf{0.961 $\pm$ 0.003}    & 0.654 $\pm$ 0.003      & 0.836 $\pm$ 0.003     & 0.581 $\pm$ 0.002      & 0.934 $\pm$ 0.003     \\ \hline
\multicolumn{6}{|l|}{$b = 15$}                                                                                         \\
Regret & \textbf{7034 $\pm$ 375} & 92944 $\pm$ 431  & 18725 $\pm$ 183 & 101916 $\pm$ 196 & 29368 $\pm$ 526 \\
CR     & \textbf{0.960 $\pm$ 0.002}    & 0.466 $\pm$ 0.002      & 0.892 $\pm$ 0.001     & 0.414 $\pm$ 0.001      & 0.831 $\pm$ 0.003     \\ \hline
\multicolumn{6}{|l|}{$b = 20$}                                                                                         \\
Regret & \textbf{9314 $\pm$ 182} & 136547 $\pm$ 502 & 10658 $\pm$ 215 & 145539 $\pm$ 165 & 72962 $\pm$ 460 \\
CR     & \textbf{0.957 $\pm$ 0.001}    & 0.372 $\pm$ 0.002      & 0.951 $\pm$ 0.001     & 0.331 $\pm$ 0.001      & 0.665 $\pm$ 0.002     \\ \hline
\end{tabular}
\caption{Regret/Competitive Ratio Comparison among algorithms varying inventory level $b$ under BwK setting.}
\label{table-regretcr-com-b}
\end{table}

\begin{figure}[htb]
    \centering
    \begin{subfigure}{.48\textwidth}
		\centering
	    \includegraphics[width=1\textwidth]{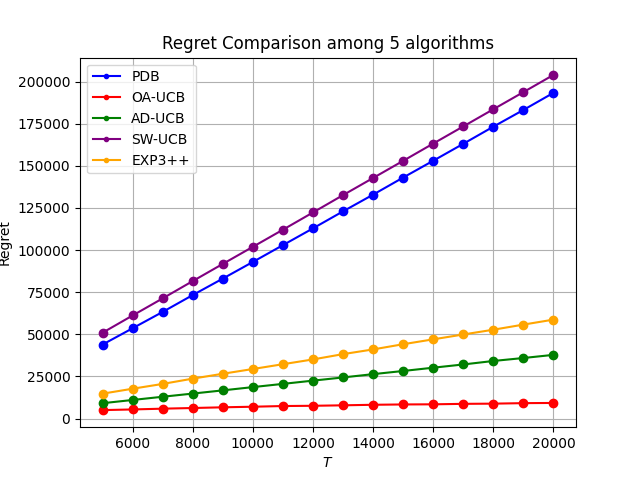}
	    \caption{Regret}
	    \label{fig-regret-com}
	\end{subfigure}
	\begin{subfigure}{.48\textwidth}
		\centering
		\includegraphics[width=1\textwidth]{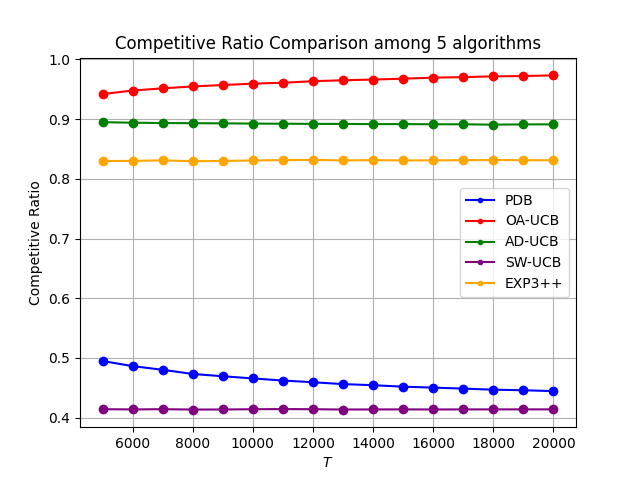}
		\caption{Competitive Ratio}
		\label{fig-cr-com}
	\end{subfigure}
    \caption{Regret/Competitive Ratio Comparison among algorithms varying time horizon $T$}
\end{figure}

In order to better explain the benefit of receiving a "good" prediction further, we conduct two additional groups of experiment. Table \ref{table-prediction-com-incre} and Table \ref{table-prediction-com-decre} compare the regret and competitive ratio achieved by our algorithm OA-UCB under different choices of prediction oracles, as shown in the first row in Tables \ref{table-prediction-com-incre}, \ref{table-prediction-com-decre}. The ALG \ref{alg-estimation} is the prediction we provide in Algorithm \ref{alg-estimation}, which we have shown that the accuracy is "Good", in the sense that $|\hat{Q}_t - Q|$ tends to 0 as $t$ increases. "$xT$" is the static prediction oracle which returns $\hat{Q}_t = Q + x T$ for all $t\in \{1, \ldots, T\}$. We vary $x$ over the values in $\{5,10,15,20\}$ in Table \ref{table-prediction-com-incre}, and over the values in $\{-5,-10,-15,-20\}$ in Table \ref{table-prediction-com-decre}. In "$xT$" where $\hat{Q}_t = Q + x T$ for all $t$, we have $\epsilon_t = |x| T$ for all $t$. The instance and experiment setting are the same as Table \ref{table-regretcr-com-b}. 

Table \ref{table-prediction-com-incre} compares the ALG \ref{alg-estimation} and 4 "aggressive" prediction oracles, i.e. $x > 0$ and $\hat{Q}_t > Q$, showing that when $|x|$ becomes larger, i.e. the prediction becomes worse, the performance of OA-UCB becomes worse, especially in the small inventory case of $b=10$. However, note that the ALG \ref{alg-estimation} may not be the best in small inventory case. This is because in such cases, the algorithm stops too early to allow the ALG \ref{alg-estimation} to produces a good prediction. Indeed, Figure \ref{fig-estimationerror} shows that in early stage, the proposed ALG \ref{alg-estimation} might not have produced a good prediction yet. On the contrary, Table \ref{table-prediction-com-decre} compares the ALG \ref{alg-estimation} with 4 "conservative" prediction oracles, i.e. $x < 0$ and $\hat{Q}_t < Q$, demonstrating the same conclusion as Table \ref{table-prediction-com-incre} on how the performance of OA-UCB worsens when $|x|$ grows. We believe both Table \ref{table-prediction-com-incre} and Table \ref{table-prediction-com-decre} verify the value of prediction: A better prediction oracle can help OA-UCB improve its performance.

\begin{table}[htb]
\centering
\begin{tabular}{|cccccc|}
\hline
       & ALG \ref{alg-estimation}        &  "$+5T$"            &  "$+10T$"   & "$+15T$"   &  "$+20T$"   \\ \hline
\multicolumn{6}{|l|}{$b = 10$}                                                                                               \\
Regret & $4917 \pm 337$          & \textbf{1985 $\pm$ 383} & $14769 \pm 396$ & 24847 $\pm$ 355 & 32860 $\pm$ 382 \\
CR     & 0.960 $\pm$ 0.003             & \textbf{0.984 $\pm$ 0.003}    & 0.881 $\pm$ 0.003     & 0.800 $\pm$ 0.003     & 0.735 $\pm$ 0.003     \\ \hline
\multicolumn{6}{|l|}{$b = 15$}                                                                                               \\
Regret & \textbf{6969 $\pm$ 377} & 9897 $\pm$ 365          & 28085 $\pm$ 393 & 41924 $\pm$ 350 & 53055 $\pm$ 422 \\
CR     & \textbf{0.960 $\pm$ 0.002}    & 0.943 $\pm$ 0.002             & 0.839 $\pm$ 0.002     & 0.759 $\pm$ 0.002     & 0.695 $\pm$ 0.002     \\ \hline
\multicolumn{6}{|l|}{$b = 20$}                                                                                               \\
Regret & \textbf{9330 $\pm$ 239} & 17327 $\pm$ 372         & 37584 $\pm$ 351 & 54635 $\pm$ 383 & 69521 $\pm$ 383 \\
CR     & \textbf{0.957 $\pm$ 0.001}    & 0.920 $\pm$ 0.002             & 0.827 $\pm$ 0.002     & 0.749 $\pm$ 0.002     & 0.685 $\pm$ 0.002     \\ \hline
\end{tabular}
\caption{Comparison of Regret/Competitive Ratios achieved by OA-UCB with 4 static "aggressive" prediction oracles with $x \in \{5,10,15,20\}$.}
\label{table-prediction-com-incre}
\end{table}

\begin{table}[htb]
\centering
\begin{tabular}{|cccccc|}
\hline
       & ALG \ref{alg-estimation}       & "$-5T$"    & "$-10T$"   & "$-15T$"   & "$-20T$"   \\ \hline
\multicolumn{6}{|l|}{$b = 10$}                                                                                       \\
Regret & \textbf{4898 $\pm$ 339} & 9537 $\pm$ 324  & 17238 $\pm$ 178 & 20317 $\pm$ 167 & 20339 $\pm$ 167 \\
CR     & \textbf{0.960 $\pm$ 0.003}    & 0.923 $\pm$ 0.003     & 0.861 $\pm$ 0.001     & 0.836 $\pm$ 0.001     & 0.836 $\pm$ 0.001     \\ \hline
\multicolumn{6}{|l|}{$b = 15$}                                                                                       \\
Regret & \textbf{6930 $\pm$ 395} & 16454 $\pm$ 202 & 18735 $\pm$ 192 & 18773 $\pm$ 193 & 18700 $\pm$ 194 \\
CR     & \textbf{0.960 $\pm$ 0.002}    & 0.905 $\pm$ 0.001     & 0.892 $\pm$ 0.001     & 0.892 $\pm$ 0.001     & 0.893 $\pm$ 0.001     \\ \hline
\multicolumn{6}{|l|}{$b = 20$}                                                                                       \\
Regret & \textbf{9305 $\pm$ 206} & 10752 $\pm$ 211 & 10709 $\pm$ 225 & 10640 $\pm$ 224 & 10749 $\pm$ 241 \\
CR     & \textbf{0.957 $\pm$ 0.001}    & 0.951 $\pm$ 0.001     & 0.951 $\pm$ 0.001     & 0.951 $\pm$ 0.001     & 0.951 $\pm$ 0.001     \\ \hline
\end{tabular}
\caption{Comparison of Regret/Competitive Ratios achieved by OA-UCB with 4 static "conservative" prediction oracles with $x \in \{-5,-10,-15,-20\}$.}
\label{table-prediction-com-decre}
\end{table}

\subsection{A Dynamic Pricing Example: Network Revenue Management Model}

We consider a network revenue management model that is similar to the models in \cite{agrawal2019bandits}, \cite{ferreira2018online}, \cite{besbes2012blind}. In this model, a firm needs to determine the prices of $J$ products produced by $d$ resources within a finite time horizon of length $T$. A resource consumption matrix $A$ is provided to the firm, where the $(j,i)^{\text{th}}$ entry represents the amount of resource $i$ needed to produce one unit of product $j$. The inventory level of each resource is $B_i=B=bT$, which cannot be replenished during the horizon. The firm is given a set $\mathcal{A}$ of $K$ price vectors, where the $j^{\text{th}}$ component of a vector specifies the price for the $j^{\text{th}}$ product. At each time $t$, the firm needs to choose one out of these $K$ price vectors. After a price $\boldsymbol{p}_t$ is chosen, the demand for each product $j$, $D_{t,j}$, is generated independently as $\sum_{\ell=1}^{q_t} X_{\ell,j}$. Here $q_t$ can be viewed as the "market size" at time $t$, or more specifically, $q_t$ is the number of customers entering the system at this timestamp. Each $X_{\ell,j}$ is an independent Bernoulli random variable, referring to the binary choice of buying/not buying product $j$ for the $\ell^{\text{th}}$ customer. The mean of $X_{\ell,j}$ is $\boldsymbol{\lambda}(\boldsymbol{p}_t)_j$, where $\boldsymbol{\lambda}(\cdot):\mathbb{R}^J \rightarrow [0,1]^J$ is an unknown vector value function, representing the individual choice model. The revenue at time $t$ is the sum of prices of products sold, i.e., $\sum_{j=1}^J p_{t,j} D_{t,j}$, and the resource consumption for resource $i$ at time $t$ is the sum of the amount of resources consumed to produce these products, that is, $\sum_{j=1}^J A_{j,i} D_{t,j}$. If some resource is depleted, then the selling horizon is ended, and the stopping time is denoted as $\tau$. The objective for the firm is to maximize the total revenue while satisfying the resource constraints, i.e. $\sum_{t=1}^{\tau - 1}\sum_{j=1}^J p_{t,j} D_{t,j}$.

This problem can be formulated using our model: Each available price vector refers to one action. The $R_t$, $C_{t,i}$, $r(\boldsymbol{p}_t)$, $c(\boldsymbol{p}_t,i)$ can be formulated as following (Suppose at time $t$ $\boldsymbol{p}_t$ is chosen) 
\begin{equation*}
    \begin{aligned}
        & R_t = \frac{1}{q_t} \sum_{j=1}^J p_{j} D_{t,j},\quad C_{t,i} = \frac{1}{q_t} \sum_{j=1}^J A_{j,i} D_{t,j},\\
        & r(\boldsymbol{p}_t) = \mathbb{E}[R_t] = \boldsymbol{p}_t^{\top} \boldsymbol{\lambda}(\boldsymbol{p}_t),\quad c(\boldsymbol{p}_t,i) = \mathbb{E}[C_{t,i}] = \left(A \cdot \boldsymbol{\lambda}(\boldsymbol{p}_t) \right)_i.
    \end{aligned}
\end{equation*}
At each time $t$, after receiving prediction $\hat{Q}_t$, the firm chooses price $\boldsymbol{p}_t \in \mathcal{A}$, observing the feedback of (i) demand volume $q_t$, (ii) reward $q_t R_t = \sum_{j=1}^J p_{t,j} D_{t,j}$ and (iii) resources $i$ consumed $q_t C_{t,i} = \sum_{j=1}^J A_{j,i} D_{t,j}$. If some resource is depleted, i.e. $\exists i_0 \in [d]$ such that $\sum_{s=1}^t q_s C_{s,j} = \sum_{s=1}^t  \sum_{j=1}^J A_{j,i_0} D_{s,j} > B_{i_0}$, then an null price vector $\boldsymbol{p}_{\infty} = (\infty,...,\infty)$ (Clearly in this case the demand will be 0 with certainty) is to be choosed in the remaining horizon $t+1, \ldots, T$. The firm aims to maximize the total reward $\sum_{t=1}^{\tau - 1} q_t R_t = \sum_{t=1}^{\tau - 1}\sum_{j=1}^J p_{t,j} D_{t,j}$, subject to the resource constraints and model uncertainty.

Now we describe how we adapt our OA-UCB to this application. 
The $(R_t;C_{t,1},\ldots,C_{t,d})$ can be viewed as the random outcomes sampled independently from some unknown distribution $P_{\boldsymbol{p}_t,t}$. The latent distribution $P_{\boldsymbol{p},t}$ for price vector $\boldsymbol{p}$ varies over time and shares a common support and mean. However, the support may not be $[0,1]^{d+1}$ anymore, depending on the resource consumption matrix $A$ and price vector set $\mathcal{A}$. Nonetheless, the random variables following $P_{\boldsymbol{p},t}$ are still bounded and non-negative, which implies their sub-gaussianity and makes the confidence bound technique applicable.
Additionally, note that the model uncertainty arises from the underlying demand function $\boldsymbol{\lambda}(\boldsymbol{p})$ for all available $\boldsymbol{p}$, rather than the direct latent reward and resource consumption distribution. Therefore, instead of constructing UCB/LCB separately for $r$ and $c$, we directly construct UCB/LCB for the unknown demand function per unit of customer, i.e. using $\left \{\frac{D_{s,j}}{q_s} \boldsymbol{1}(\boldsymbol{p}_s = \boldsymbol{p}) \right \}_{s=1}^{t-1}$ to estimate $\boldsymbol{\lambda}(\boldsymbol{p})_j$ for price $\boldsymbol{p}$ and product $j$, as shown in Algorithm \ref{alg-OAUdp}.
Here, the value $\frac{D_{t,j}}{q_t}$ for all $j$ is a sequence of random variables with the same mean $\boldsymbol{\lambda}(\boldsymbol{p})_j$ and support $[0,1]$ throughout the horizon, satisfying the condition for using our confidence radius.

\begin{algorithm}
	\caption{OA-UCB for Dynamic Pricing (OA-UCB-DP)}
	\begin{algorithmic}[1]
	    \State \textbf{Intialize} $\boldsymbol{\mu}_1 = \frac{1}{d+1} \boldsymbol{1} = \left(\frac{1}{d+1},\cdots,\frac{1}{d+1}\right) \in \mathbb{R}^{d+1} $, $\eta_1 = 0$, $\boldsymbol{\theta}_1 = \boldsymbol{0} \in \mathbb{R}^{d+1}$, $\kappa = \sqrt{\ln (d+1)} $, $\boldsymbol{B} = (B_i)_{i \in [d]} = (B,...,B)$, $\text{Reward} = 0$.
        \For{$t=1,2,...,T$}
        \State Receive $\hat{Q}_t = {\cal F}_t(q_1, \ldots, q_{t-1})$.
        \State Compute $\text{UCB}_{t,j}(\boldsymbol{p})$, $\textbf{LCB}_{t,j}(\boldsymbol{p})$ for all $\boldsymbol{p} \in \mathcal{A}$ as following:
        \begin{equation*}
            \begin{aligned}
                & \hat{D}_{t,j}(\boldsymbol{p}) = \frac{1}{N_{t-1}(\boldsymbol{p})} \sum_{s=1}^{t-1} \frac{D_{s,j} \boldsymbol{1}(\boldsymbol{p}_s = \boldsymbol{p})}{q_s}, \\
                & \text{UCB}_{t,j}(\boldsymbol{p}) = \hat{D}_{t,j}(\boldsymbol{p}) + \text{rad}(\hat{D}_{t,j}(\boldsymbol{p}),N_{t-1}(\boldsymbol{p}),\delta), \\
                & \text{LCB}_{t,j}(\boldsymbol{p}) = \hat{D}_{t,j}(\boldsymbol{p}) - \text{rad}(\hat{D}_{t,j}(\boldsymbol{p}),N_{t-1}(\boldsymbol{p}),\delta).
            \end{aligned}
        \end{equation*}
        \State Denote $\textbf{UCB}_{t}(\boldsymbol{p}) = (\textbf{UCB}_{t,j}(\boldsymbol{p}))_{j \in [J]}$, $\textbf{LCB}_{t}(\boldsymbol{p}) = (\textbf{LCB}_{t,j}(\boldsymbol{p}))_{j \in [J]}$. Select
        \begin{equation*}
            \boldsymbol{p}_t \in \underset{\boldsymbol{p} \in \mathcal{A}}{\text{argmax}} \left\{ \textbf{UCB}_{t}(\boldsymbol{p})^{\top} \boldsymbol{p} - \frac{\hat{Q}_t}{B} \cdot  \boldsymbol{\mu}_t^\top \cdot A \cdot \textbf{LCB}_{t}(\boldsymbol{p}) \right\}.
        \end{equation*}
        \State Observe market size $q_t$ and demand $\{D_{t,j}\}_{j \in [J]}$.
        \If{$\exists i \in [d]$ such that $\sum_{j=1}^J A_{j,i} D_{t,j} > B_i$,}
        \State Break, and select the null price $\boldsymbol{p}_{\infty}$ all the way.
        \EndIf
        \State Update
        \begin{equation*}
            \text{Reward} = \text{Reward} + \sum_{j=1}^J p_{t,j} D_{t,j},\quad B_i = B_i - \sum_{j=1}^J A_{j,i} D_{t,j}, \ i \in [d].
        \end{equation*}
        \State Set
        \begin{align}
        & \boldsymbol{g}_t = \frac{q_t \hat{Q}_t}{B}\left(\frac{B}{\hat{Q}_t} \boldsymbol{\beta} - (A \cdot \textbf{LCB}_{t}(\boldsymbol{p}))_i\right), \nonumber \\
        & \rho_t = \begin{cases}
                \boldsymbol{g}_1^{\top} \boldsymbol{\mu}_1  - \min_{j=1,...,d+1} g_{1,j} & t = 1\\ 
                \eta_t \ln \left(\sum_{j=1}^d \mu_{t,j} \exp\left(\frac{-g_{t,j}}{\eta_t} \right) \right) + \boldsymbol{g}_t^{\top} \boldsymbol{\eta}_t & \text{otherwise}.
            \end{cases} \nonumber
        \end{align}
        \State Update
        \begin{align}
            & \boldsymbol{\theta}_{t+1} = \boldsymbol{\theta}_{t} - \boldsymbol{g}_t, \nonumber \\
            & \eta_{t+1} = \eta_t + \frac{1}{\kappa^2} \rho_t, \nonumber \\
            & \mu_{t+1,j} = \frac{\exp \left(\frac{\theta_{t+1,j}}{\eta_{t+1}} \right)}{\sum_{i=1}^{d+1} \exp \left(\frac{\theta_{t+1,i}}{\eta_{t+1}} \right)}, \ j \in [d+1]. \nonumber
        \end{align}
        \EndFor
	\end{algorithmic}
	\label{alg-OAUdp}
\end{algorithm}

\textbf{Benchmarks}: We compare the performance of our OA-UCB-DP with the following four dynamic pricing algorithms:
\begin{enumerate}
    \item AD-UCB-DP: The AD-UCB algorithm applied to dynamic pricing setting presented in  \cite{agrawal2019bandits} (Algorithm 6).
    \item TS-fixed: The algorithm presented in \cite{ferreira2018online} (Algorithm 1), which applies Thompson Sampling technique to estimate the latent demand and uses the estimation to get the pricing strategy by solving an LP in each time step. 
    \item TS-update: The algorithm presented in \cite{ferreira2018online} (Algorithm 2), which is same as TS-fixed except the adaptive inventory level in its LP. 
    \item BZ12: The algorithm proposed in \cite{besbes2012blind} (Algorithm 1), which first explores all prices and then exploits the best pricing strategy by solving a linear program once. In implementating this algorithm, we divide the exploration and exploitation phases at period $\tau = T^{2/3}$, as suggested in their paper.
\end{enumerate}

\subsubsection{Single Product, Single Resource Example}

We first consider a special case. The firm only sells a single product ($J=1$) in a finite time horizon. Without loss of generality, we assume that the product is itself the resource ($d=1$) which has limited inventory. There are $K=6$ possible prices: $\mathcal{A} = \{\$10,\$11,\$13,\$15,\$17,\$19\}$, and the mean demand function is given by $\lambda(\$ 10) = 1$, $\lambda(\$ 11) = 0.9$, $\lambda(\$ 13) = 0.7$, $\lambda(\$ 15) = 0.5$, $\lambda(\$ 17) = 0.3$, $\lambda(\$ 19) = 0.1$. The $\alpha$, $\beta$, $\sigma$ is the same as before. We fix $T = 10000$ and vary $b = 10,15,20$. For each fixed $b$, we run each algorithm 100 times and report sample average and standard error of regret and competitive ratio. The result is shown in Table \ref{table-regretcr-com-b-dp-single}. The result clearly demonstrates the benefit of our framework in such single-product, single-resource problem.

\begin{table}[htb]
\centering
\begin{tabular}{|cccccc|}
\hline
       & OA-UCB-DP                      & AD-UCB-DP             & TS-fixed           & TS-update          & BZ12                \\ \hline
\multicolumn{6}{|l|}{$b = 10$}                                                                                              \\
Regret & \textbf{394034 $\pm$ 2820} & 429123 $\pm$ 1627 & 551735 $\pm$ 1903 & 523617 $\pm$ 206 & 528413 $\pm$ 34319 \\
CR     & \textbf{0.747 $\pm$ 0.002}      & 0.725 $\pm$ 0.001      & 0.646 $\pm$ 0.001      & 0.664 $\pm$ 0.002      & 0.661 $\pm$ 0.022       \\ \hline
\multicolumn{6}{|l|}{$b = 15$}                                                                                              \\
Regret & \textbf{331294 $\pm$ 2710} & 355020 $\pm$ 2128 & 530342 $\pm$ 1957 & 512541 $\pm$ 1976 & 511933 $\pm$ 36609 \\
CR     & \textbf{0.837 $\pm$ 0.001}      & 0.826 $\pm$ 0.001      & 0.740 $\pm$ 0.001      & 0.748 $\pm$ 0.001      & 0.749 $\pm$ 0.018       \\ \hline
\multicolumn{6}{|l|}{$b = 20$}                                                                                              \\
Regret & \textbf{73263 $\pm$ 1969}  & 73276 $\pm$ 1931 & 299816 $\pm$ 1788 & 292063 $\pm$ 2213 & 266875 $\pm$ 65152 \\
CR     & \textbf{0.968 $\pm$ 0.001}      & 0.968 $\pm$ 0.001      & 0.870 $\pm$ 0.001      & 0.873 $\pm$ 0.001      & 0.665 $\pm$ 0.028       \\ \hline
\end{tabular}
\caption{Regret/Competitive Ratio Comparison among algorithms varying inventory level $b$ under Single Product, Single Resource Dynamic Pricing setting.}
\label{table-regretcr-com-b-dp-single}
\end{table}


\subsubsection{Multi-Product, Multi-Resource Example}

Now we consider a more general setting with multi-product and multi-resource. Similar as in \cite{besbes2012blind,ferreira2018online,agrawal2019bandits}, there are two products and three resources ($d = 3, J=2$). There are $K=6$ possible prices: 
\begin{equation*}
    \mathcal{A} = \{(\$5,\$10),(\$6,\$11),(\$6,\$13),(\$7,\$15),(\$8,\$17),(\$9,\$19)\},
\end{equation*}
where $\boldsymbol{p} = (p_1,p_2) \in \mathcal{A}$ means that the price for the first product is $p_1$ and the second product is $p_2$. The matrix $A$ equals to
\begin{equation*}
    A = \begin{pmatrix}
        1 & 1\\
        3 & 1 \\
        1 & 4
    \end{pmatrix}.
\end{equation*}
We consider three individual choice model:
\begin{itemize}
    \item Linear model: $\lambda(\boldsymbol{p}) = \left(1 - 0.1 p_1, 1 - 0.05 p_2 \right)$.
    \item Exponential model: $\lambda(\boldsymbol{p}) = \left(\exp(-0.2 p_1), \exp(-0.1 p_2) \right)$.
    \item Logit model: $\lambda(\boldsymbol{p}) = \left(\frac{4 \exp(-0.4 p_1)}{1 + \exp (-0.4 p_1) + \exp(-0.2 p_2)}, \frac{4 \exp(-0.2 p_2)}{1 + \exp(-0.4 p_1) + \exp (-0.2 p_2)}\right)$.
\end{itemize}
The $\alpha,\beta,\sigma$ is the same as before. We fix $T = 10000$, and vary $b = 20,30$. The result is shown in Table \ref{table-regretcr-com-b20-dp-multi},\ref{table-regretcr-com-b30-dp-multi}. The result clearly demonstrates that our algorithm can outperform other benchmarks in this setting in most cases.

\begin{table}[htb]
\centering
\begin{tabular}{|cccccc|}
\hline
       & OA-UCB-DP              & AD-UCB-DP     & TS-fixed       & TS-update      & BZ12           \\ \hline
\multicolumn{6}{|l|}{Linear}                                                                       \\
Regret & \textbf{304652 $\pm$ 1985} & 358159 $\pm$ 2365 & 414709 $\pm$ 7525  & 388136 $\pm$ 9057  & 395570 $\pm$ 27243 \\
CR     & \textbf{0.702 $\pm$ 0.002} & 0.650 $\pm$ 0.002 & 0.595 $\pm$ 0.007  & 0.621 $\pm$ 0.009  & 0.614 $\pm$ 0.027  \\ \hline
\multicolumn{6}{|l|}{Expo}                                                                         \\
Regret & \textbf{280236 $\pm$ 2453} & 349069 $\pm$ 4790 & 449530 $\pm$ 19703 & 434440 $\pm$ 17479 & 417928 $\pm$ 59593 \\
CR     & \textbf{0.740 $\pm$ 0.002} & 0.676 $\pm$ 0.004 & 0.583 $\pm$ 0.018  & 0.597 $\pm$ 0.016  & 0.613 $\pm$ 0.055  \\ \hline
\multicolumn{6}{|l|}{Logit}                                                                        \\
Regret & \textbf{251288 $\pm$ 1877} & 303256 $\pm$ 1577 & 333680 $\pm$ 600   & 315206 $\pm$ 897   & 326623 $\pm$ 463   \\
CR     & \textbf{0.732 $\pm$ 0.002} & 0.676 $\pm$ 0.002 & 0.643 $\pm$ 0.001  & 0.664 $\pm$ 0.001  & 0.651 $\pm$ 0.0005 \\ \hline
\end{tabular}
\caption{Regret/Competitive Ratio Comparison among algorithms when $b=20$ under Multi-Product, Multi-Resource setting}
\label{table-regretcr-com-b20-dp-multi}
\end{table}

\begin{table}[htb]
\centering
\begin{tabular}{|cccccc|}
\hline
       & OA-UCB-DP              & AD-UCB-DP              & TS-fixed       & TS-update      & BZ12           \\ \hline
\multicolumn{6}{|l|}{Linear}                                                                                \\
Regret & \textbf{321384 $\pm$ 3440} & 379419 $\pm$ 3546          & 452417 $\pm$ 11519 & 435005 $\pm$ 6713  & 429984 $\pm$ 39892 \\
CR     & \textbf{0.764 $\pm$ 0.002} & 0.722 $\pm$ 0.003          & 0.668 $\pm$ 0.008  & 0.681 $\pm$ 0.005  & 0.685 $\pm$ 0.029  \\ \hline
\multicolumn{6}{|l|}{Expo}                                                                                  \\
Regret & 175335 $\pm$ 6221 & \textbf{174260 $\pm$ 6196} & 314341 $\pm$ 26772 & 302451 $\pm$ 29435 & 265473 $\pm$ 82943 \\
CR     & 0.860 $\pm$ 0.005 & \textbf{0.860 $\pm$ 0.005} & 0.748 $\pm$ 0.021  & 0.758 $\pm$ 0.024  & 0.787 $\pm$ 0.066  \\ \hline
\multicolumn{6}{|l|}{Logit}                                                                                 \\
Regret & \textbf{218885 $\pm$ 2464} & 260040 $\pm$ 1707          & 293944 $\pm$ 601   & 281591 $\pm$ 724   & 286263 $\pm$ 8657  \\
CR     & \textbf{0.817 $\pm$ 0.002} & 0.783 $\pm$ 0.001          & 0.755 $\pm$ 0.001  & 0.765 $\pm$ 0.001  & 0.761 $\pm$ 0.007  \\ \hline
\end{tabular}
\caption{Regret/Competitive Ratio Comparison among algorithms when $b=30$ under Multi-Product, Multi-Resource setting}
\label{table-regretcr-com-b30-dp-multi}
\end{table}

\section{Conclusion}

We study an online resource allocation problem with bandit feedback and time varying demands, focusing on the benefits of online advice in policy design and algorithm performance. Our main contributions are twofold. Firstly, we propose impossibility results that (i) any policy without any advice performs poorly in terms of regret, and (ii) a regret lower bound that depends on the accuracy of advice. With informative advice, a strictly smaller regret lower bound is possible. Second, we design a robust online algorithm that incorporates the online advice in the form of prediction on total demand volume $Q$, which shows outstanding performance when the advice is informative (prediction is correct). Our approach is novel comparing to existing models in online learning/optimization with advice (such as \cite{lykouris2021competitive}), in that ours returns a (possibly refined) prediction on $Q$ in each time step. Our results demonstrate the significance of online advice and the potential for improved performance in resource allocation problems.

There are many interesting future directions, such as investigating the models  \cite{BamasAS20,lykouris2021competitive,PurohitSK18,Mitzenmacher19} in the presense of sequential prediction oracles similar to ours. It is also interesting to invenstigate other forms of predictions, such as predictions with distributional information \cite{BertsimasSZ19,DiakonikolasVCAN21}. Customizing prediction oracles for our model is also an interesting direction to pursuse \cite{AnandGP20}. 


\bibliography{bandit.bib}

\newpage
\appendix
\onecolumn

\section{Proof for Section 2}

\subsection{Proof for Lemma \ref{lem-lpupperbound}}
Let's first consider
\begin{equation}
    \text{OPT}'_{\text{LP}} = \max_{\boldsymbol{x}_t \in \Delta_{K},\ \forall t \in [T] } \  \sum_{t=1}^T q_t \boldsymbol{r}^\top \boldsymbol{x}_t \qquad \text{s.t.} \quad  \sum_{t=1}^T q_t \boldsymbol{c}^{\top} \boldsymbol{x}_t \le B \boldsymbol{1}_d,
    \label{eq-linear-relaxation1}
\end{equation}

It is evident that $\text{OPT}'_{\text{LP}} \geq \text{OPT}$, since for a fixed policy $\pi$ that achieves $\text{OPT}$, the solution $\bar{\boldsymbol{x}} = \{\bar{x}_{t, a}\}_{t\in [T], a\in \mathcal{A}}$ defined as $$\bar{x}_{t, a} = \mathbb{E}[\mathbf{1}(\text{action $a$ is chosen at $t$ under $\pi$})]$$ is feasible to $\text{OPT}'$, and the objective value of $\bar{\boldsymbol{x}}$ in $\text{OPT}'_{\text{LP}}$ is equal to the expected revenue earned in the online process.

Next, we claim that $\text{OPT}_{\text{LP}} =  \text{OPT}'_{\text{LP}}$. Indeed, for each feasible solution $(\boldsymbol{x}_t)_{t \in [T]}$ to $\text{OPT}'_{\text{LP}}$, the solution
\begin{equation*}
    \boldsymbol{u} = \frac{\sum_{t=1}^T q_t \boldsymbol{x}_t}{\sum_{t=1}^T q_t},
\end{equation*}
is feasible to $\text{LP}_{\text{OPT}} $
and has the same objective value as $(\boldsymbol{x}_t)_{t \in [T]}$. Altogether, the Lemma is proved. 

\section{Proofs for Section 3, and Consistency Remarks}
In this section, we provide proofs to the lower bound results. In both proofs, we consider an arbitrary but fixed deterministic online algorithm, that is, conditioned on the realization of the history in $1, \ldots, t-1$ and $q_t$, $\hat{Q}_t$, the chosen action $A_t$ is deterministic. This is without loss of generality, since the case of random online algorithm can be similarly handled by replace the chosen action $A_t$ with a probability distribution over the actions, but we focus on deterministic case to ease the exposition. Lastly, in Section \ref{sec:consistent} we demonstrate that our regret upper and lower bounds are consistent on the lower bounding instances we constructed in Section \ref{sec:pf-thm-lowerbound-estimation}. 
\subsection{Proof for Lemma \ref{lem-lowerbound-oblivious}}\label{sec:pf-lem-lowerbound-oblivious}
Our lower bound example involve two instances $I^{(1)}, I^{(2)}$ with determinstic rewards and deterministic consumption amounts. Both instances involve two non-dummy actions $1,2$ in addition to the null action $a_0$, and there is $d=1$ resource type. Instances $I^{(1)}, I^{(2)}$ differ in their respective seqeunces of demand volumes $\{q_t^{(1)}\}_{t=1}^T, \{q_t^{(2)}\}_{t=1}^T$, but for other parameters are the same in the two instances. 

In both $I^{(1)}, I^{(2)}$, action 1 is associated with (deterministc) reward $r(1) = 1$ and (deterministic) consumption amount $c(1, 1) = 1$, while action 2 is associated with (deterministc) reward $r(2) = 3/4$ and (deterministic) consumption amount $c(2, 1) = 1/2$. Both instances share the same horizon $T$, a positive even integer, and the same capacity $B = T/2$. The sequences of demand volumes $\{q_t^{(1)}\}_{t=1}^T, \{q_t^{(2)}\}_{t=1}^T$ of instances $I^{(1)}, I^{(2)}$ are respectively defined as 
\begin{align*}
    q_t^{(1)} & = \begin{cases}
    1 & ~ \text{if }t\in \{1, \ldots, T/2\},  \\
    1/16 & ~ \text{if }t\in \{T / 2 + 1, \ldots, T\},
    \end{cases}\\
    q_t^{(2)} &= 1,\quad \text{for all }t\in \{1, \ldots, T\}.
\end{align*}
Then the optimal reward for $I^{(1)}$ is at least $\frac{T}{2}$ (always select the action 1 until the resource is fully consumed), and the optimal reward for $I^{(2)}$ is $\frac{3T}{4}$ (always select action 2 until the resource is fully consumed). 

Consider the first $T/2$ rounds, and consider an arbitrary online algorithm that knows $\{P_a\}_{a\in {\cal A}}$, the sequence $\{(q_s, q_s R_s, q_sC_{s, 1},\ldots q_s C_{s, d})\}^{t-1}_{s=1}$ and the time $t$ demand $q_t$ when the action 
$A_t$ is to be chosen at each time $t$. Under this setting, the DM recieves the same set of observations in the first $T/2$ time steps in each of instances $I^{(1)}, I^{(2)}$. Consequently, the sequence of action pulls in the first $T/2$ time steps are the same. Now, we denote $N_a = \sum^{T/2}_{t=1} \mathbf{1}(A_t = a)$ for $a\in \{1, 2\}$. By the previous remark, $N_a$ is the number of times action $a$ is pulled during time steps $1,\ldots, T/2$ in each of the two instances.  Observe that $N_1 + N_2 \le \frac{T}{2}$, which implies $N_1 \le \frac{T}{4}$ or $N_2 \le \frac{T}{4}$. We denote $\text{Reward}_T(I^{(i)}), \text{Regret}_T(I^{(i)})$ as the expected reward and the expected regret of the policy in instance $I^{(i)}$. In what follows, we demonstrate that 
\begin{equation}\label{eq:lb_no_oracle}
\max_{i\in\{1, 2\}} \text{Regret}_T(I^{(i)}) \geq \frac{T}{32},
\end{equation}
which proves the Lemma.
\\ \hspace*{\fill} \\
\textbf{Case 1: $N_1 \le \frac{T}{4}$.} We consider the algorithm on $I^{(1)}$, which earns 
\begin{equation*}
    \text{Reward}_T(I^{(1)}) \le \frac{T}{4} \cdot 1 + \frac{T}{4} \cdot \frac{3}{4} + \frac{T}{2}\frac{1}{16}= \frac{15}{32} T.
\end{equation*}
Hence,
\begin{equation*}
    \text{Regret}_T(I^{(1)}) \ge \frac{T}{2} - \text{Reward}_T(I^{(1)}) \ge \frac{1}{32} T.
\end{equation*}
\textbf{Case 2: $N_2 \le \frac{T}{4}$.} We consider the algorithm on $I^{(2)}$, which earns 
\begin{equation*}
    \text{Reward}_T(I^{(2)}) = \left(\frac{T}{2} - N_2 \right) \cdot 1 + N_2 \cdot \frac{3}{4} + \left(\frac{T}{2} -  \left( \frac{T}{2} - N_2 \right) \cdot 1 - N_2 \cdot \frac{1}{2} \right) \cdot \frac{\frac{3}{4}}{\frac{1}{2}} = \frac{T}{2} + \frac{N_2}{4} \le \frac{9}{16} T.
\end{equation*}
Hence,
\begin{equation*}
    \text{Regret}_T(I^{(2)}) \ge \frac{3}{4} T - \text{Reward}_T(I^{(2)}) \ge \frac{3}{16} T.
\end{equation*}
Altogether, the inequality (\ref{eq:lb_no_oracle}) is shown.

\subsection{Proof for Theorem \ref{thm-lowerbound-estimation}}\label{sec:pf-thm-lowerbound-estimation}
By the Theorem's assumption that $\epsilon_{T_0+1}>0$ is \emph{$(T_0+1, \{q_t\}_{t=1}^{T_0})$-well response} by $\mathcal{F} = \{\mathcal{F}_t\}$, we know that 
\begin{equation}\label{eq:epsilon_sat}
0<\epsilon_{T_0+1} \leq \min\left\{\hat{Q}_{T_0+1} - \sum^{T_0}_{t=1} q_t - \underline{q}(T-T_0), \overline{q} (T-T_0)- \hat{Q}_{T_0+1} - \sum^{T_0}_{t=1}q_t, \frac{\hat{Q}_{T_0+1}}{2}\right\},
\end{equation}
where $\hat{Q}_{T_0+1} = {\cal F}_{T_0+1}(q_1, \ldots, q_{T_0})$.
In what follows, we first construct two deterministic instances $I^{(1)}, I^{(2)}$ which only differ in their respective seqeunces of demand volumes $\{q_t^{(1)}\}_{t=T_0+1}^T, \{q_t^{(2)}\}_{t=T_0+1}^T$, but the two instances are the same on other parameters, and that $q^{(1)}_t = q^{(2)}_t= q_t$ for $t\in \{1, \ldots, T_0\}$. Both  $I^{(1)}, I^{(2)}$ only involve one resource constraint. We estbalish the Theorem by showing three claims:
\begin{enumerate}
\item Both $I^{(1)}, I^{(2)}$ are $(T_0+1, \epsilon_{T_0+1})$-well-estimated by ${\cal F}$, and the underlying online algorithm and prediction oracle (which are assumed to be fixed but arbitrary in the Theorem statement) suffer
\begin{equation}\label{eq:lb_oracle}
 \text{Regret}_T(I^{(i)}) \geq \frac{ \sum^{T_0}_{t=1} q_t\epsilon_{T_0+1}}{6Q^{(i)}} \; \text{ for some }i\in \{1, 2\}.
\end{equation}
In (\ref{eq:lb_oracle}), we define $\text{Regret}_T(I^{(i)})$ as the regret of the algorithm on instance $I^{(i)}$, and $Q^{(i)} = \sum^{T}_{t=1}q^{(i)}_t$. 
\item Among the set of instances $\{J^{(i)}_c\}_{i\in [K]}$ (see \textbf{Instances $\{J^{(i)}_c\}_{i\in [K]}$}), the online algorithm suffers
\begin{equation}\label{eq:lb_c}
 \text{Regret}_T(J^{(i)}_c) \geq \frac{1}{128}\min\left\{1, \sqrt{\frac{K\bar{q}}{B}}\right\}\text{opt}(J^{(i)}_c) \; \text{ for some }i\in [K],
\end{equation}
where $\text{opt}(I)$ denote the optimum of instance $I$, even when the DM has complete knowledge on $q_1, \ldots, q_T$, and $\hat{Q}_t$ is equal to the ground truth $Q$ in each of the instances in $\{J^{(i)}_c\}_{i\in [K]}$.
\item Among the set of instances $\{J^{(i)}_r\}_{i\in [K]}$ (see \textbf{Instances $\{J^{(i)}_r\}_{i\in [K]}$}),  the online algorithm suffers
\begin{equation}\label{eq:lb_r}
 \text{Regret}_T(J^{(i)}_r) \geq \frac{1}{20}\sqrt{\bar{q}K\text{opt}(J^{(i)}_r) } \; \text{ for some }i\in [K],
\end{equation}
even when the DM has complete knowledge on $q_1, \ldots, q_T$, and $\hat{Q}_t$ is equal to the ground truth $Q$ in each of the instances in $\{J^{(i)}_r\}_{i\in [K]}$.
\end{enumerate}
Once we establish inequalities (\ref{eq:lb_oracle}, \ref{eq:lb_c}, \ref{eq:lb_r}), the Theorem is shown. We remark that (\ref{eq:lb_c}, \ref{eq:lb_r}) are direct consequences of \cite{badanidiyuru2013bandits}. We first extract the instances $\{J^{(i)}_c\}_{i\in [K]}$, $\{J^{(i)}_r\}_{i\in [K]}$ that are constructed in \cite{badanidiyuru2013bandits}, then we construct the instances $I^{(1)}, I^{(2)}$. After that, we prove (\ref{eq:lb_oracle}), which establish the Theorem. 

\textbf{Instances $\{J^{(i)}_c\}_{i\in [K]}$.} These instances are single resource instances, with determinsitic rewards but stochastic consumption. According to \cite{badanidiyuru2013bandits}, we first set parameters 
$$
\eta = \frac{1}{32}\min\left\{1, \sqrt{\frac{K}{B}}\right\},\quad T = \frac{16B}{\eta(1/2 - \eta)},
$$
and set $q_t = \overline{q}$ for all $t\in [T]$. The instances $J^{(1)}_c, \ldots, J^{(K)}_c$ share the same $B, T, \{q_t\}^T_{t=1}$, and the instances share the same (deterministic) reward function:
\[ R(a) = r(a) = 
  \begin{cases}
    1      & \quad \text{if } a\in [K] \setminus \{a_0\}\\
    0  & \quad \text{if } a = a_0
  \end{cases}. 
\]
In contrast, instances $J^{(1)}_c, \ldots, J^{(K)}_c$ differ in the resource consumption model. We denote $C^{(i)}(a)$ as the random consumption of action $a$ in instance $J^{(i)}_c$. The probability distribution of $C^{(i)}(a)$ for each $a, i\in [K]$ is defined as follow:
\[ C^{(i)}(a) \sim 
  \begin{cases}
    \text{Bern}(1/2)     & \quad \text{if } a\in [K] \setminus \{a_0, i\}\\
     \text{Bern}(1/2-\eta)     & \quad \text{if } a=i\\
    \text{Bern}(0) & \quad \text{if } a = a_0
  \end{cases},
\]
where $\text{Bern}(p)$ denotes the Bernoulli distribution with mean $d$. The regret lower bound (\ref{eq:lb_c}) is a direct consequence of Lemma 6.10 in \cite{badanidiyuru2013bandits}, by incorporating the scaling factor $\bar{q}$ into the rewards earned by the DM and the optimal reward.

\textbf{Instances $\{J^{(i)}_r\}_{i\in [K]}$.} These instances are single resource instances, with random rewards but deterministic consumption. These instances share the same $B, T > K$ (set arbitrarily), the same demand volume seqeunce, which is $q_t = \overline{q}$ for all $t\in [T]$, and the same resource consumption model, in that $c(a) = 0$ for all $a\in {\cal A}$. These instances only differ in the reward distributions. We denote $R^{(i)}(a)$ as the random reward of action $a$ in instance $J^{(i)}_r$. The probability distribution of $R^{(i)}(a)$ for each $a, i\in [K]$ is defined as follow:
\[ R^{(i)}(a) \sim 
  \begin{cases}
    \text{Bern}\left(\frac{1}{2} - \frac{1}{4}\sqrt{\frac{K}{T}}   \right)  & \quad \text{if } a\in [K] \setminus \{a_0, i\}\\
     \text{Bern}(1/2)     & \quad \text{if } a=i\\
    \text{Bern}(0) & \quad \text{if } a = a_0
  \end{cases}.
\]
The regret lower bound (\ref{eq:lb_r}) is a direct consequence of Claim 6.2a in \cite{badanidiyuru2013bandits}, by incorporating the scaling factor $\bar{q}$ into the rewards earned by the DM and the optimal reward.
 
\textbf{Construct $I^{(1)}, I^{(2)}$. }
We first describe $\{q_t^{(1)}\}_{t=1}^T, \{q_t^{(2)}\}_{t=1}^T$. As previously mentioned, for $t\in \{1, \ldots, T_0\}$, we have $q_t^{(1)}=q_t^{(2)} = q_t$. To define $q^{(1)}_t, q^{(2)}_t$ for $t\in \{T_0+1, \ldots, T\}$, first recall that $|\hat{Q}_{T_0+1}-Q| \ge \epsilon_{T_0+1}$, where $\epsilon_{T_0+1}$ satisfies (\ref{eq:epsilon_sat}). By (\ref{eq:epsilon_sat}), we know that
\begin{equation*}
    \underline{q}(T - T_0) \le \hat{Q}_{T_0+1} - \sum^{T_0}_{t=1}q_t - \epsilon_{T_0+1} < \hat{Q}_{T_0+1} - \sum^{T_0}_{t=1}q_t + \epsilon_{T_0+1} \le \overline{q}(T - T_0)
\end{equation*}
We set $q^{(1)}_{T_0+1} = \ldots = q^{(1)}_T\in [\underline{q}, \overline{q}]$ and  $q^{(2)}_{T_0+1} = \ldots =q^{(2)}_T\in [\underline{q}, \overline{q}]$ such that based on current instance $\{q_t\}_{t=1}^{T_0}$ we have ever received, we construct the following two subsequent instances $I^{(1)} = \{q_t^{(1)}\}_{t=T_0+1}^T$, $I^{(2)} = \{q_t^{(2)}\}_{t=T_0+1}^T$, such that
\begin{equation*}
    Q^{(1)} = \sum_{t=1}^T q_t^{(1)} = \hat{Q}_{T_0+1} - \epsilon_{T_0+1}, \quad Q^{(2)} = \sum_{t=1}^T q_t^{(2)} = \hat{Q}_{T_0+1} + \epsilon_{T_0+1},
\end{equation*}
which is valid by the stated inequalities.  

Next, we define the parameters $\{r(a)\}_{a\in \mathcal{A}}, \{c(a, 1)\}_{a}, B$. (recall $d=1$) Similar to the proof for Lemma \ref{lem-lowerbound-oblivious}, we only consider deterministic instances, so it is sufficient to define the mean rewards and consumption amounts. To facilitate our discussion, we specify ${\cal A} = [K] = \{1, 2, \ldots, K\}$, with $K\geq 3$ and action $K$ being the null action. The parameters $\{r(a)\}_{a\in \mathcal{A}}, \{c(a, 1)\}_{a}, B$ shared between instances $I^{(1)}, I^{(2)}$ are defined as follows:
\[ r(a) =
  \begin{cases}
    1      & \quad \text{if } a =1,\\
    (1+c)/2 & \quad \text{if } a=2,\\
    0 & \quad \text{if } a\in \{3, \ldots, K\},
  \end{cases}
\]
and
\[ c(a, 1) =
  \begin{cases}
    1      & \quad \text{if } a =1,\\
    c & \quad \text{if } a=2,\\
    0 & \quad \text{if } a\in \{3, \ldots, K\},
  \end{cases}
\]
where
\begin{equation*}
    c = \frac{\hat{Q}_{T_0+1} - \epsilon_{T_0+1}}{ \hat{Q}_{T_0+1} + \epsilon_{T_0+1}}.
\end{equation*}
Finally, we set
$$B = \hat{Q}_{T_0+1} - \epsilon_{T_0+1}.$$
Inequality (\ref{eq:epsilon_sat}) ensures that $c, B > 0$. 

\textbf{Proving (\ref{eq:lb_oracle}).} To evaluate the regrets in the two instances, we start with the optimal rewards. The optimal reward in $I^{(1)}$ is $\hat{Q}_{T_0+1} - \epsilon_{T_0+1}$, which is achieved by pulling action 1 until the resource is exhasuted. The optimal reward for $I^{(2)}$ is $\hat{Q}_{T_0+1} $, which is achieved by pulling action 2 until the resource is exhasuted. 

Consider the execution of the fixed but arbitrary online algorithm during time steps $1, \ldots, T_0$ in each of the instances. The prediction oracle returns the same prediction $\hat{Q}_t$ for $t\in \{1, \ldots, T_0\}$ in both instances, since both instances share the same $r, c, B, T$ and $q^{(1)}_t = q^{(2)}_t$ for $t\in \{1, \ldots, T_0\}$. Consequently, the fixed but arbitrary online algorithm has the same sequence of action pulls $A_1, \ldots, A_{T_0}$ during time steps $1, \ldots, T_0$ in both instances $I^{(1)}, I^{(2)}$. Now, for each action $i\in \{1, 2\}$, we define $N_i = \{t\in \{1, \ldots, T_0\}: A_t = i\}$, which has the same realization in instances $I^{(1)}, I^{(2)}$. Since $N_1 \cup N_2 \subseteq [T_0]$, at least one of the cases $\sum_{t \in N_1} q_t \le \frac{1}{2}\sum_{s=1}^{T_0} q_s $ or $\sum_{t \in N_2} q_t \le \frac{1}{2}\sum_{s=1}^{T_0} q_s $ holds. 

We denote $\text{Reward}_T(I^{(i)})$ as the expected reward of the online algorithm in instance $I^{(i)}$. We proceed with the following case consideration:
\\ \hspace*{\fill} \\
\textbf{Case 1: $\sum_{t \in N_1} q_t \le \frac{1}{2}\sum_{s=1}^{T_0} q_s $.} We consider the online algorithm's execution on $I^{(1)}$, which yields
\begin{equation*}
    \begin{aligned}
    \text{Reward}_T(I^{(1)}) &\le \frac{\sum_{s=1}^{T_0} q_s}{2}  \cdot 1 + \frac{ \sum_{s=1}^{T_0} q_s }{2} \cdot \frac{1}{2}(1 + c) + \left(\hat{Q}_{T_0+1} - \epsilon_{T_0+1}  - \sum_{s=1}^{T_0} q_s \right) \cdot 1\nonumber\\
    &=\left(\sum_{s=1}^{T_0} q_s \right)\left(-\frac{1}{4} + \frac{1}{4} c \right) + \hat{Q}_{T_0+1} - \epsilon_{T_0+1}.
    \end{aligned}
\end{equation*}
Hence,
\begin{equation*}
    \text{Regret}_T(I^{(1)}) \ge \left(\sum_{s=1}^{T_0} q_s \right) \cdot \frac{1}{4} (1-c) = \frac{\sum_{s=1}^{T_0} q_s  \epsilon_{T_0+1}}{2 (\hat{Q}_{T_0+1} +  \epsilon_{T_0+1} )} \geq \frac{\sum_{s=1}^{T_0} q_s  \epsilon_{T_0+1}}{6(\hat{Q}_{T_0+1} -  \epsilon_{T_0+1} )} = \frac{\sum_{s=1}^{T_0} q_s  \epsilon_{T_0+1}}{6Q^{(1)}},
\end{equation*}
where the last inequality is by the inequality $\hat{Q}_{T_0+1} \geq 2\epsilon_{T_0 + 1}$, which is implied by the well repsonse assumption of $\epsilon_{T_0 + 1}$. For the last equality, recall that $\hat{Q}_{T_0+1} -  \epsilon_{T_0+1}  = \sum^T_{t=1} q^{(1)}_t$.

\textbf{Case 2: $\sum_{t \in N_2} q_t \le \frac{1}{2}\sum_{s=1}^{T_0} q_s $.} We consider the online algorithm's execution on $I^{(2)}$, which yields
\begin{equation*}
    \begin{aligned}
    \text{Reward}_T(I^{(2)}) &\leq\left(\sum_{s=1}^{T_0} q_s  - \sum_{t \in N_2} q_t \right) \cdot 1 + \sum_{t \in N_2} q_t \cdot \frac{1}{2}(1+c) \\
    & + \left(B - \left(\sum_{s=1}^{T_0} q_s  - \sum_{t \in N_2} q_t\right) - \sum_{t \in N_2} q_t \cdot c \right) \cdot \frac{\frac{1}{2}(1+c)}{c} \\
    &= \sum_{s=1}^{T_0} q_s \left(1 - \frac{1+c}{2c}\right) + \left(\sum_{s\in N_2}q_s\right)\left[-1 + \frac{1+c}{2} + \frac{1+c}{2c} - \frac{1+c}{2}\right] + B\cdot \frac{1+c}{2c}\\
    & = - \frac{\sum_{s=1}^{T_0} q_s \epsilon_{T_0 + 1}}{\hat{Q}_{T_0 + 1} - \epsilon_{T_0 + 1}} + \left(\sum_{s\in N_2}q_s\right)\cdot \frac{\epsilon_{T_0 + 1}}{\hat{Q}_{T_0 + 1} - \epsilon_{T_0 + 1}} + \hat{Q}_{T_0}\\
    & \leq - \frac{\sum_{s=1}^{T_0} q_s \epsilon_{T_0 + 1}}{2(\hat{Q}_{T_0 + 1} - \epsilon_{T_0 + 1})} + \hat{Q}_{T_0+1}.
    \end{aligned}
\end{equation*}
Hence,
\begin{equation*}
    \text{Regret}_T(I^{(2)}) \ge \frac{\sum_{s=1}^{T_0} q_s   \epsilon_{T_0+1}}{2(\hat{Q}_{T_0+1} - \epsilon_{T_0+1})} \geq \frac{\sum_{s=1}^{T_0} q_s   \epsilon_{T_0+1}}{2(\hat{Q}_{T_0+1} + \epsilon_{T_0+1})} = \frac{\sum_{s=1}^{T_0} q_s   \epsilon_{T_0+1}}{2Q^{(2)}}. 
\end{equation*}
Altogether, the Theorem is proved.

\subsection{Consistency Between Regret Upper and Lower Bounds}\label{sec:consistent}
Recall that in the proof of Theorem \ref{thm-lowerbound-estimation}, we constructed two instances $I^{(1)}, I^{(2)}$ such that (see (\ref{eq:lb_oracle}): 
\begin{equation}\label{eq:lb_oracle_copy}
 \text{Regret}_T(I^{(i)}) \geq \frac{ \sum^{T_0}_{t=1} q_t\epsilon_{T_0+1}}{6Q^{(i)}} \; \text{ for some }i\in \{1, 2\},
 \end{equation}
where $\text{Regret}_T(I^{(i)})$ is the regret of an arbitrary but fixed online algorithm on $I^{(i)}$, with its prediction oracle satisfying that
\begin{equation}\label{eq:property_above}
|Q^{(i)} - \hat{Q}_t|\leq \epsilon_{T_0+1}\quad \text{ for each $i\in \{1, 2\}$.}
\end{equation} 
In the lower bound analysis on $I^{(1)}, I^{(2)}$, we establish the regret lower bound (\ref{eq:lb_oracle_copy}) solely hinging on the model uncertainty on $Q^{(1)}, Q^{(2)}$, and the bound (\ref{eq:lb_oracle_copy}) still holds when the DM knows $\{P_a\}_{a\in \mathcal{A}}$. 

In particular, we can set the online algorihtm to be OA-UCB, with an oracle that satisfies the property (\ref{eq:property_above}) above.  Now, also recall in our construction that $q^{(1)}_t = q^{(2)}_t = q_t$ for all $t\in [T_0]$, thus the predictions $\hat{Q}_t$ for  $t\in [T_0]$ are the same in the two instances, whereas $Q^{(1)} = \hat{Q}_{T_0+1} - \epsilon_{T_0+1}$ but $Q^{(2)} = \hat{Q}_{T_0+1} + \epsilon_{T_0+1}$, while we still have $Q^{(2)}\leq 3 Q^{(1)}$, so that $Q^{(1)} = \Theta(Q^{(2)})$. Therefore, (\ref{eq:lb_oracle_copy}) is equivalent to
\begin{equation}\label{eq:lb_oracle_short}
 \max_{i\in \{1, 2\}}\{\text{Regret}_T(I^{(i)})\} \geq \Omega\left(\frac{ \sum^{T_0}_{t=1} q_t\epsilon_{T_0+1}}{Q^{(1)}}\right).
 \end{equation}
To demonstrate the consistency, it suffices to show
\begin{equation}\label{eq:consistency_required}
    \max_{i\in \{1, 2\}}\left\{\frac{1}{Q^{(1)}} \sum_{t=1}^{\tau-1} q_t \epsilon^{(i)}_t \right\} = \Omega\left(\frac{ \sum^{T_0}_{t=1} q_t\epsilon_{T_0+1}}{Q^{(1)}}\right).
\end{equation}
where $\epsilon^{(i)}_t = |\hat{Q}_t - Q^{(i)}|$ is the prediction error on instance $I^{(i)}$ at time $t$. 
Indeed, to be consistent, we should have Theorem \ref{thm-upperbound-1} holds true for both instances, while (\ref{eq:lb_oracle_short}) still holds true. We establish (\ref{eq:consistency_required}) as follows:
\begin{align}
    \max_{i\in \{1, 2\}}\left\{ \sum_{t=1}^{\tau-1} q_t \epsilon^{(i)}_t \right\} &\geq \sum_{t=1}^{\tau-1} q_t \frac{\epsilon^{(1)}_t+ \epsilon^{(2)}_t}{2}\nonumber\\
    & = \sum_{t=1}^{\tau-1} q_t \frac{|\hat{Q}_t - \hat{Q}_{T_0+1} + \epsilon_{T_0+1}|+ |\hat{Q}_t - \hat{Q}_{T_0+1} - \epsilon_{T_0+1}|}{2} \nonumber\\
    & \geq \sum_{t=1}^{\tau-1} q_t \frac{2 \epsilon_{T_0+1}}{2} \label{eq:by_triangle}\\
    & \geq \sum_{t=1}^{T_0} q_t  \epsilon_{T_0+1}.\label{eq:by_stopping_time}
\end{align}
Step (\ref{eq:by_triangle}) is by the triangle inequality, and step (\ref{eq:by_stopping_time}) is by the fact that for any algorithm that fully exhausts the resource, its stopping time $\tau>T_0$ (In the case when OA-UCB does not fully consume all the resource at the end of time $T$, by definition we have $\tau = T+1 > T_0$). By construction, the common budget $B$ in both instances is strictly larger than $\sum^{T_0}_{t=1}q_t$, thus the resource is always not exhasuted at $T_0$, since at time $t\in [T_0]$ the DM consumes at most $q_t$ units of resource. Altogether, (\ref{eq:consistency_required}) is shown and consistency is verified.

\section{Proof for Theorem \ref{thm-upperbound-1}}\label{sec:pf-thm-upperbound-1}





Denote $\textbf{UCB}_{r,t} = (\text{UCB}_{r,t}(a))_{a \in \mathcal{A}}$, $\textbf{LCB}_{c,t} = (\textbf{LCB}_{c,t}(a,i))_{a \in \mathcal{A},i \in [d]}$.
We first claim that, at a time step $t\leq \tau - 1$, 
\begin{equation}
    \boldsymbol{e}_{A_t} \in \mathop{\arg\max}_{\boldsymbol{u} \in \Delta_{K}} \ \textbf{UCB}_{c,t}^\top \boldsymbol{u} - \frac{\hat{Q}_t}{B} \cdot  \boldsymbol{\mu}_t^\top \textbf{LCB}_{c,t}^{\top} \boldsymbol{u}.
    \label{eq-pf-upper-claim}
\end{equation}
In fact, the following linear optimization problem 
\begin{equation*}
    \begin{aligned}
    \max \quad & \textbf{UCB}_{r,t}^\top \boldsymbol{u} - \frac{\hat{Q}_t}{B} \cdot  \boldsymbol{\mu}_t^\top \textbf{LCB}_{c,t}^{\top} \boldsymbol{u} \\
    \text{s.t.} \quad & \boldsymbol{u} \in \Delta_{K}
    \end{aligned}
\end{equation*}
has an extreme point solution such that $\boldsymbol{u}^* = \boldsymbol{e}_{a}$ for some $a \in \mathcal{A}$. According to the definition of $A_t$, we know that $\boldsymbol{u}^* = \boldsymbol{e}_{A_t}$. Then the claim holds. Suppose $\boldsymbol{u}^*$ is an optimal solution of  (\ref{eq-linear-relaxation2}), then we have $\text{OPT}_{\text{LP}} = Q \boldsymbol{r}^{\top} \boldsymbol{u}^*$, $Q \boldsymbol{c}^{\top} \boldsymbol{u}^* \le B \boldsymbol{1}$ and $\boldsymbol{u}^* \in \Delta_{K}$. By the optimality of (\ref{eq-pf-upper-claim}), we have
\begin{equation*}
    \text{UCB}_{r,t}(A_t) - \frac{\hat{Q}_t}{B} \cdot  \boldsymbol{\mu}_t^\top \textbf{LCB}_{c,t}(A_t) = \textbf{UCB}_{r,t}^{\top} \boldsymbol{e}_{A_t} - \frac{\hat{Q}_t}{B} \cdot  \boldsymbol{\mu}_t^\top \textbf{LCB}_{c,t}^{\top} \boldsymbol{e}_{A_t} \ge \textbf{UCB}_{r,t}^{\top} \boldsymbol{u}^* - \frac{\hat{Q}_t}{B} \cdot  \boldsymbol{\mu}_t^\top \textbf{LCB}_{c,t} \boldsymbol{u}^*,
\end{equation*}
which is equivalent to
\begin{equation}\label{eq:optimism_step}
    \textbf{UCB}_{r,t}^\top \boldsymbol{u}^* - \text{UCB}_{r,t}(A_t) + \frac{\hat{Q}_t}{B} \cdot  \boldsymbol{\mu}_t^{\top} \left(\frac{B}{\hat{Q}_t} \boldsymbol{\beta} - \textbf{LCB}_{c,t}^{\top} \boldsymbol{u}^* \right) \le \frac{\hat{Q}_t}{B} \cdot  \boldsymbol{\mu}_t^\top \left(\frac{B}{\hat{Q}_t} \boldsymbol{\beta} - \textbf{LCB}_{c,t}(A_t) \right).
\end{equation}
Multiply $q_t$ on both side of (\ref{eq:optimism_step}), and sum over $t$ from 1 to $\tau - 1$. By  Lemma \ref{lem-ocoalg-regret}, for any $\boldsymbol{\mu} \in \Delta_{d+1}$ it holds that

\begin{equation}
    \begin{aligned}
    &\sum_{t=1}^{\tau - 1} q_t \textbf{UCB}_{r,t}^{\top} \boldsymbol{u}^* - \sum_{t=1}^{\tau - 1}q_t \text{UCB}_{r,t}(A_t) + \sum_{t=1}^{\tau - 1}q_t \frac{\hat{Q}_t}{B} \cdot  \boldsymbol{\mu}_t^{\top} \left(\frac{B}{\hat{Q}_t} \boldsymbol{\beta} - \textbf{LCB}_{c,t}^{\top} \boldsymbol{u}^* \right) \\
    \le & \sum_{t=1}^{\tau-1} q_t \frac{\hat{Q}_t}{B} \cdot  \boldsymbol{\mu}_t^\top \left(\frac{B}{\hat{Q}_t} \boldsymbol{\beta} - \textbf{LCB}_{c,t}(A_t) \right) \\
    \le & \sum_{t=1}^{\tau-1} q_t \frac{\hat{Q}_t}{B} \cdot \left(\frac{B}{\hat{Q}_t} \boldsymbol{\beta} - \textbf{LCB}_{c,t}(A_t) \right)^{\top} \boldsymbol{\mu} + O \left( \left(\overline{q} + \frac{\overline{q}^2}{b} \right) \sqrt{(\tau - 1)\ln(d+1)}  \right).
    \end{aligned}
    \label{eq-pf-upper-sumqt}
\end{equation}
Recall by Lemma \ref{lem-confidencebound} that with probability $\ge 1 - 3 KTd \delta$, we have 
\begin{equation*}
    \textbf{LCB}_{c,t} \le \boldsymbol{c}.
\end{equation*}
Hence, with probability $\ge 1 - 3 KTd \delta$, 
\begin{subequations}
    \begin{align}
    \sum_{t=1}^{\tau - 1}q_t \frac{\hat{Q}_t}{B} \cdot  \boldsymbol{\mu}_t^\top \left(\frac{B}{\hat{Q}_t} \boldsymbol{\beta} - \textbf{LCB}_{c,t}^\top \boldsymbol{u}^* \right) & \ge \sum_{t=1}^{\tau - 1}q_t \frac{\hat{Q}_t}{B} \cdot  \boldsymbol{\mu}_t^\top \left(\frac{B}{\hat{Q}_t} \boldsymbol{\beta} - \boldsymbol{c}^{\top} \boldsymbol{u}^* \right) \label{eq-pf-upper-main1-a}  \\
    & = \sum_{t=1}^{\tau - 1} q_t \frac{\hat{Q}_t}{B} \cdot \boldsymbol{\mu}_t^\top \left(\frac{B}{\hat{Q}_t} \boldsymbol{\beta} - \frac{B}{Q} \boldsymbol{\beta} \right) \nonumber \\ 
    &+ \sum_{t=1}^{\tau - 1}q_t \frac{\hat{Q}_t}{B} \cdot  \boldsymbol{\mu}_t^\top \left(\frac{B}{Q}  \boldsymbol{\beta} - \boldsymbol{c}^{\top} \boldsymbol{u}^* \right) \label{eq-pf-upper-main1-b}\\
    & \ge \sum_{t=1}^{\tau - 1}q_t \frac{\hat{Q}_t}{B} \cdot  \boldsymbol{\mu}_t^\top \left(\frac{B}{\hat{Q}_t} \boldsymbol{\beta} - \frac{B}{Q} \boldsymbol{\beta} \right) \label{eq-pf-upper-main1-c}\\
    & \ge - \sum_{t=1}^{\tau - 1}q_t \frac{\hat{Q}_t}{B}\left | \frac{B}{\hat{Q}_t} - \frac{B}{Q} \right |,
    \end{align}
    \label{eq-pf-upper-main1}
\end{subequations}
where (\ref{eq-pf-upper-main1-a}) comes from Lemma $\ref{lem-confidencebound}$, (\ref{eq-pf-upper-main1-b}) comes from rearranging the sum, and (\ref{eq-pf-upper-main1-c}) comes from the fact the definition of $\boldsymbol{u}^*$. We first consider the case $\tau \le T$, which implies that there exists $j_0 \in [d]$ such that 
\begin{equation}
    \sum_{t=1}^{\tau} q_t C_{t,j_0} > B \quad \Rightarrow \quad \sum_{t=1}^{\tau - 1} q_t C_{t,j_0} > B-\overline{q}.
    \label{eq-pf-upper-taucondition}
\end{equation}
Take $\boldsymbol{\mu} = \lambda \boldsymbol{e}_{j_0} + (1 - \lambda) \boldsymbol{e}_{d+1}$, where $\lambda \in [0,1]$ is a constant that we tune later. In this case, with probability $\ge 1 - 3KT \delta$,

\begin{equation}
    \sum_{t=1}^{\tau - 1} q_t \textbf{UCB}_{r,t}^{\top} \boldsymbol{u}^* \ge \sum_{t=1}^{\tau - 1} q_t \boldsymbol{r}_t^\top \boldsymbol{u}^* = \text{OPT}_{\text{LP}} \frac{Q_{\tau - 1}}{Q},
    \label{eq-pf-upper-taucase1}
\end{equation}
and 
\begin{equation}
    \begin{aligned}
    \sum_{t=1}^{\tau-1} q_t \frac{\hat{Q}_t}{B}  \left(\frac{B}{\hat{Q}_t} \boldsymbol{\beta} - \textbf{LCB}_{c,t}(A_t) \right)^{\top} \left ( \lambda \boldsymbol{e}_{j_0} + (1 - \lambda) \boldsymbol{e}_{d+1} \right ) &=\lambda \sum_{t=1}^{\tau-1} q_t \frac{\hat{Q}_t}{B}  \left(\frac{B}{\hat{Q}_t} - \text{LCB}_{c,t}(A_t,j_0) \right) \\
    & = \lambda \sum_{t=1}^{\tau-1} q_t \frac{\hat{Q}_t}{B}  \left(\frac{B}{\hat{Q}_t} - \frac{B}{Q} \right) \\
    & + \lambda \sum_{t=1}^{\tau-1} q_t \frac{\hat{Q}_t}{B}  \left(\frac{B}{Q} - C_{t,j_0} \right) \\
    &+ \lambda \sum_{t=1}^{\tau-1} q_t \frac{\hat{Q}_t}{B}  \left(C_{t,j_0} - \text{LCB}_{c,t}(A_t,j_0) \right).
    \end{aligned}
    \label{eq-pf-upper-taucase2}
\end{equation}
Then we deal with each term respectively:
\begin{subequations}
    \begin{align}
    \sum_{t=1}^{\tau-1} q_t \frac{\hat{Q}_t}{B}  \left(\frac{B}{Q} - C_{t,j_0} \right) & = \sum_{t=1}^{\tau-1} q_t \frac{Q}{B}  \left(\frac{B}{Q} - C_{t,j_0} \right) + \sum_{t=1}^{\tau-1} q_t \frac{\hat{Q}_t - Q }{B}  \left(\frac{B}{Q} - C_{t,j_0} \right) \label{eq-pf-upper-taucase2-1-a}\\
    & \le Q_{\tau - 1} - \frac{Q}{B}\sum_{t=1}^{\tau - 1} q_t C_{t,j_0} + \frac{1}{B} \sum_{t=1}^{\tau-1} q_t \epsilon_t  \left|\frac{B}{Q} - C_{t,j_0} \right| \label{eq-pf-upper-taucase2-1-b}\\
    & < Q_{\tau - 1} - Q + \frac{Q}{B} \overline{q} + \frac{1}{B} \sum_{t=1}^{\tau-1} q_t \epsilon_t  \frac{B}{Q} + \frac{1}{B} \sum_{t=1}^{\tau-1} q_t \epsilon_t C_{t,j_0} \label{eq-pf-upper-taucase2-1-c}\\
    & \le Q_{\tau - 1} - Q + \frac{Q}{B} \overline{q} + \left (\frac{1}{Q} + \frac{1}{B} \right) \sum_{t=1}^{\tau-1} q_t \epsilon_t, \label{eq-pf-upper-taucase2-1-d} 
    \end{align}
    \label{eq-pf-upper-taucase2-1}
\end{subequations}
where (\ref{eq-pf-upper-taucase2-1-a}) comes from rearranging the sum, (\ref{eq-pf-upper-taucase2-1-c}) comes from the (\ref{eq-pf-upper-taucondition}), and (\ref{eq-pf-upper-taucase2-1-d}) comes from the fact that $C_{t,j_0} $ is supported in $[0,1]$. Similarly, 
\begin{equation}
    \begin{aligned}
    \sum_{t=1}^{\tau-1} q_t \frac{\hat{Q}_t}{B}  \left(C_{t,j_0} - \text{LCB}_{c,t}(A_t,j_0) \right) & =  \sum_{t=1}^{\tau-1} q_t \frac{Q}{B}  \left(C_{t,j_0} - \text{LCB}_{c,t}(A_t,j_0) \right) \\
    & + \sum_{t=1}^{\tau-1} q_t \frac{\hat{Q}_t - Q}{B}  \left(C_{t,j_0} - \text{LCB}_{c,t}(A_t,j_0) \right) \\
    & \le \frac{Q}{B} \left |\sum_{t=1}^{\tau-1} q_t \left(C_{t,j_0} - \text{LCB}_{c,t}(A_t,j_0) \right) \right | + \frac{1}{B} \sum_{t=1}^{\tau-1} q_t \epsilon_t,
    \end{aligned}
    \label{eq-pf-upper-taucase2-2}
\end{equation}
where the inequality comes from the fact that $|\hat{Q}_t - Q| \le \epsilon_t$, $0 \le \text{LCB}_{c,t}(A_t,j_0), C_{t,j_0} \le 1$. Combine (\ref{eq-pf-upper-taucase2}), (\ref{eq-pf-upper-taucase2-1}) and (\ref{eq-pf-upper-taucase2-2}), we obtain
\begin{equation}
    \begin{aligned}
    & \sum_{t=1}^{\tau-1} q_t \frac{\hat{Q}_t}{B}  \left(\frac{B}{\hat{Q}_t} \boldsymbol{\beta}- \textbf{LCB}_{c,t}(A_t) \right)^{\top} \left ( \lambda \boldsymbol{e}_{j_0} + (1 - \lambda) \boldsymbol{e}_{d+1} \right ) \\
    \le & \lambda \sum_{t=1}^{\tau-1} q_t \frac{\hat{Q}_t}{B}  \left(\frac{B}{\hat{Q}_t} - \frac{B}{Q} \right) + \lambda \left(Q_{\tau - 1} - Q + \frac{Q}{B} \overline{q} + \left (\frac{1}{Q} + \frac{1}{B} \right) \sum_{t=1}^{\tau-1} q_t \epsilon_t \right) \\
    + & \lambda \left( \frac{Q}{B} \left |\sum_{t=1}^{\tau-1} q_t \left(C_{t,j_0} - \text{LCB}_{c,t}(A_t,j_0) \right) \right | + \frac{1}{B} \sum_{t=1}^{\tau-1} q_t \epsilon_t \right) \\
    \le & \lambda \left(Q_{\tau - 1} - Q + \frac{Q}{B} \overline{q} +  \frac{Q}{B} \left |\sum_{t=1}^{\tau-1} q_t \left(C_{t,j_0} - \text{LCB}_{c,t}(A_t,j_0) \right) \right | \right) + \sum_{t=1}^{\tau-1} q_t \frac{\hat{Q}_t}{B}  \left(\frac{B}{\hat{Q}_t} - \frac{B}{Q} \right) \\
    +& O \left( \left (\frac{1}{Q} + \frac{1}{B} \right) \sum_{t=1}^{\tau-1} q_t \epsilon_t \right),
    \end{aligned}
    \label{eq-pf-upper-taucase3}
\end{equation}
Finally, combine (\ref{eq-pf-upper-sumqt}), (\ref{eq-pf-upper-main1}), (\ref{eq-pf-upper-taucase1}), (\ref{eq-pf-upper-taucase3}), we obtain
\begin{equation*}
    \begin{aligned}
    & \text{OPT}_{\text{LP}} \frac{Q_{\tau - 1}}{Q} - \sum_{t=1}^{\tau - 1}q_t \text{UCB}_{r,t}(A_t) - \sum_{t=1}^{\tau - 1}q_t \frac{\hat{Q}_t}{B}\left | \frac{B}{\hat{Q}_t} - \frac{B}{Q} \right | \\
    \le & \lambda \left(Q_{\tau - 1} - Q + \frac{Q}{B} \overline{q} +  \frac{Q}{B} \left |\sum_{t=1}^{\tau-1} q_t \left(C_{t,j_0} - \text{LCB}_{c,t}(A_t,j_0) \right) \right | \right) + \sum_{t=1}^{\tau-1} q_t \frac{\hat{Q}_t}{B}  \left(\frac{B}{\hat{Q}_t} - \frac{B}{Q} \right) \\
    +& O \left( \left (\frac{1}{Q} + \frac{1}{B} \right) \sum_{t=1}^{\tau-1} q_t \epsilon_t \right),
    \end{aligned}
\end{equation*}
which is equivalent to
\begin{equation*}
    \begin{aligned}
    \text{OPT}_{\text{LP}} - \sum_{t=1}^{\tau - 1}q_t \text{UCB}_{r,t}(A_t) & \le \text{OPT}_{\text{LP}} \left(1 - \frac{Q_{\tau - 1}}{Q} \right) + \lambda \left(Q_{\tau - 1} - Q + \frac{Q}{B} \overline{q} \right . \\
    & \left .+  \frac{Q}{B} \left |\sum_{t=1}^{\tau-1} q_t \left(C_{t,j_0} - \text{LCB}_{c,t}(A_t,j_0) \right) \right | \right) + 2 \sum_{t=1}^{\tau-1} q_t \frac{\hat{Q}_t}{B}  \left | \frac{B}{\hat{Q}_t} - \frac{B}{Q} \right | \\
    &+ O \left( \left (\frac{1}{Q} + \frac{1}{B} \right) \sum_{t=1}^{\tau-1} q_t \epsilon_t \right) + O \left( \left(\overline{q} + \frac{\overline{q}^2}{b} \right) \sqrt{(\tau - 1)\ln(d+1) } \right).
    \end{aligned}
\end{equation*}

Let $\lambda = \frac{\text{OPT}_{\text{LP}}}{Q} \le 1$ (This is because $\text{OPT}_{\text{LP}} = Q \boldsymbol{r}^{\top} \boldsymbol{u}^* \le Q$), then we can further derive with probability $\ge 1 - 3KTd \delta$,
\begin{equation}
    \begin{aligned}
    \text{OPT}_{\text{LP}} - \sum_{t=1}^{\tau - 1}q_t \text{UCB}_{r,t}(A_t) & \le \text{OPT}_{\text{LP}} \left(1 - \frac{Q_{\tau-1}}{Q} \right) + \frac{\text{OPT}_{\text{LP}}}{Q} \left(Q_{\tau - 1} - Q + \frac{Q}{B} \overline{q} \right. \\
    & \left.+  \frac{Q}{B} \left |\sum_{t=1}^{\tau-1} q_t \left(C_{t,j_0} - \text{LCB}_{c,t}(A_t,j_0) \right) \right | \right) + 2 \sum_{t=1}^{\tau-1} q_t \frac{\hat{Q}_t}{B}  \left | \frac{B}{\hat{Q}_t} - \frac{B}{Q} \right | \\
    & + O \left( \left (\frac{1}{Q} + \frac{1}{B} \right) \sum_{t=1}^{\tau-1} q_t \epsilon_t \right) + O \left( \left(\overline{q} + \frac{\overline{q}^2}{b} \right) \sqrt{(\tau - 1)\ln(d+1) }\right) \\
    & = \frac{\text{OPT}_{\text{LP}}}{B} \overline{q} + \frac{\text{OPT}_{\text{LP}}}{B}  \left |\sum_{t=1}^{\tau-1} q_t \left(C_{t,j_0} - \text{LCB}_{c,t}(A_t,j_0) \right) \right | + 2 \sum_{t=1}^{\tau-1} q_t \frac{\hat{Q}_t}{B}  \left | \frac{B}{\hat{Q}_t} - \frac{B}{Q} \right | \\
    &  + O \left( \left (\frac{1}{Q} + \frac{1}{B} \right) \sum_{t=1}^{\tau-1} q_t \epsilon_t \right) + O \left( \left(\overline{q} + \frac{\overline{q}^2}{b} \right) \sqrt{(\tau - 1)\ln(d+1)} \right) \\
    & \le O \left(\log\left(\frac{1}{\delta} \right) \text{OPT}_{\text{LP}} \sqrt{\frac{\overline{q} K}{B}}    + \left (\frac{1}{Q} + \frac{1}{B} \right) \sum_{t=1}^{\tau-1} q_t \epsilon_t \right .\\
    &\left. + \left(\overline{q} + \frac{\overline{q}^2}{b} \right) \sqrt{(\tau - 1)\ln(d+1)}\right),
    \end{aligned}
    \label{eq-pf-upper-taubound}
\end{equation}
where the second inequality comes from Lemma \ref{lem-lcbc} and the following 
\begin{equation*}
    \sum_{t=1}^{\tau} q_t \frac{\hat{Q}_t}{B} \left | \frac{B}{Q} - \frac{B}{\hat{Q}_t} \right | = \frac{1}{Q} \sum_{t=1}^{\tau} q_t \left |\hat{Q}_t - Q\right | \le \frac{1}{Q} \sum_{t=1}^{\tau} q_t \epsilon_t.
\end{equation*}
The above concludes our arguments for the case $\tau\leq T$. In complement, we then consider the case $\tau > T$, which means that $\tau = T+1$, and no resource is fully exhausted during the horizon. With probability $\ge 1 -3 KT \delta$, we have
\begin{equation}
    \sum_{t=1}^{T} q_t \textbf{UCB}_{r,t}^{\top} \boldsymbol{u}^* \ge \sum_{t=1}^{T} q_t \boldsymbol{r}_t^{\top} \boldsymbol{u}^* = \text{OPT}_{\text{LP}}.
    \label{eq-pf-upper-Tcase1}
\end{equation}
Take $\boldsymbol{\mu} = \boldsymbol{e}_{d+1}$ and combine (\ref{eq-pf-upper-sumqt}), (\ref{eq-pf-upper-main1}), (\ref{eq-pf-upper-Tcase1}), with probablity $\ge 1 - 3 KT \delta$, we have
\begin{equation}
    \begin{aligned}
    \text{OPT}_{\text{LP}} - \sum_{t=1}^{\tau - 1}q_t \text{UCB}_{r,t}(A_t) & \le \sum_{t=1}^{\tau - 1}q_t \frac{\hat{Q}_t}{B}\left | \frac{B}{\hat{Q}_t} - \frac{B}{Q} \right | + O \left( \left(\overline{q} + \frac{\overline{q}^2}{b} \right) \sqrt{T \ln(d+1)} \right) \\
    & \le O \left(\frac{1}{Q} \sum_{t=1}^{\tau-1} q_t \epsilon_t + \left(\overline{q} + \frac{\overline{q}^2}{b} \right) \sqrt{T\ln(d+1) } \right).
    \end{aligned}
    \label{eq-pf-upper-Tbound}
\end{equation}
Combine (\ref{eq-pf-upper-Tbound}) and (\ref{eq-pf-upper-taubound}), for any stopping time $\tau$, with probability $\ge 1 -3KTd \delta$, we have
\begin{equation*}
    \begin{aligned}
    \text{OPT}_{\text{LP}} - \sum_{t=1}^{\tau - 1}q_t \text{UCB}_{r,t}(A_t) & \le O \left(\log\left(\frac{1}{\delta} \right) \text{OPT}_{\text{LP}} \sqrt{\frac{\overline{q} K}{B}}    + \left (\frac{1}{Q} + \frac{1}{B} \right) \sum_{t=1}^{\tau-1} q_t \epsilon_t \right. \\
    & \left. + \left(\overline{q} + \frac{\overline{q}^2}{b} \right) \sqrt{(\tau - 1) \ln(d+1)} \right).
    \end{aligned}
\end{equation*}
By Lemma \ref{lem-ucbr}, we can further derive it to the high probability bound, that with probability $\ge 1 -3KTd \delta$,
\begin{equation*}
    \begin{aligned}
    \text{OPT}_{\text{LP}} - \sum_{t=1}^{\tau - 1}q_t R_t & \le O \left(\log\left(\frac{1}{\delta} \right) \left(  \text{OPT}_{\text{LP}} \sqrt{\frac{\overline{q} K}{B}} + \sqrt{\overline{q} K \sum_{t=1}^{\tau - 1} q_t R_t} +   \overline{q} K \log\left (\frac{T}{K} \right ) \right) \right. \\
    & \left . + \left (\frac{1}{Q} + \frac{1}{B} \right) \sum_{t=1}^{\tau-1} q_t \epsilon_t + \left(\overline{q} + \frac{\overline{q}^2}{b} \right) \sqrt{(\tau - 1)\ln(d+1)} \right) \\
    & \le O \left(\log\left(\frac{1}{\delta} \right) \left(  \text{OPT}_{\text{LP}} \sqrt{\frac{\overline{q} K}{B}} + \sqrt{\overline{q} K \text{OPT}_{\text{LP}} }  \right)  + \left (\frac{1}{Q} + \frac{1}{B} \right) \sum_{t=1}^{\tau-1} q_t \epsilon_t \right.\\ 
    & \left.+ \left(\overline{q} + \frac{\overline{q}^2}{b} \right) \sqrt{(\tau - 1)\ln(d+1)} \right),
    \end{aligned}
\end{equation*}
where the second inequality comes from the fact that $\text{OPT}_{\text{LP}} \ge \sum_{t=1}^{\tau - 1} q_t R_t$. Now we finish the proof of Theorem \ref{thm-upperbound-1}.

\section{Proofs for Confidence Radii}\label{sec:pf_conf_radii}
This section contains proofs for the confidence radius results, which largely follow the literature, but we provide complete proofs since we are in a non-stationary setting. Section \ref{sec:pf_lem-radius} provides the proof for Lemma \ref{lem-radius}, which allows us to extract the implicit constants in existing proofs in \cite{babaioff2015dynamic,agrawal2014bandits}. Section \ref{sec:pf_lem-confidencebound} provides the proof for Lemma \ref{lem-confidencebound}. Finally, section \ref{sec:pf_upper_lower}, we prove inequalities (\ref{eq-spf-upper-ucb}, \ref{eq-spf-upper-lcb}).
\subsection{Proof for Lemma \ref{lem-radius}, due to \cite{babaioff2015dynamic,agrawal2014bandits} }\label{sec:pf_lem-radius}
In this subsection, we prove Lemma \ref{lem-radius} by following the line of arguments in \cite{babaioff2015dynamic}. We emphasize that a version of the Lemma has been proved in \cite{babaioff2015dynamic}. We dervie the Lemma for the purpose of extracting the values of the constant coefficients.
We first extract some relevant concentration inequalities in the following two Lemmas. 
\begin{lemma}[Theorem 8 in \cite{chung2006concentration}] Suppose $\{U_i\}^n_{i=1}$ are independent random variables satisfying $U_i \le M$, for $1 \le i \le n$ almost surely. Let $U = \sum_{i=1}^n U_i$, $\|U\|^2 = \sum_{i=1}^n \mathbb{E}[U_i^2]$. With probability $\ge 1 - e^{-x}$, we have
\begin{equation*}
    U - \mathbb{E}[U] \le \sqrt{2 \|U\|^2 x} + \frac{2 x}{3} \max\{M,0\}.
\end{equation*}
\label{lem-concen-squaresum}
\end{lemma}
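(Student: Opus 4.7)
The statement is a Bernstein-type concentration inequality, so the plan is to follow the classical Chernoff--Cram\'er route, adapted to handle the one-sided boundedness $U_i \le M$. First I would center the summands by introducing $\tilde{U}_i := U_i - \mathbb{E}[U_i]$ and study the moment generating function $\mathbb{E}[e^{\lambda \tilde{U}_i}]$ for $\lambda > 0$. The main obstacle, and the step that drives the whole proof, is the following Bennett-type MGF bound: for any random variable $X$ with $\mathbb{E}[X] = 0$ and $X \le M_+$ almost surely (where $M_+ := \max\{M, 0\}$),
\begin{equation*}
\mathbb{E}[e^{\lambda X}] \le \exp\!\left( \frac{\lambda^2\, \mathbb{E}[X^2]}{2(1 - \lambda M_+ / 3)} \right), \qquad 0 < \lambda < 3/M_+,
\end{equation*}
with the understanding that the constraint on $\lambda$ is vacuous when $M_+ = 0$. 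I would derive this by writing $e^{\lambda X} - 1 - \lambda X = \sum_{k \ge 2} (\lambda X)^k / k!$, bounding this pointwise by $\tfrac{1}{2}\lambda^2 X^2 \sum_{k \ge 0} (\lambda M_+ / 3)^k$ using $k! \ge 2 \cdot 3^{k-2}$ and $X \le M_+$, and then taking expectations.

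The rest is a routine Chernoff argument. By independence of the $\tilde{U}_i$,
\begin{equation*}
\mathbb{E}\!\left[ e^{\lambda (U - \mathbb{E}[U])} \right] \le \exp\!\left( \frac{\lambda^2 \|U\|^2}{2(1 - \lambda M_+ /3)} \right),
\end{equation*}
and Markov's inequality gives, for every $t > 0$,
\begin{equation*}
\Pr\!\left[U - \mathbb{E}[U] \ge t\right] \le \exp\!\left( -\lambda t + \frac{\lambda^2 \|U\|^2}{2(1 - \lambda M_+/3)} \right).
\end{equation*}
Optimizing over $\lambda \in (0, 3/M_+)$ is the classical Bernstein computation; the minimizer $\lambda^\ast = t / (\|U\|^2 + M_+ t/3)$ yields the Bernstein tail bound
\begin{equation*}
\Pr\!\left[U - \mathbb{E}[U] \ge t\right] \le \exp\!\left( -\frac{t^2}{2(\|U\|^2 + M_+ t/3)} \right).
\end{equation*}

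Finally, I would invert this tail bound in the form appropriate to the lemma. Setting the right-hand side equal to $e^{-x}$ yields the quadratic $t^2 - \tfrac{2 M_+ x}{3}\, t - 2 \|U\|^2 x = 0$, whose positive root satisfies
\begin{equation*}
t \;=\; \frac{M_+ x}{3} + \sqrt{\frac{M_+^2 x^2}{9} + 2 \|U\|^2 x} \;\le\; \sqrt{2 \|U\|^2 x} + \frac{2 M_+ x}{3}
\end{equation*}
via $\sqrt{a+b} \le \sqrt{a} + \sqrt{b}$, which is exactly the stated bound. The only remaining wrinkle is the case $M \le 0$, but there the MGF bound degenerates to the sub-Gaussian estimate $\mathbb{E}[e^{\lambda X}] \le e^{\lambda^2 \mathbb{E}[X^2]/2}$, the $\tfrac{2 M x}{3}$ correction disappears, and the replacement of $M$ by $M_+ = \max\{M,0\}$ in the statement absorbs this case uniformly.
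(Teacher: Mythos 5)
The paper does not actually prove this lemma; it is imported verbatim as Theorem 8 of \cite{chung2006concentration}, whose proof works with the \emph{uncentered} variables. Your proposal contains a genuine gap, and it sits exactly at the point where you center. The hypothesis is $U_i \le M$, so the centered variable $\tilde U_i = U_i - \mathbb{E}[U_i]$ is bounded above only by $M - \mathbb{E}[U_i]$, which can strictly exceed $M_+ = \max\{M,0\}$ whenever $\mathbb{E}[U_i] < \min\{M, 0\}$ --- in particular, in the paper's own application of this lemma one takes $U_i = -(X_i - \mathbb{E}[X_i])^2$ with $M = 0$, where $\mathbb{E}[U_i] = -\mathrm{Var}(X_i) < 0$ and the centered variable is bounded above by a \emph{positive} number. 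Your Bennett-type MGF lemma therefore cannot be applied to $\tilde U_i$ with parameter $M_+$, and your claim that the $M \le 0$ case "degenerates to the sub-Gaussian estimate $\mathbb{E}[e^{\lambda X}] \le e^{\lambda^2 \mathbb{E}[X^2]/2}$" is false: take $X = 2$ with probability $1/5$ and $X = -1/2$ with probability $4/5$, so $\mathbb{E}[X]=0$, $\mathbb{E}[X^2]=1$, yet $\mathbb{E}[e^{X}] \approx 1.96 > e^{1/2}$. Indeed the tail bound your route would deliver --- the stated inequality with $\sum_i \mathrm{Var}(U_i)$ in place of $\sum_i \mathbb{E}[U_i^2]$ --- is simply not true (the large-deviation rate of i.i.d.\ copies of the $X$ above is about $0.38$ at level $1$, below the $1/2$ your bound would require), which is precisely why Chung and Lu pay the price of the larger variance proxy $\|U\|^2 = \sum_i \mathbb{E}[U_i^2]$.

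There is a second, related flaw in your derivation of the MGF bound itself: the pointwise estimate $\sum_{k\ge 2}(\lambda X)^k/k! \le \tfrac12 \lambda^2 X^2 \sum_{k\ge 0}(\lambda M_+/3)^k$ uses $X^{k-2}\le M_+^{k-2}$, which fails for even $k$ when $X < -M_+$ (e.g.\ $X=-10$, $M_+=1$, $k=4$). The standard repair, and the one Chung--Lu use, is to write $e^{y} - 1 - y = \tfrac{y^2}{2}\,\phi(y)$ with $\phi(y) = 2(e^y-1-y)/y^2$, use that $\phi$ is nondecreasing on all of $\mathbb{R}$ to get $\phi(\lambda U_i) \le \phi(\lambda M)$ from the one-sided bound $U_i \le M$ alone, and then bound $\mathbb{E}[e^{\lambda U_i}] \le \exp\bigl(\lambda \mathbb{E}[U_i] + \tfrac{\lambda^2}{2}\mathbb{E}[U_i^2]\,\phi(\lambda M)\bigr)$ \emph{without centering}; the factor $\phi(\lambda M)\le (1-\lambda M_+/3)^{-1}$ then follows from $k!\ge 2\cdot 3^{k-2}$, and your Chernoff optimization and the inversion of the quadratic from that point on are correct. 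I suggest you rewrite the argument along these lines, keeping the uncentered summands throughout.
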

\begin{lemma}[Theorem 6 in \cite{chung2006concentration}] Suppose $U_i$ are independent random variables satisfying $U_i - \mathbb{E}[U_i] \le M$, $M>0$, for $1 \le i \le n$. Let $U = \sum_{i=1}^n U_i$, $\text{Var}(U) = \sum_{i=1}^n \text{Var}(U_i)$, then with probability $\ge 1 - e^{-x}$, we have
\begin{equation*}
    U - \mathbb{E}[U] \le \sqrt{2 \text{Var}(U) x} + \frac{2 M x}{3}.
\end{equation*}
\label{lem-concen-var}
\end{lemma}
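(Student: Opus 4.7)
The final statement is a classical Bernstein-type concentration inequality, and the plan is to follow the standard Chernoff + moment-generating-function method. First, I would center the variables by defining $V_i := U_i - \mathbb{E}[U_i]$ so that $V_i \le M$ almost surely, $\mathbb{E}[V_i] = 0$, and the sum $V := \sum_i V_i$ satisfies $\mathrm{Var}(V) = \sum_i \mathbb{E}[V_i^2]$ by independence. The claim then reduces to bounding $\Pr[V > t]$ for the specific choice $t = \sqrt{2\,\mathrm{Var}(V)\,x} + 2Mx/3$.

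Second, I would apply the exponential Markov inequality together with independence: for any $\lambda \in (0, 3/M)$,
\begin{equation*}
\Pr[V > t] \;\le\; e^{-\lambda t} \prod_{i=1}^n \mathbb{E}[e^{\lambda V_i}].
\end{equation*}
The key technical ingredient is the per-coordinate MGF estimate
\begin{equation*}
\mathbb{E}[e^{\lambda V_i}] \;\le\; \exp\!\left(\frac{\lambda^2\,\mathbb{E}[V_i^2]/2}{1 - \lambda M/3}\right),
\end{equation*}
which can be derived by expanding $e^{\lambda V_i} - 1 - \lambda V_i \le V_i^2 \sum_{k \ge 2} \lambda^k M^{k-2}/k!$ via $V_i \le M$, taking expectations (killing the linear term since $\mathbb{E}[V_i]=0$), using the geometric-series bound $\sum_{k\ge 2} u^{k-2}/k! \le 1/(2(1-u/3))$ for $u = \lambda M \in [0,3)$, and finally invoking $1+y \le e^y$.

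Third, multiplying over $i$ gives the joint bound $\Pr[V > t] \le \exp\!\big(-\lambda t + \tfrac{\lambda^2 \mathrm{Var}(V)/2}{1 - \lambda M/3}\big)$. Choosing $\lambda = t/(\mathrm{Var}(V) + Mt/3)$ yields the familiar Bernstein tail $\Pr[V > t] \le \exp(-t^2/(2\mathrm{Var}(V) + 2Mt/3))$. Setting the right side equal to $e^{-x}$ leaves a quadratic in $t$, and the elementary inequality $\sqrt{a+b} \le \sqrt{a} + \sqrt{b}$ then shows that $t = \sqrt{2\,\mathrm{Var}(V)\,x} + 2Mx/3$ suffices, which is the stated bound.

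The main obstacle is the MGF estimate in the second paragraph, since it is the unique place where both the boundedness $V_i \le M$ and the variance information must be combined in a tight way; a cleaner route, if the series manipulation becomes awkward, is to package it via Bennett's function $\phi(u) = (e^u - 1 - u)/u^2$ and exploit its monotonicity on $[0,\infty)$, which is the standard modern treatment and avoids any ad hoc bookkeeping.
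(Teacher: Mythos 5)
Your overall strategy is correct, but note that the paper does not prove this lemma at all: it is imported verbatim as Theorem 6 of \cite{chung2006concentration}, so there is no in-paper argument to compare against. Your Chernoff/MGF route with the choice $\lambda = t/(\mathrm{Var}(V)+Mt/3)$, the resulting Bernstein tail $\exp(-t^2/(2\mathrm{Var}(V)+2Mt/3))$, and the inversion $t=\sqrt{2\,\mathrm{Var}(V)x}+2Mx/3$ is exactly the standard derivation of this statement, and the algebra at the end (solving the quadratic and using $\sqrt{a+b}\le\sqrt a+\sqrt b$) checks out.

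One step as written would fail, though you essentially flag it yourself. The hypothesis is only the \emph{one-sided} bound $V_i \le M$; there is no lower bound on $V_i$. Your term-by-term estimate $\sum_{k\ge 2}\lambda^k V_i^k/k! \le V_i^2\sum_{k\ge 2}\lambda^k M^{k-2}/k!$ requires $V_i^k \le V_i^2 M^{k-2}$ for every even $k\ge 4$, which is false when $V_i < -M$. The correct fix is precisely the Bennett-function argument you mention as the ``cleaner route'': $e^u - 1 - u = u^2\phi(u)$ with $\phi(u)=(e^u-1-u)/u^2$, and $\phi$ is nondecreasing on \emph{all} of $\mathbb{R}$ (not merely on $[0,\infty)$ as you state --- you need it on the negative axis too, since $\lambda V_i$ can be arbitrarily negative). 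Then $e^{\lambda V_i}-1-\lambda V_i \le (\lambda V_i)^2\phi(\lambda M)$, taking expectations kills the linear term, and $\phi(u)\le 1/(2(1-u/3))$ for $0\le u<3$ gives your MGF bound. With that substitution the proof is complete; the series manipulation should be treated as heuristic only.
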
 
Using Lemma \ref{lem-concen-var}, we first derive the following Lemma that bounds the empirical mean:
\begin{lemma}
Let $\{X_i\}^n_{i=1}$ be independent random variables supported in $[0,1]$. Let $X = \sum_{i=1}^n X_i$ and $\text{Var}(X) = \sum_{i=1}^n \text{Var}(X_i)$. For any fixed $x>0$, With probability $\ge 1 - 2e^{-x}$, we have
\begin{equation*}
    \left |X - \mathbb{E}[X] \right | \le \sqrt{2 \text{Var}(X) x} + \frac{2x}{3}.
\end{equation*}
\label{lem-concen-absvar}
\end{lemma}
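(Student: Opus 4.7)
The plan is to apply Lemma \ref{lem-concen-var} twice in a symmetric fashion and then combine via a union bound. The statement of Lemma \ref{lem-concen-var} is one-sided, controlling $U - \mathbb{E}[U]$ from above, so to obtain a two-sided bound on $|X - \mathbb{E}[X]|$ I would instantiate it once with $U_i = X_i$ and once with $U_i = -X_i$.

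First, since $X_i \in [0,1]$, we have $X_i - \mathbb{E}[X_i] \le 1 - 0 = 1$ almost surely, so Lemma \ref{lem-concen-var} with $M=1$ yields that, with probability at least $1 - e^{-x}$,
\[
    X - \mathbb{E}[X] \;\le\; \sqrt{2\,\mathrm{Var}(X)\, x} + \frac{2x}{3}.
\]
Next, setting $U_i = -X_i$, we also have $-X_i - \mathbb{E}[-X_i] = \mathbb{E}[X_i] - X_i \le 1$ almost surely, and $\mathrm{Var}(-X_i) = \mathrm{Var}(X_i)$, so summing gives $\mathrm{Var}(\sum_i -X_i) = \mathrm{Var}(X)$. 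Applying Lemma \ref{lem-concen-var} again with $M=1$ to $\{-X_i\}$ yields, with probability at least $1 - e^{-x}$,
\[
    -X - \mathbb{E}[-X] \;=\; \mathbb{E}[X] - X \;\le\; \sqrt{2\,\mathrm{Var}(X)\, x} + \frac{2x}{3}.
\]
A union bound over these two events gives that, with probability at least $1 - 2e^{-x}$, both hold simultaneously, which is exactly the claimed two-sided inequality.

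There is no real obstacle here; the only subtleties are verifying that the one-sided hypothesis of Lemma \ref{lem-concen-var} is satisfied in both directions (immediate from the support $[0,1]$) and noting that the variance is invariant under sign flip, so the same $\mathrm{Var}(X)$ appears in both tail bounds. Accordingly, this proposal is essentially a two-line derivation once Lemma \ref{lem-concen-var} is in hand.
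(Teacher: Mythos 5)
Your proposal is correct and follows exactly the paper's argument: the paper likewise applies Lemma \ref{lem-concen-var} with $U_i = X_i$ and $U_i = -X_i$, each with $M=1$, and combines the two one-sided bounds via a union bound to get the $1-2e^{-x}$ probability. Your write-up simply makes explicit the verification of the one-sided hypothesis and the sign-invariance of the variance, which the paper leaves implicit.
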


\begin{proof}[Proof of Lemma \ref{lem-concen-absvar}]
Apply Lemma \ref{lem-concen-var} with $U_i = X_i$, $U_i = - X_i$, respectively, and $M=1$, then with probability $\ge 1 - 2 e^{-x}$, we have
\begin{equation*}
    \left |X - \mathbb{E}[X] \right | \le \sqrt{2 \text{Var}(X) x} + \frac{2x}{3}.
\end{equation*}
\end{proof}
Next, we bound the difference between the ground truth variance and its empirical counterpart using Lemma \ref{lem-concen-squaresum}:
\begin{lemma}
Suppose $X_i$ are independent random variables supported in $[0,1]$. Let $X = \sum_{i=1}^n X_i$, $\text{Var}(X) = \sum_{i=1}^n \text{Var}(X_i)$, $V_n = \sum_{i=1}^n \left(X_i - \mathbb{E}[X_i] \right)^2$then with probability $\ge 1 - 3e^{-x}$, we have
\begin{equation*}
    \sqrt{\text{Var}(X)} \le \sqrt{V_n} + 2 \sqrt{x}.
\end{equation*}
\label{lem-concen-varest}
\end{lemma}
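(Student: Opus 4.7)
The plan is to apply Lemma \ref{lem-concen-squaresum} to the one-sided random variables $U_i := -(X_i - \mathbb{E}[X_i])^2$, whose partial sum $U := \sum_{i=1}^n U_i = -V_n$ has expectation $\mathbb{E}[U] = -\operatorname{Var}(X)$. The upshot is that concentration of $V_n$ below $\operatorname{Var}(X)$ yields a quadratic inequality in $\sqrt{\operatorname{Var}(X)}$ that we can solve in closed form to obtain $\sqrt{\operatorname{Var}(X)} \le \sqrt{V_n} + 2\sqrt{x}$.

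The argument hinges on exploiting two consequences of the $[0,1]$ support. First, each $U_i$ is nonpositive, so we can set the upper-envelope parameter in Lemma \ref{lem-concen-squaresum} to $M=0$; this eliminates the linear-in-$x$ additive term and leaves only the Bernstein-type square-root term. Second, since $|X_i - \mathbb{E}[X_i]| \le 1$, we have $(X_i - \mathbb{E}[X_i])^4 \le (X_i - \mathbb{E}[X_i])^2$, so the second-moment parameter
\[
\|U\|^2 \;=\; \sum_{i=1}^n \mathbb{E}\bigl[(X_i - \mathbb{E}[X_i])^4\bigr] \;\le\; \operatorname{Var}(X).
\]
Plugging these two observations into Lemma \ref{lem-concen-squaresum} would give, with probability at least $1 - e^{-x}$, the inequality $\operatorname{Var}(X) - V_n \le \sqrt{2\operatorname{Var}(X)\cdot x}$. (This is a stronger failure probability than the $3e^{-x}$ claimed; the slack is presumably kept to align with companion concentration bounds elsewhere in the paper.)

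To close, I would set $u := \sqrt{\operatorname{Var}(X)}$ and $v := \sqrt{V_n}$, which turns the display above into the quadratic inequality $u^2 - \sqrt{2x}\,u - v^2 \le 0$. Solving and using subadditivity of $\sqrt{\cdot}$ on $\sqrt{2x + 4v^2}$ yields $u \le \tfrac{1}{2}(\sqrt{2x} + \sqrt{2x} + 2v) = v + \sqrt{2x} \le v + 2\sqrt{x}$, which is exactly the claim. The main (minor) subtlety I anticipate is the fourth-moment-to-variance comparison, which is where the $[0,1]$ support is genuinely used; the sign choice $U_i = -(X_i - \mathbb{E}[X_i])^2$ is what makes the one-sided bound from Lemma \ref{lem-concen-squaresum} applicable without incurring an $M > 0$ penalty.
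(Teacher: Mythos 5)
Your proposal is correct, and its first half coincides exactly with the paper's: the paper also applies Lemma \ref{lem-concen-squaresum} to $U_i = -(X_i-\mathbb{E}[X_i])^2$ with $M=0$, uses $(X_i-\mathbb{E}[X_i])^4 \le (X_i-\mathbb{E}[X_i])^2$ to bound $\|U\|^2$ by $\operatorname{Var}(X)$, and arrives at $\operatorname{Var}(X) \le V_n + \sqrt{2\operatorname{Var}(X)\,x}$ with probability $\ge 1-e^{-x}$. Where you diverge is in the finish: you solve this quadratic in $u=\sqrt{\operatorname{Var}(X)}$ directly, getting $u \le \tfrac{1}{2}\bigl(\sqrt{2x}+\sqrt{2x+4V_n}\bigr) \le \sqrt{V_n}+\sqrt{2x} \le \sqrt{V_n}+2\sqrt{x}$, which is airtight and in fact needs only the single event of probability $e^{-x}$. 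The paper instead takes a detour: it splits into cases according to whether $2\sqrt{x} \ge \sqrt{n}/2$, invokes Lemma \ref{lem-concen-absvar} to control an additional $(X-\mathbb{E}[X])^2/n$ term, and solves a messier quadratic with explicit constants such as $\tfrac{13}{12}\sqrt{2}+\tfrac14$; this is why its failure probability is $3e^{-x}$. That extra term is an artifact of adapting the argument of Audibert et al., where the empirical variance is centered at the sample mean rather than at the true means $\mathbb{E}[X_i]$ as in the statement here; with $V_n$ defined as in the lemma, your shorter route suffices and yields a strictly stronger probability guarantee. The only cosmetic caveat is that your write-up simplifies $\sqrt{2x+4V_n}$ via subadditivity to $\sqrt{2x}+2\sqrt{V_n}$, which is fine, and the final constant $2$ absorbs $\sqrt{2}$; nothing is missing.
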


\begin{proof}[Proof of Lemma \ref{lem-concen-varest}]
The proof follows the line of argument in \cite{audibert2009exploration}. First, we apply Lemma \ref{lem-concen-squaresum} with $U_i = -(X_i - \mathbb{E}[X_i])^2$ and $M = 0$. With probability $\ge 1 - e^{-x}$, we have
\begin{equation}
    \begin{aligned}
    \text{Var}(X) & \le V_n + \sqrt{2 \left(\sum_{i=1}^n \mathbb{E} \left[ \left(X_i - \mathbb{E}[X_i] \right)^4 \right] \right) x} \\
    & \le V_n + \sqrt{2 \left(\sum_{i=1}^n \mathbb{E} \left[ \left(X_i - \mathbb{E}[X_i] \right)^2 \right] \right) x} \\
    & = V_n + \sqrt{2 \text{Var}(X) x}.
    \end{aligned}
    \label{eq-concen-varest}
\end{equation}
Since $X_i\in [0,1]$ almost surely for all $i\in [n]$, we have
\begin{equation*}
    \text{Var}(X_i) = \mathbb{E}[X_i^2] - \mathbb{E}[X_i]^2 \le \mathbb{E}[X_i] - \mathbb{E}[X_i]^2 \le \frac{1}{4}.
\end{equation*}
Now, observe that
\begin{equation*}
    \text{Var}(X) = \sum_{i=1} \text{Var}(X_i) \le \sum_{i=1}^n \frac{1}{4} = \frac{n}{4} \quad \Rightarrow \sqrt{\text{Var}(X)} \le \frac{\sqrt{n}}{2}.
\end{equation*}
If $2 \sqrt{x} \ge \frac{\sqrt{n}}{2}$, then the Lemma evidently holds. Otherwise, we assume $2 \sqrt{x} \le \frac{\sqrt{n}}{2}$, which is equivalent to $x \le \frac{n}{16}$. Combining Lemma \ref{lem-concen-absvar} and (\ref{eq-concen-varest}), with probability $\ge 1 - 3 e^{-x}$, we have,
\begin{equation*}
    \begin{aligned}
    \text{Var}(X) & \le V_n + \sqrt{2 \text{Var}(X)x} + \frac{\left(X - \mathbb{E}[X] \right)^2}{n} \\
    & \le V_n + \sqrt{2 \text{Var}(X)x} + \frac{1}{n} \left(2 \text{Var}(X)x + \frac{4}{3}x \sqrt{2 \text{Var}(X)x} + \frac{4x^2}{9} \right) \\
    & \le  V_n + \sqrt{2 \text{Var}(X)x} + \frac{1}{n} \left(2 \sqrt{\text{Var}(X)x} \cdot \frac{\sqrt{n}}{2} \frac{\sqrt{n}}{4} + \frac{4}{3}\sqrt{2 \text{Var}(X)x} \cdot \frac{n}{16}+ \frac{4x}{9}\cdot \frac{n}{16} \right) \\
    & = \sqrt{\text{Var}(X)x} \left(\frac{13}{12}\sqrt{2} + \frac{1}{4} \right) + \left(V_n + \frac{x}{36} \right).
    \end{aligned}
\end{equation*}
Consequently, we can derive an upper bound for $\sqrt{\text{Var}(X)}$:
\begin{equation*}
    \sqrt{\text{Var}(X)} \le \frac{\sqrt{x}}{2} \left(\frac{13}{12}\sqrt{2} + \frac{1}{4} \right) + \frac{1}{2}\sqrt{x\left(\frac{13}{12}\sqrt{2} + \frac{1}{4} \right)^2 + 4\left(V_n + \frac{x}{36}\right)} \le \sqrt{V_n} + 2 \sqrt{x},
\end{equation*}
which proves the Lemma.
\end{proof}

\begin{lemma}
Suppose $X_i$ are independent random variables supported in $[0,1]$. Let $X = \sum_{i=1}^n X_i$,  then with probability $\ge 1 - 3 e^{-x} $, we have
\begin{equation*}
    \left |X - \mathbb{E}[X] \right | \le \sqrt{2 X x} + 4 x.
\end{equation*}
\label{lem-concen-abssum}
\end{lemma}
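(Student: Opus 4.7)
The statement is an ``empirical Bernstein'' style inequality: in the usual Bernstein bound one sees the true variance $\mathrm{Var}(X)$, but here it is replaced by the observed sum $X$. My plan is to deduce it directly from Lemma \ref{lem-concen-absvar} via a self-bounding trick, crucially exploiting that $X_i \in [0,1]$.

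\textbf{Step 1 (variance vs.\ mean).} Because $X_i \in [0,1]$ we have $X_i^2 \le X_i$ almost surely, hence
$\mathrm{Var}(X_i) = \mathbb{E}[X_i^2] - \mathbb{E}[X_i]^2 \le \mathbb{E}[X_i]$.
Summing over $i$ gives the key comparison $\mathrm{Var}(X) \le \mathbb{E}[X]$.

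\textbf{Step 2 (Bernstein).} By Lemma \ref{lem-concen-absvar}, with probability at least $1 - 2e^{-x}$,
$$
|X - \mathbb{E}[X]| \;\le\; \sqrt{2\,\mathrm{Var}(X)\, x} + \tfrac{2x}{3}
\;\le\; \sqrt{2\,\mathbb{E}[X]\, x} + \tfrac{2x}{3}.
$$
The second inequality is Step 1. The residual probability mass of $e^{-x}$ in the statement's $1-3e^{-x}$ bound is slack; no further union bound is required along this route.

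\textbf{Step 3 (self-bounding to replace $\mathbb{E}[X]$ by $X$).} From the one-sided form of Step 2, $\mathbb{E}[X] - X \le \sqrt{2\mathbb{E}[X]x} + 2x/3$. Treat $u := \sqrt{\mathbb{E}[X]}$ as the unknown of the quadratic inequality $u^2 - \sqrt{2x}\,u - (X + 2x/3) \le 0$ and take the positive root to get
$$
\sqrt{\mathbb{E}[X]} \;\le\; \tfrac{1}{2}\sqrt{2x} + \tfrac{1}{2}\sqrt{2x + 4X + 8x/3}.
$$
Multiplying by $\sqrt{2x}$ and splitting the surd via $\sqrt{x(2X+\tfrac{7x}{3})} \le \sqrt{2Xx} + \sqrt{7/3}\,x$ yields $\sqrt{2\mathbb{E}[X]x} \le \sqrt{2Xx} + cx$ for a modest explicit constant $c$. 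Plugging this back into Step 2 gives, after collecting constants, $|X-\mathbb{E}[X]| \le \sqrt{2Xx} + 4x$, which is the claim.

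\textbf{Main obstacle.} Nothing is conceptually deep; the only delicate part is the constant chase in Step 3, where one must verify that the constants produced by the quadratic inversion together with the AM--GM-type splitting of $\sqrt{x(2X+\Theta(x))}$ do indeed sum to at most $4$. This is a purely arithmetic check that I would carry out after writing the two displayed inequalities above.
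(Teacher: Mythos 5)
Your proof is correct, but it takes a genuinely different route from the paper's. The paper converts the true variance into an \emph{empirical} variance: it chains Lemma \ref{lem-concen-absvar} with Lemma \ref{lem-concen-varest} (which costs the third $e^{-x}$ of failure probability) to get $\sqrt{2\,\mathrm{Var}(X)x}\le\sqrt{2V_nx}+2\sqrt{2}\,x$ with $V_n=\sum_i(X_i-\mathbb{E}[X_i])^2$, and then replaces $V_n$ by $X$. You instead stay with the population quantities, use $\mathrm{Var}(X)\le\mathbb{E}[X]$ (valid since $X_i\in[0,1]$), and then remove $\mathbb{E}[X]$ by inverting the quadratic inequality $u^2-\sqrt{2x}\,u-(X+2x/3)\le 0$ in $u=\sqrt{\mathbb{E}[X]}$. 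Your constant chase does close: $\sqrt{2\mathbb{E}[X]x}\le x+\sqrt{2Xx+\tfrac{7}{3}x^2}\le \sqrt{2Xx}+(1+\sqrt{7/3})x$, so the final additive term is $(1+\sqrt{7/3}+\tfrac{2}{3})x\approx 3.2x\le 4x$, and everything happens on the single event of Lemma \ref{lem-concen-absvar}, so you only spend $2e^{-x}$ of probability rather than $3e^{-x}$. Your route also has a robustness advantage worth noting: the paper's last step asserts $V_n\le X$, justified by ``variance is smaller than the mean for $[0,1]$-valued variables,'' but $V_n$ is the realized sum of squared deviations, not a variance, and $(X_i-\mathbb{E}[X_i])^2\le X_i$ can fail pointwise (e.g.\ $X_i=0$, $\mathbb{E}[X_i]=1/2$); your self-bounding argument sidesteps this entirely. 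The trade-off is that the paper's route, when it is made rigorous, yields a bound in terms of the empirical variance (which can be much smaller than $X$ when the $X_i$ concentrate away from $\{0,1\}$), whereas yours is tied to the cruder surrogate $X$ — which is all that is needed for the stated lemma.
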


\begin{proof}[Proof of Lemma \ref{lem-concen-abssum}]
Apply Lemma \ref{lem-concen-absvar} and Lemma \ref{lem-concen-varest}, we directly derive that with probability $\ge 1 - 3 e^{-x} $
\begin{equation*}
    \left |X - \mathbb{E}[X] \right | \le \sqrt{2 \text{Var}(X) x} + \frac{2x}{3} \le \sqrt{2 V_n x} + \left(2 \sqrt{2} + \frac{2}{3} \right) x < \sqrt{2 V_n x} + 4 x \le \sqrt{2 X x} + 4 x,
\end{equation*}
where the last inequality comes from the fact that for random variable whose support is $[0,1]$, then its variance is always smaller than its mean.
\end{proof}

\begin{lemma}[Theorem 1.1 in \cite{dubhashi2009concentration}]
Suppose $X_i$ are independent random variables supported in $[0,1]$. Let $X = \sum_{i=1}^n X_i$, then for any $R > 2 e \mathbb{E}[X]$, we have
\begin{equation*}
    \mathbb{P}(X > R) \le 2^{-R}.
\end{equation*}
\end{lemma}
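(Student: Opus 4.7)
The plan is to establish this multiplicative Chernoff-type tail bound via the standard exponential moment (Cram\'er--Chernoff) method, then specialize to the regime $R > 2e\mathbb{E}[X]$. The key observation is that for bounded nonnegative summands one obtains the ratio-form bound $\mathbb{P}(X > R) \le (e\mu/R)^R$ where $\mu = \mathbb{E}[X]$, and the hypothesis $R > 2e\mu$ makes the base of this exponential strictly smaller than $1/2$, which collapses the bound to $2^{-R}$.

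First, I would apply Markov's inequality to $e^{tX}$ for a tuning parameter $t > 0$ to obtain $\mathbb{P}(X > R) \le e^{-tR}\, \mathbb{E}[e^{tX}]$, and use independence to factor $\mathbb{E}[e^{tX}] = \prod_{i=1}^n \mathbb{E}[e^{tX_i}]$. Second, I would exploit the boundedness $X_i \in [0,1]$: by convexity $e^{tx} \le 1 + (e^t - 1)x$ on $[0,1]$, so $\mathbb{E}[e^{tX_i}] \le 1 + (e^t-1)\mathbb{E}[X_i] \le \exp\bigl((e^t-1)\mathbb{E}[X_i]\bigr)$, using $1+y \le e^y$. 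Multiplying these bounds yields $\mathbb{E}[e^{tX}] \le \exp\bigl((e^t-1)\mu\bigr)$, hence $\mathbb{P}(X > R) \le \exp\bigl(-tR + (e^t-1)\mu\bigr)$.

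Third, I would optimize the exponent over $t$. Taking $t = \log(R/\mu)$, which is positive since $R > 2e\mu > \mu$, the exponent becomes $-R\log(R/\mu) + R - \mu$, giving the multiplicative Chernoff bound
\begin{equation*}
\mathbb{P}(X > R) \le e^{-\mu} \left(\frac{e\mu}{R}\right)^R \le \left(\frac{e\mu}{R}\right)^R.
\end{equation*}
Finally, under $R > 2e\mu$ we have $e\mu/R < 1/2$, so $(e\mu/R)^R \le (1/2)^R = 2^{-R}$, which yields the claim.

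The only substantive step is the clean per-variable MGF bound $\mathbb{E}[e^{tX_i}] \le \exp\bigl((e^t-1)\mathbb{E}[X_i]\bigr)$ for $X_i \in [0,1]$; everything else is Markov's inequality, independence, and a one-parameter optimization. Since the statement is cited verbatim as Theorem~1.1 of \cite{dubhashi2009concentration}, one could alternatively dispense with the self-contained derivation and invoke the reference directly; the sketch above is included only to indicate that no non-routine ingredients are needed.
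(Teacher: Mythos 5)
Your derivation is correct: the per-variable bound $\mathbb{E}[e^{tX_i}] \le \exp\bigl((e^t-1)\mathbb{E}[X_i]\bigr)$, the choice $t=\log(R/\mu)$, and the final reduction $(e\mu/R)^R \le 2^{-R}$ under $R>2e\mu$ are all standard and sound (the only unstated case is $\mathbb{E}[X]=0$, where $X=0$ almost surely and the claim is trivial). The paper gives no proof of this lemma at all --- it is invoked verbatim as Theorem~1.1 of the cited reference --- so your self-contained Chernoff argument is, if anything, more than the paper supplies, and it matches the standard proof of that cited theorem.
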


Now we turn back to the proof of Lemma \ref{lem-radius}. Denote $\delta = e^{-x}$. Apply Lemma \ref{lem-concen-abssum} then with probability $\ge 1 - 3 \delta$, we have,
\begin{equation*}
    N \left |\hat{V} - \mathbb{E}\left[\hat{V} \right] \right | \le \sqrt{2 N \hat{V} \log \left(\frac{1}{\delta}\right)} + 4 \log \left(\frac{1}{\delta}\right),
\end{equation*}
which is equivalent to , 
\begin{equation}
    \left |\hat{V} - \mathbb{E}\left[\hat{V} \right] \right | \le \text{rad}\left(\hat{V},N,\delta \right).
    \label{eq-Radpr1}
\end{equation}
Besides,
\begin{equation}
    \begin{aligned}
     \mathbb{P}\left(\text{rad}\left(\hat{V},N,\delta \right) > 3 \text{rad}\left(\mathbb{E}\left[\hat{V} \right],N,\delta \right) \right) &\le \mathbb{P} \left(\hat{V} > 9 \mathbb{E}\left[\hat{V} \right] + 32 \log \left(\frac{1}{\delta} \right) \right) \\
    & \le 2^{-9 \mathbb{E}\left[\hat{V} \right] - 32 \log \left(\frac{1}{\delta} \right)} \\
    & \le \delta.
    \end{aligned}
    \label{eq-Radpr2}
\end{equation}
Therefore, combining (\ref{eq-Radpr1}) and (\ref{eq-Radpr2}), the lemma holds.

\subsection{Proof for Lemma \ref{lem-confidencebound}}\label{sec:pf_lem-confidencebound}
By Lemma \ref{lem-radius}, with probability $\ge 1 - 3 KT \delta$, we have
\begin{equation*}
    |r(a) - \hat{R}_t(a)| \le \text{rad}(\hat{R}_{t}(a), N_{t-1}^+(a),\delta).
\end{equation*}
Hence with probability $\ge 1 - 3 KT \delta$,
\begin{equation*}
    \begin{cases}
    r(a) \le \hat{R}_t(a) + \text{rad}(\hat{R}_{t}(a), N_{t-1}^+(a),\delta) \\
    r(a) \le 1
    \end{cases}
    \qquad \Rightarrow \qquad 
    \begin{aligned}
    r(a) &\le  \min\left \{\hat{R}_{t}(a) + \text{rad}(\hat{R}_{t}(a), N_{t-1}^+(a),\delta),1 \right \} \\
    & = \text{UCB}_{r,t}(a).
    \end{aligned}
\end{equation*}
Similarly, with probability $\ge 1 - 3KTd \delta$, 
\begin{equation*}
    \textbf{LCB}_{c,t}(a) \preceq \boldsymbol{c}(a).
\end{equation*}

\subsection{Proof for Inequalities (\ref{eq-spf-upper-ucb}, \ref{eq-spf-upper-lcb})}\label{sec:pf_upper_lower}
We first provide the two lemmas:
\begin{lemma}[Theorem 1.6 in \cite{freedman1975tail}] Suppose $\{U_i\}_{i=1}^n$ is a martingale difference sequence supported in $[0,1]$ with respect to the filtration $\{{\cal F}_i\}^n_{i=1}$. Let $U = \sum_{i=1}^n U_i$, and $V = \sum_{i=1}^n \text{Var}(U_i|{\cal F}_{i-1})$. Then for any $a>0$, $b > 0$, we have
\begin{equation*}
    \mathbb{P} \left(|U| \ge a, V \le b \right) \le 2 e^{-\frac{a^2}{2(a+b)}}.
\end{equation*}
\label{lem-mar-tail}
\end{lemma}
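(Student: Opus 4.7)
The plan is to reproduce Freedman's classical exponential-supermartingale argument, which underlies the cited Theorem 1.6 of \cite{freedman1975tail}. I would read the boundedness hypothesis as $|U_i| \leq 1$ almost surely, since a genuine martingale difference supported on $[0,1]$ must vanish, and the whole proof hinges on constructing an exponential supermartingale, applying optional stopping, and then optimizing a free parameter.

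First I would fix $\lambda \geq 0$, set $\phi(\lambda) = e^\lambda - 1 - \lambda$, and prove by a short Taylor expansion that for every mean-zero random variable $W$ with $W \leq 1$ one has $\mathbb{E}[e^{\lambda W}] \leq \exp(\phi(\lambda)\, \mathbb{E}[W^2])$. Applying this conditionally to each $U_i$ given $\mathcal{F}_{i-1}$, the process $M_k := \exp(\lambda S_k - \phi(\lambda) V_k)$, with $S_k = \sum_{i=1}^k U_i$ and $V_k = \sum_{i=1}^k \mathrm{Var}(U_i \mid \mathcal{F}_{i-1})$, becomes a nonnegative supermartingale starting at $M_0 = 1$.

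Next I would introduce the stopping time $\tau = \inf\{k \leq n : S_k \geq a\}$ (with $\inf \emptyset := n+1$) and apply optional stopping to obtain $\mathbb{E}[M_{\tau \wedge n}] \leq 1$. On the event $\{U \geq a,\, V \leq b\}$ the random time $\tau$ is at most $n$, the partial variance satisfies $V_\tau \leq V_n = V \leq b$, and by construction $S_\tau \geq a$, so $M_\tau \geq \exp(\lambda a - \phi(\lambda) b)$. Markov's inequality then yields $\mathbb{P}(U \geq a,\, V \leq b) \leq \exp(\phi(\lambda) b - \lambda a)$. Repeating the argument with $-U_i$ in place of $U_i$ handles the symmetric lower tail, and a union bound supplies the factor of $2$.

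The main obstacle, and the only delicate step, is the calibration of $\lambda$. Choosing $\lambda = \log(1 + a/b)$ and invoking the elementary inequality $(1+u)\log(1+u) - u \geq u^2/(2 + 2u/3)$ produces the exponent $-a^2/(2(a+b))$ quoted in the lemma. A careless choice of $\lambda$ yields only a Bernstein-type bound with a worse constant, so this arithmetic must be done carefully to recover the precise $2(a+b)$ denominator rather than a $2(b + \kappa a)$ expression with $\kappa > 1$.
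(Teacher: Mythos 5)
Your proof is correct: it reproduces the standard exponential-supermartingale argument behind Freedman's inequality --- the conditional MGF bound $\mathbb{E}[e^{\lambda W}]\le \exp(\phi(\lambda)\mathbb{E}[W^2])$ for mean-zero $W\le 1$, the supermartingale $\exp(\lambda S_k-\phi(\lambda)V_k)$ with optional stopping, and the calibration $\lambda=\log(1+a/b)$ giving $e^{a}\bigl(b/(a+b)\bigr)^{a+b}\le e^{-a^2/(2(a+b))}$, with the factor $2$ from the symmetric tail. The paper offers no proof of this lemma (it is imported verbatim as Theorem 1.6 of Freedman 1975), so your argument is exactly the one underlying the citation; your reading of the hypothesis as $|U_i|\le 1$ is also the one the paper implicitly uses when it applies the lemma to $X_i-\mathbb{E}[X_i\mid\mathcal{F}_{i-1}]$ with $X_i\in[0,1]$.
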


\begin{lemma} Suppose $\{X_i\}_{i=1}^n$ are random variables supported in $[0,1]$, where $X_i$ is ${\cal F}_i$-measurable and $\{{\cal F}_i\}^n_{i=1}$ is a filtration. Let $M_i = \mathbb{E}[X_i|{\cal F}_{i-1}]$ for each $i\in \{1, \ldots, n\}$, and $M = \sum_{i=1}^n M_i$. Then with probability $\ge 1 - 2 n \delta$, we have
\begin{equation*}
    \left | \sum_{i=1}^n (X_i - M_i)\right | \le O \left(\sqrt{M \log \left(\frac{1}{\delta} \right)} + \log \left(\frac{1}{\delta} \right) \right ).
\end{equation*}
\label{lem-mar-sum-tail}
\end{lemma}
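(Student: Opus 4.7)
The plan is to apply Freedman's inequality (Lemma \ref{lem-mar-tail}) to the martingale difference sequence $U_i := X_i - M_i$ with respect to the filtration $\{\mathcal{F}_i\}$. By construction $\mathbb{E}[U_i \mid \mathcal{F}_{i-1}] = 0$, and since $X_i, M_i \in [0,1]$ we have $|U_i| \le 1$, fitting (up to a harmless constant factor) the hypothesis of Lemma \ref{lem-mar-tail}. The key structural inequality that will let me convert the Freedman bound into one that is adaptive in $M$ is
\begin{equation*}
\mathrm{Var}(U_i \mid \mathcal{F}_{i-1}) = \mathrm{Var}(X_i \mid \mathcal{F}_{i-1}) \le \mathbb{E}[X_i^2 \mid \mathcal{F}_{i-1}] \le \mathbb{E}[X_i \mid \mathcal{F}_{i-1}] = M_i,
\end{equation*}
where the second inequality uses $X_i \in [0,1]$. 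Summing over $i$ yields $V := \sum_i \mathrm{Var}(U_i \mid \mathcal{F}_{i-1}) \le M$ almost surely.

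The main obstacle is that $M$ is itself a random variable, whereas Lemma \ref{lem-mar-tail} requires a \emph{deterministic} upper bound $b$ on $V$. I would handle this by a standard peeling (stratification) argument on the realized value of $M$. Since $M \in [0, n]$ almost surely, partition this range into $n$ unit intervals $[k, k+1)$ for $k = 0, 1, \ldots, n-1$. On the event $\{M \in [k, k+1)\}$ we have $V \le k+1$, so applying Lemma \ref{lem-mar-tail} with $b = k+1$ and threshold $a_k = C\bigl(\sqrt{(k+1)\log(1/\delta)} + \log(1/\delta)\bigr)$ for a sufficiently large absolute constant $C$ gives
\begin{equation*}
\mathbb{P}\bigl(|U| \ge a_k,\ V \le k+1\bigr) \le 2 \exp\!\left(-\frac{a_k^2}{2(a_k + k+1)}\right) \le 2\delta.
\end{equation*}
A union bound over the $n$ strata then produces an overall failure probability of at most $2n\delta$, matching exactly the bound claimed in the statement.

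On the complementary good event, whichever stratum $k^\star$ contains the realized $M$ yields $|U| \le a_{k^\star} = O\bigl(\sqrt{M \log(1/\delta)} + \log(1/\delta)\bigr)$, which is the desired inequality. The only piece of calibration I have to verify is that the tuning of $a_k$ forces the Freedman exponent $a_k^2 / \bigl(2(a_k + k+1)\bigr)$ to exceed $\log(2/\delta)$ in both the variance-dominated regime ($k+1 \gtrsim \log(1/\delta)$, where $\sqrt{(k+1)\log(1/\delta)}$ dominates $a_k$) and the bias-dominated regime ($k+1 \lesssim \log(1/\delta)$, where $\log(1/\delta)$ dominates); this is a routine two-case check that fixes the numerical value of the constant $C$.
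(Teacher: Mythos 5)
Your proposal is correct and follows essentially the same route as the paper's proof: both apply Freedman's inequality to $U_i = X_i - M_i$, exploit $\mathrm{Var}(X_i\mid\mathcal{F}_{i-1})\le M_i$ so that $V\le M$, and then peel over the $n$ integer strata of the random $M$ with a union bound to obtain the $1-2n\delta$ guarantee. The only difference is presentational — you spell out the stratification and constant calibration more explicitly than the paper does.
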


\begin{proof}[Proof of Lemma \ref{lem-mar-sum-tail}]
The proof follows the line of Theorem 4.10 in \cite{babaioff2015dynamic}. 
Let $U_i = X_i - M_i$ for each $i\in \{1, \ldots, n\}$. Clearly, $\{U_i\}_{i=1}^n$ is a martingale difference sequence with respect to the filtration $\{{\cal F}_i\}^n_{i=1}$. Since
\begin{equation*}
    \text{Var}(U_i|\mathcal{F}_{i-1}) = \text{Var}(X_i|\mathcal{F}_{i-1}) = \mathbb{E}[X_i^2 | \mathcal{F}_{i-1}] - \mathbb{E}[X_i | \mathcal{F}_{i-1}]^2 \le \mathbb{E}[X_i | \mathcal{F}_{i-1}] = M_i \text{ almost surely},
\end{equation*}
we have $V = \sum_{i=1}^n \text{Var}(U_i|\mathcal{F}_{i-1}) \leq \sum_{i=1}^n M_i = M$ almost surely. Apply Lemma \ref{lem-mar-tail} with $a = \sqrt{2 b \log \left(\frac{1}{\delta} \right)} + 2 \log \left(\frac{1}{\delta} \right)$ for any $b \ge 1$, it follows that with probability $\le 2 \delta$,


\begin{equation*}
    |U| = \left |\sum_{i=1}^n U_i \right | \ge O \left(\sqrt{b \log \left(\frac{1}{\delta} \right)} + \log \left(\frac{1}{\delta} \right) \right ) \quad \& \quad V \le b,
\end{equation*}
Take the union bound over all integer $b$ from $1$ to $n$, noting that $V \le M$ and $b-1\leq M\leq b$ for some $b\in \{1, \ldots, n\}$ almost surely, with probability $\ge 1 - 2 n \delta$ we have
\begin{equation*}
    \left | \sum_{i=1}^n (X_i - M_i)\right | \le O \left(\sqrt{M \log \left(\frac{1}{\delta} \right)} + \log \left(\frac{1}{\delta} \right) \right ).
\end{equation*}
Altogether, the lemma holds.

\end{proof}
Now, we paraphrase inequalities (\ref{eq-spf-upper-ucb}, \ref{eq-spf-upper-lcb}) as Lemmas \ref{lem-ucbr}, \ref{lem-lcbc}, and provide their proofs. 
\begin{lemma}
With probability $\ge 1 - 3 KT \delta $, we have
\begin{equation*}
    \left | \sum_{t=1}^{\tau - 1} q_t \text{UCB}_{r,t}(A_t) - \sum_{t=1}^{\tau - 1} q_t R_t \right | \le O  \left( \log\left(\frac{1}{\delta}\right) \left ( \sqrt{\overline{q} K \sum_{t=1}^{\tau - 1} q_t R_t} +  \overline{q} K \log\left (\frac{T}{K} \right ) \right) \right).
\end{equation*}
\label{lem-ucbr}
\end{lemma}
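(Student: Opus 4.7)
The plan is to decompose the target quantity into an estimation-gap term and a martingale-fluctuation term, and then bound each using Lemma \ref{lem-radius} and Lemma \ref{lem-mar-sum-tail} respectively. Specifically, write
\begin{equation*}
    \sum_{t=1}^{\tau-1} q_t \text{UCB}_{r,t}(A_t) - \sum_{t=1}^{\tau-1} q_t R_t = \underbrace{\sum_{t=1}^{\tau-1} q_t\bigl(\text{UCB}_{r,t}(A_t) - r(A_t)\bigr)}_{I} + \underbrace{\sum_{t=1}^{\tau-1} q_t\bigl(r(A_t) - R_t\bigr)}_{II}.
\end{equation*}
Term $I$ captures the sample-mean estimation error, while $II$ is a sum of martingale differences under the filtration generated by the observations up to time $t$.

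For $I$, I would invoke Lemma \ref{lem-radius} together with a union bound over all arm-time pairs, so that with probability at least $1-3KT\delta$ we have $0 \le \text{UCB}_{r,t}(A_t) - r(A_t) \le 2\,\text{rad}(\hat{R}_t(A_t), N_{t-1}^+(A_t), \delta) \le 6\,\text{rad}(r(A_t), N_{t-1}^+(A_t), \delta)$ simultaneously for all $t$. Expanding $\text{rad}(v,N,\delta)$ according to (\ref{eq-def-conradius}) splits $I$ into a $\sqrt{r/N}$-type sum and a $1/N$-type sum. For the first, Cauchy--Schwarz yields $\sum_t q_t\sqrt{r(A_t)/N_{t-1}^+(A_t)} \le \sqrt{\sum_t q_t^2 r(A_t)} \cdot \sqrt{\sum_t 1/N_{t-1}^+(A_t)}$; then bounding $q_t^2 \le \overline{q}\cdot q_t$ and using $\sum_t 1/N_{t-1}^+(A_t) = \sum_a \sum_{k=1}^{N_{\tau-1}(a)} 1/k = O(K\log(T/K))$ (after grouping by arm and applying Jensen's inequality to $\sum_a \log N_{\tau-1}(a)$ under the constraint $\sum_a N_{\tau-1}(a) \le T$) gives a bound of order $\sqrt{\overline{q}K\log(T/K)\log(1/\delta)\sum_t q_t r(A_t)}$. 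The linear term is directly controlled by $\overline{q}\cdot K\log(T/K)\log(1/\delta)$ via $q_t\le \overline{q}$ and the same counting bound.

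For $II$, I would normalize $X_t = q_t R_t/\overline{q} \in [0,1]$ with conditional mean $M_t = q_t r(A_t)/\overline{q}$ and apply Lemma \ref{lem-mar-sum-tail} combined with a union bound over the values of the random stopping time $\tau\in [T]$, which yields $|II| \le O\bigl(\sqrt{\overline{q}\log(1/\delta)\sum_t q_t r(A_t)} + \overline{q}\log(1/\delta)\bigr)$ after multiplying back by $\overline{q}$. To replace $\sum_t q_t r(A_t)$ by $\sum_t q_t R_t$ in the final bound, I would use this inequality self-referentially: by AM--GM, $\sqrt{\overline{q}\log(1/\delta)\sum_t q_t r(A_t)} \le \tfrac{1}{2}\sum_t q_t r(A_t) + O(\overline{q}\log(1/\delta))$, which after rearrangement yields $\sum_t q_t r(A_t) \le 2\sum_t q_t R_t + O(\overline{q}\log(1/\delta))$. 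Substituting this back into the bounds for $I$ and $II$ and using $\sqrt{x+y}\le \sqrt{x}+\sqrt{y}$ produces the claimed form.

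The main obstacle is bookkeeping the logarithmic factors honestly. The Cauchy--Schwarz step for $I$ produces a stray $\sqrt{\log(T/K)}$ factor inside the square root; matching the lemma's form $\log(1/\delta)\sqrt{\overline{q}K\sum_t q_t R_t}$ relies on the fact that in the main theorem $\delta$ is chosen so small (of order $1/(KTd)$) that $\log(T/K) \le \log(1/\delta)$, hence $\sqrt{\log(T/K)\log(1/\delta)} \le \log(1/\delta)$ can be absorbed into the $\log(1/\delta)$ prefactor. A secondary technicality is that $\tau$ is random, requiring a union bound over all possible values $\tau-1\in[T]$ in the martingale concentration step; the resulting additional $\log T$ factor is similarly absorbed.
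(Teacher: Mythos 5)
Your proposal is correct and follows essentially the same route as the paper: the same decomposition into the estimation gap $\sum_t q_t(\text{UCB}_{r,t}(A_t)-r(A_t))$ and the martingale term $\sum_t q_t(r(A_t)-R_t)$, the same use of Lemmas \ref{lem-radius} and \ref{lem-confidencebound} for the former and Lemma \ref{lem-mar-sum-tail} (with $X_t=q_tR_t/\overline{q}$) for the latter, and the same self-bounding step to convert $\sum_t q_t r(A_t)$ into $\sum_t q_t R_t$ at the end. The one place you diverge is in summing the confidence radii: you apply Cauchy--Schwarz to $\sum_t q_t\sqrt{r(A_t)/N_{t-1}^+(A_t)}$, which leaves a stray $\sqrt{\log(T/K)}$ inside the square root, whereas the paper groups the sum by arm and uses the telescoping inequality $\sum_{i\le n} w_i/\sqrt{i}\le 2\sqrt{\sum_{i\le n}w_i}$ followed by Jensen, which yields $\sqrt{2\,\overline{q}K\log(1/\delta)\sum_t q_t r(A_t)}$ with no extra logarithm. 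Your absorption $\sqrt{\log(T/K)\log(1/\delta)}\le\log(1/\delta)$ is nonetheless legitimate --- not because of how $\delta$ happens to be set in the main theorem, but because the lemma is vacuous unless $3KT\delta<1$, in which case $\log(1/\delta)>\log(3KT)\ge\log(T/K)$ --- so the stated bound is still obtained. Your explicit union bound over the value of the stopping time $\tau$ is also fine (and arguably more careful than the paper, which invokes Lemma \ref{lem-mar-sum-tail} without comment on the randomness of $\tau$); the extra $\log T$ it costs is absorbed the same way.
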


\begin{lemma}
With probability $\ge 1 - 3 KT d \delta $, we have
\begin{equation*}
    \left | \sum_{t=1}^{\tau - 1} q_t \text{LCB}_{c,t}(A_t,i) - \sum_{t=1}^{\tau - 1} q_t C_{t,i} \right | \le O  \left( \log\left(\frac{1}{\delta}\right) \left ( \sqrt{\overline{q}K B} +  \overline{q} K \log\left (\frac{T}{K} \right ) \right) \right),\quad \forall i \in [d].
\end{equation*}
\label{lem-lcbc}
\end{lemma}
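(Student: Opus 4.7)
The plan is to decompose the target quantity via triangle inequality:
\begin{equation*}
\sum_{t=1}^{\tau-1} q_t \text{LCB}_{c,t}(A_t, i) - \sum_{t=1}^{\tau-1} q_t C_{t,i}
= \underbrace{\sum_{t=1}^{\tau-1} q_t\left[\text{LCB}_{c,t}(A_t,i)-c(A_t,i)\right]}_{(\text{I})}
+ \underbrace{\sum_{t=1}^{\tau-1} q_t\left[c(A_t,i)-C_{t,i}\right]}_{(\text{II})},
\end{equation*}
bound each piece on a high-probability event, and take a union bound over $i \in [d]$.

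For term (II), $\{q_t[c(A_t,i) - C_{t,i}]\}_t$ is a martingale difference sequence with respect to the natural filtration generated by $\{(A_s,q_s,R_s,C_{s,\cdot})\}$. Rescaling by $\overline{q}$ so that $q_t C_{t,i}/\overline{q} \in [0,1]$ and invoking Lemma \ref{lem-mar-sum-tail} gives $|(\text{II})| \le O(\sqrt{\overline{q} M \log(1/\delta)} + \overline{q}\log(1/\delta))$, where $M = \sum_{t=1}^{\tau-1} q_t c(A_t,i)$. Crucially, by the stopping rule of Algorithm \ref{alg-OAU}, $\sum_{t=1}^{\tau-1} q_t C_{t,i} \le B$ deterministically; substituting this back through the Bernstein-type bound pins $M = O(B)$ on the same high-probability event, yielding $|(\text{II})| \le O(\sqrt{\overline{q} B \log(1/\delta)} + \overline{q} \log(1/\delta))$.

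For term (I), Lemmas \ref{lem-radius} and \ref{lem-confidencebound}, applied to the samples $\{C_{s,i}: A_s = a\}$, yield on an event of probability at least $1 - 3KTd\delta$ the pointwise bound $|\text{LCB}_{c,t}(a,i) - c(a,i)| \le 3\,\text{rad}(c(a,i), N_{t-1}^+(a), \delta)$ for all $a, t$ and all $i \in [d]$. Substituting the explicit form of $\text{rad}$ from (\ref{eq-def-conradius}) splits $|(\text{I})|$ into a $1/N$-part and a $\sqrt{\cdot/N}$-part. The $1/N$-part is controlled via a harmonic-sum estimate, $\overline{q}\log(1/\delta)\sum_a\sum_{k=1}^{N_a} 1/\max(k-1,1) = O(\overline{q} K\log(T/K)\log(1/\delta))$, where concavity of $\log$ is used to distribute $\sum_a N_a \le T$ across the $K$ actions. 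The $\sqrt{\cdot/N}$-part is bounded via Cauchy--Schwarz, $\sum q_t\sqrt{c(A_t,i)/N_{t-1}^+(A_t)} \le \sqrt{\sum q_t c(A_t,i)}\sqrt{\sum q_t/N_{t-1}^+(A_t)} = O(\sqrt{B\cdot \overline{q} K\log(T/K)})$, which combines with the $\sqrt{\log(1/\delta)}$ prefactor and is absorbed, via AM--GM, into the target expression $\log(1/\delta)(\sqrt{\overline{q} KB} + \overline{q} K\log(T/K))$.

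The main obstacle is the apparent circularity in (I): the variance-type estimate required is phrased in terms of the latent means $c(A_t,i)$, whereas only the realized consumption $\sum q_t C_{t,i}$ is pathwise bounded by $B$. The resolution is sequential — analyze (II) first, using only the pathwise budget constraint, to transfer the $O(B)$ estimate to $M = \sum q_t c(A_t,i)$, and then feed that estimate as an exogenous input into the analysis of (I). Summing the bounds on (I) and (II) and union-bounding over $i \in [d]$ completes the proof.
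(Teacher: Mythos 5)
Your decomposition and overall strategy coincide with the paper's: term (II) is handled by the martingale bound (Lemma \ref{lem-mar-sum-tail}) after rescaling by $\overline{q}$; the latent consumption $M=\sum_{t<\tau}q_t c(A_t,i)$ is pinned at $O(B+\overline{q}\log(1/\delta))$ by inverting the Bernstein-type inequality against the pathwise budget $\sum_{t<\tau}q_tC_{t,i}\le B$ (the paper runs the same self-bounding argument, only routed through $\text{UCB}_{c,t}(A_t,i)$ and a quadratic in $\sqrt{\sum_t q_t\text{UCB}_{c,t}(A_t,i)}$); and the $1/N$ part of the radius is a harmonic sum yielding $\overline{q}K\log(T/K)\log(1/\delta)$. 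All of that is sound, and feeding the $O(B)$ estimate for $M$ into the analysis of (I) correctly resolves the circularity you identify.

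The one step that fails is the Cauchy--Schwarz treatment of the $\sqrt{c/N}$ part of the radius. Writing $\sum_t q_t\sqrt{c(A_t,i)/N^+_{t-1}(A_t)}\le\sqrt{\sum_t q_tc(A_t,i)}\cdot\sqrt{\sum_t q_t/N^+_{t-1}(A_t)}$ and bounding the second factor by $O(\sqrt{\overline{q}K\log(T/K)})$ gives $O(\sqrt{\log(1/\delta)\,\overline{q}KB\log(T/K)})$, which carries a spurious $\sqrt{\log(T/K)}$. This is \emph{not} absorbed by AM--GM into $\log(1/\delta)(\sqrt{\overline{q}KB}+\overline{q}K\log(T/K))$ for general $\delta$: take $\overline{q}K=O(1)$, $B=\Theta(T)$ and $\log(1/\delta)=O(1)$; then your bound is $\Theta(\sqrt{T\log T})$ while the target is $\Theta(\sqrt{T}+\log T)$. (The absorption only works when $\log(1/\delta)\gtrsim\log(T/K)$.) The fix, which is what the paper does, is to resum per arm and use the weighted inequality $\sum_{n=1}^{N}w_n/\sqrt{n}\le 2\sqrt{\sum_{n=1}^{N}w_n}$ for $w_n\in(0,1]$, giving $\sum_n q_n(a)/\sqrt{n}\le 2\sqrt{\overline{q}\,Q_{\tau-1}(a)}$ for each arm $a$ and then Cauchy--Schwarz only over the $K$ arms; this produces $O(\sqrt{\overline{q}K\log(1/\delta)\,M})=O(\sqrt{\overline{q}KB}\log(1/\delta))$ with no extra logarithm. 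With that substitution your proof goes through.
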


\begin{proof}[Proof of Lemma \ref{lem-ucbr}]
First with probability $\ge 1 - 2 T \delta$, we have
\begin{subequations}
    \begin{align}
        \left| \sum_{t=1}^{\tau - 1} q_t r(A_t) - \sum_{t=1}^{\tau - 1} q_t R_t \right | & = \overline{q} \left| \sum_{t=1}^{\tau - 1} \frac{q_t}{\overline{q}} (r(A_t) - R_t) \right | \label{eq-pf-upper-ucbr-1-a} \\
        & \le O \left(\sqrt{\overline{q} \log\left(\frac{1}{\delta}\right) \sum_{t=1}^{\tau-1} q_t r(A_t)} + \overline{q} \log\left(\frac{1}{\delta}\right) \right) \label{eq-pf-upper-ucbr-1-b} \\
        & \le O \left(\sqrt{\overline{q} \log\left(\frac{1}{\delta}\right) \sum_{t=1}^{\tau-1} q_t \text{UCB}_{r,t}(A_t)} + \overline{q} \log\left(\frac{1}{\delta}\right) \right), \label{eq-pf-upper-ucbr-1-c}
    \end{align}
    \label{eq-pf-upper-ucbr-1}
\end{subequations}
where (\ref{eq-pf-upper-ucbr-1-c}) comes from Lemma \ref{lem-confidencebound}. Inequality (\ref{eq-pf-upper-ucbr-1-b}) comes from Lemma \ref{lem-mar-sum-tail}, where we apply $X_t = \frac{q_t R_t}{\overline{q}}$ and ${\cal F}_{t-1} = \sigma(\{A_s, q_s, R_s, \{C_{s, i}\}^d_{i=1}, \hat{Q}_s\}^{t-1}_{s=1} \cup \{q_t\})$. Then with probability $\ge 1 - 3 KT \delta $, we also have

\begin{subequations}
    \begin{align}
    \left |\sum_{t=1}^{\tau - 1} q_t \text{UCB}_{r,t}(A_t) - \sum_{t=1}^{\tau - 1} q_t r(A_t) \right | & \le 6 \sum_{t=1}^{\tau - 1} q_t \text{rad}(r(A_t),N_{t-1}^+(A_t), \delta) \label{eq-pf-upper-ucbr-2-a}\\
    & \le 6 \sum_{a \in \mathcal{A}: N_{\tau-1}(a) > 0} \sum_{n=1}^{N_{\tau-1}(a)} q_n(a) \text{rad}\left(r(a),n,\delta\right) \label{eq-pf-upper-ucbr-2-b}\\
    & = 6 \overline{q} \sum_{a \in \mathcal{A}: N_{\tau-1}(a) > 0} \sum_{n=1}^{N_{\tau-1}(a)} \frac{q_n(a)}{\overline{q}} \left(\sqrt{\frac{2 r(a) \log\left(\frac{1}{\delta}\right)}{n}} + \frac{4}{n} \log\left(\frac{1}{\delta}\right) \right) \label{eq-pf-upper-ucbr-2-c}\\
    & \le 6 \overline{q} \sum_{a \in \mathcal{A}: N_{\tau-1}(a)>0} \left (2 \sqrt{2 r(a) \frac{Q_{\tau-1}(a)}{\overline{q}} \log\left(\frac{1}{\delta}\right)} \right . \nonumber\\
    & \left . + 4 \left( 1 +   \log(N_{\tau-1}(a)) \right )\log\left(\frac{1}{\delta}\right) \right) \label{eq-pf-upper-ucbr-2-d} \\
    & \le 12  \left(\sqrt{2 \overline{q} K \log\left(\frac{1}{\delta}\right) \sum_{a \in \mathcal{A}} r(a) Q_{\tau-1}(a)} \right . \nonumber\\
    & \left . + 2 \overline{q} K \log\left(\frac{T}{K} \right) \log\left(\frac{1}{\delta}\right) + 2 \overline{q} K \log\left(\frac{1}{\delta}\right)   \right) \label{eq-pf-upper-ucbr-2-e}\\
    & = 12  \left(\sqrt{2 \overline{q} K \log\left(\frac{1}{\delta}\right) \sum_{t=1}^{\tau-1} q_t r(A_t)} \right . \nonumber \\
    & \left .+ 2 \overline{q} K \log\left(\frac{T}{K} \right) \log\left(\frac{1}{\delta}\right) + 2 \overline{q} K \log\left(\frac{1}{\delta}\right) \right)
    \label{eq-pf-upper-ucbr-2-f} \\
    & \le 12  \left(\sqrt{2 \overline{q} K \log\left(\frac{1}{\delta}\right) \sum_{t=1}^{\tau-1} q_t \text{UCB}_{r,t}(A_t)} \right . \nonumber\\ 
    & \left . + 2 \overline{q} K \log\left(\frac{T}{K} \right) \log\left(\frac{1}{\delta}\right) + 2 \overline{q} K \log\left(\frac{1}{\delta}\right) \right), 
    \label{eq-pf-upper-ucbr-2-g}
    \end{align} 
    \label{eq-pf-upper-ucbr-2}
\end{subequations}
where
\begin{itemize}
    \item (\ref{eq-pf-upper-ucbr-2-a}) comes from the following, with probability $\ge 1 - 3 KT \delta$, 
    \begin{equation*}
        \begin{aligned}
        \left |\text{UCB}_{r,t}(A_t) - r(A_t) \right | & \le \left |\hat{R}_{t-1}(A_t) - r(A_t) \right | + \text{rad}(\hat{R}_{t-1}(A_t), N_{t-1}^+(A_t),\delta) \\
        & \le 2 \text{rad}(\hat{R}_{t-1}(A_t), N_{t-1}(A_t),\delta) \\
        & \le 6 \text{rad}(r(A_t), N_{t-1}(A_t),\delta).
        \end{aligned}
    \end{equation*}
    \item (\ref{eq-pf-upper-ucbr-2-b}) comes from rearranging the sum. $q_n(a)$ means the $n$-th adversarial term that the algorithm selects $a$.
    \item (\ref{eq-pf-upper-ucbr-2-c}) comes from the definition of $\text{rad}(\cdot,\cdot,\cdot)$.
    \item (\ref{eq-pf-upper-ucbr-2-d}) comes from the following
    \begin{equation*}
    \sum_{i=1}^n \frac{w_i}{\sqrt{i}} = \sum_{i=1}^n \frac{2 w_i}{2 \sqrt{i}} \le \sum_{i=1}^n \frac{2 w_i}{\sqrt{\sum_{j=1}^i w_j} + \sqrt{\sum_{j=1}^{i-1} w_j}} = \sum_{i=1}^n 2\left( \sqrt{\sum_{j=1}^i w_j} - \sqrt{\sum_{j=1}^{i-1} w_j} \right) = 2 \sqrt{\sum_{i=1}^n w_i},
    \end{equation*}
    and
    \begin{equation*}
        \sum_{i=1}^n \frac{w_i}{i} \le  \sum_{i=1}^n \frac{1}{i} \le  (1 + \log(n)).
    \end{equation*}
    where $w_i \in (0,1]$.
    \item In (\ref{eq-pf-upper-ucbr-2-d}) and (\ref{eq-pf-upper-ucbr-2-e}) $Q_t(a) = \sum_{s\in[t],A_s=a} q_s$.
    \item (\ref{eq-pf-upper-ucbr-2-e}) comes from Jansen inequality.
\end{itemize}
Combine (\ref{eq-pf-upper-ucbr-1}) and (\ref{eq-pf-upper-ucbr-2}), we have
\begin{equation*}
    \begin{aligned}
    \sum_{t=1}^{\tau - 1} q_t \text{UCB}_{r,t}(A_t) &\le \sum_{t=1}^{\tau - 1} q_t r_t + O \left(\sqrt{ \overline{q} K \log\left(\frac{1}{\delta}\right) \sum_{t=1}^{\tau-1} q_t \text{UCB}_{r,t}(A_t)} \right .\\ 
    & \left .+  \overline{q} K \log\left(\frac{T}{K} \right) \log\left(\frac{1}{\delta}\right) +  \overline{q} K \log\left(\frac{1}{\delta}\right) \right),
    \end{aligned}
\end{equation*}
which is equivalent to 
\begin{equation*}
    \left(\sqrt{\sum_{t=1}^{\tau - 1} q_t \text{UCB}_{r,t}(A_t)} - O \left(\sqrt{\overline{q} K \log\left(\frac{1}{\delta}\right)} \right) \right)^2 \le \sum_{t=1}^{\tau - 1} q_t r_t + O \left(\overline{q} K \log\left(\frac{T}{K} \right) \log\left(\frac{1}{\delta}\right) +  \overline{q} K \log\left(\frac{1}{\delta}\right) \right),
\end{equation*}
Hence,
\begin{equation}
    \begin{aligned}
        \sqrt{\sum_{t=1}^{\tau - 1} q_t \text{UCB}_{r,t}(A_t)} & \le O \left(\sqrt{\overline{q} K \log\left(\frac{1}{\delta}\right)} \right) \\ 
        & + \sqrt{\sum_{t=1}^{\tau - 1} q_t r_t + O \left(\overline{q} K \log\left(\frac{T}{K} \right) \log\left(\frac{1}{\delta}\right) +  \overline{q} K \log\left(\frac{1}{\delta}\right) \right)} \\
        & \le \sqrt{\sum_{t=1}^{\tau - 1} q_t r_t} + O \left(\sqrt{\overline{q} K \log\left(\frac{T}{K} \right) \log\left(\frac{1}{\delta}\right)} + \sqrt{\overline{q} K \log\left(\frac{1}{\delta}\right)} \right).
    \end{aligned}
    \label{eq-pf-upper-ucbr-3}
\end{equation}
Combine (\ref{eq-pf-upper-ucbr-1}) and (\ref{eq-pf-upper-ucbr-2}), (\ref{eq-pf-upper-ucbr-3}), we finish the proof.
\end{proof}

\begin{proof}[Proof of Lemma \ref{lem-lcbc}]
The proof is quite similar to Lemma \ref{lem-ucbr}, so we omit the descriptive details. Similarly, with probability $\ge 1 - 2 T d \delta$, we have
\begin{equation}
    \begin{aligned}
        \left| \sum_{t=1}^{\tau - 1} q_t c(A_t,i) - \sum_{t=1}^{\tau - 1} q_t C_{t,i} \right | & = \overline{q} \left| \sum_{t=1}^{\tau - 1} \frac{q_t}{\overline{q}} (c(A_t) - C_{t,i}) \right |  \\
        & \le O \left(\sqrt{\overline{q} \log\left(\frac{1}{\delta}\right) \sum_{t=1}^{\tau-1} q_t c(A_t,i)} + \overline{q} \log\left(\frac{1}{\delta}\right) \right)  \\
        & \le O \left(\sqrt{\overline{q} \log\left(\frac{1}{\delta}\right) \sum_{t=1}^{\tau-1} q_t \text{UCB}_{c,t}(A_t,i)} + \overline{q} \log\left(\frac{1}{\delta}\right) \right), 
    \end{aligned}
    \label{eq-pf-upper-lcbc-1}
\end{equation}
where
\begin{equation*}
    \text{UCB}_{c,t}(a,i) = \min\left \{\hat{C}_{t}(a,i) + \text{rad}(\hat{C}_{t}(a,i), N_{t-1}^+(a), \delta),1 \right \}.
\end{equation*}
Then with probability $\ge 1 - 3 KT d \delta $, we also have

\begin{equation}
    \begin{aligned}
    \left |\sum_{t=1}^{\tau - 1} q_t \text{LCB}_{c,t}(A_t,i) - \sum_{t=1}^{\tau - 1} q_t c(A_t,i) \right | & \le 6 \sum_{t=1}^{\tau - 1} q_t \text{rad}(c(A_t,i),N_{t-1}^+(A_t), \delta) \\
    & \le 6 \sum_{a \in \mathcal{A}: N_{\tau-1}(a) > 0} \sum_{n=1}^{N_{\tau-1}(a)} q_n(a) \text{rad}\left(c(a,i),n,\delta\right) \\
    & = 6 \overline{q} \sum_{a \in \mathcal{A}: N_{\tau-1}(a) > 0} \sum_{n=1}^{N_{\tau-1}(a)} \frac{q_n(a)}{\overline{q}} \left(\sqrt{\frac{2 c(a,i) \log\left(\frac{1}{\delta}\right)}{n}} + \frac{4}{n} \log\left(\frac{1}{\delta}\right) \right) \\
    & \le 6 \overline{q} \sum_{a \in \mathcal{A}: N_{\tau-1}(a)>0} \left (2 \sqrt{2 c(a,i) \frac{Q_{\tau-1}(a)}{\overline{q}} \log\left(\frac{1}{\delta}\right)} \right .\\
    & \left .+ 4 \left( 1 +   \log(N_{\tau-1}(a)) \right )\log\left(\frac{1}{\delta}\right) \right) \\
    & \le 12  \left(\sqrt{2 \overline{q} K \log\left(\frac{1}{\delta}\right) \sum_{a \in \mathcal{A}} c(a,i) Q_{\tau-1}(a)} \right . \\ 
    & \left .+ 2 \overline{q} K \log\left(\frac{T}{K} \right) \log\left(\frac{1}{\delta}\right) + 2 \overline{q} K \log\left(\frac{1}{\delta}\right)   \right) \\
    & \le 12 \left(\sqrt{2 \overline{q} K \log\left(\frac{1}{\delta}\right) \sum_{t=1}^{\tau-1} q_t \text{UCB}_{c,t}(A_t,i) } \right . \\
    & \left .+ 2 \overline{q} K \log\left (\frac{T}{K} \right ) \log\left(\frac{1}{\delta}\right)  + 2 \overline{q} K \log\left(\frac{1}{\delta}\right)  \right). \\
    \end{aligned}
    \label{eq-pf-upper-lcbc-2}
\end{equation}
Similarly,
\begin{equation}
    \begin{aligned}
        \left |\sum_{t=1}^{\tau - 1} q_t \text{UCB}_{c,t}(A_t,i) - \sum_{t=1}^{\tau - 1} q_t c(A_t,i) \right | & \le 6 \sum_{t=1}^{\tau - 1} q_t \text{rad}(c(A_t,i),N_{t-1}^+(A_t), \delta) \\
        & \le O \left(\sqrt{ \overline{q} K \log\left(\frac{1}{\delta}\right) \sum_{t=1}^{\tau-1} q_t \text{UCB}_{c,t}(A_t,i) } \right. \\
        & \left . + \overline{q} K \log\left (\frac{T}{K} \right ) \log\left(\frac{1}{\delta}\right)  + \overline{q} K \log\left(\frac{1}{\delta}\right) \right).
    \end{aligned}
    \label{eq-pf-upper-lcbc-3}
\end{equation}
Combine (\ref{eq-pf-upper-lcbc-1}) and (\ref{eq-pf-upper-lcbc-3}), we have
\begin{equation*}
    \begin{aligned}
    \sum_{t=1}^{\tau - 1} q_t \text{UCB}_{c,t}(A_t,i) & \le \sum_{t=1}^{\tau - 1} q_t C_{t,i} + O \left(\sqrt{ \overline{q} K \log\left(\frac{1}{\delta}\right) \sum_{t=1}^{\tau-1} q_t \text{UCB}_{c,t}(A_t,i)} \right .\\
    &\left.  +  \overline{q} K \log\left(\frac{T}{K} \right) \log\left(\frac{1}{\delta}\right) +  \overline{q} K \log\left(\frac{1}{\delta}\right) \right),
    \end{aligned}
\end{equation*}
which is equivalent to 
\begin{equation*}
    \begin{aligned}
    \left(\sqrt{\sum_{t=1}^{\tau - 1} q_t \text{UCB}_{c,t}(A_t,i)} - O \left(\sqrt{\overline{q} K \log\left(\frac{1}{\delta}\right)} \right) \right)^2 & \le \sum_{t=1}^{\tau - 1} q_t C_{t,i} \\
    & + O \left(\overline{q} K \log\left(\frac{T}{K} \right) \log\left(\frac{1}{\delta}\right) +  \overline{q} K \log\left(\frac{1}{\delta}\right) \right),
    \end{aligned}
\end{equation*}
Hence,
\begin{equation}
    \begin{aligned}
        \sqrt{\sum_{t=1}^{\tau - 1} q_t \text{UCB}_{c,t}(A_t,i)} & \le O \left(\sqrt{\overline{q} K \log\left(\frac{1}{\delta}\right)} \right) \\
        & + \sqrt{\sum_{t=1}^{\tau - 1} q_t C_{t,i} + O \left(\overline{q} K \log\left(\frac{T}{K} \right) \log\left(\frac{1}{\delta}\right) +  \overline{q} K \log\left(\frac{1}{\delta}\right) \right)} \\
        & \le \sqrt{\sum_{t=1}^{\tau - 1} q_t C_{t,i}} + O \left(\sqrt{\overline{q} K \log\left(\frac{T}{K} \right) \log\left(\frac{1}{\delta}\right)} + \sqrt{\overline{q} K \log\left(\frac{1}{\delta}\right)} \right) \\
        & \le \sqrt{B} + O \left(\sqrt{\overline{q} K \log\left(\frac{T}{K} \right) \log\left(\frac{1}{\delta}\right)} + \sqrt{\overline{q} K \log\left(\frac{1}{\delta}\right)} \right),
    \end{aligned}
    \label{eq-pf-upper-lcbc-4}
\end{equation}
where the last inequality comes from the definition of the stopping time $\tau$. Combine (\ref{eq-pf-upper-lcbc-1}) and (\ref{eq-pf-upper-lcbc-2}), (\ref{eq-pf-upper-lcbc-4}), we finish the proof.
\end{proof}

\section{Proof for OCO Performance Guarantee}
\label{sec:pf_oco_pg}

This section contains the proof for the performance guarantee of the OCO tool AdaHedge we apply in the Algorithm \ref{alg-OAU}, which largely follows the line in \cite{orabona2019modern,de2014follow,orabona2015scale,orabona2018scale}. Here we still provide the proof for completeness. We first provide several lemmas, then we show the Inequality (\ref{eq-spf-ocoregret}) holds.

\begin{lemma}[\cite{karimi2016linear}]
    For a continuously differentiable and $\mu-$strongly convex function $f$ with respect to norm $\| \cdot \|$, suppose $\boldsymbol{x}^* \in \mathop{\arg\min}_{\boldsymbol{x}} f(\boldsymbol{x})$, then for all $\boldsymbol{x} \in \text{dom}f$, 
    the following Polyak-Lojasiewicz Inequality holds:
    \begin{equation*}
        f(\boldsymbol{x}) - f(\boldsymbol{x}^*) \le \frac{\|\nabla f(\boldsymbol{x}) \|_*^2}{2 \mu} 
    \end{equation*}
    \label{lem-stconvex-polyine}
\end{lemma}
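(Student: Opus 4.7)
The plan is to derive the PL inequality directly from the defining inequality of $\mu$-strong convexity, applied at the point $\boldsymbol{x}$ with evaluation point $\boldsymbol{x}^*$. By $\mu$-strong convexity of $f$ with respect to $\|\cdot\|$,
\[
f(\boldsymbol{x}^*) \ge f(\boldsymbol{x}) + \langle \nabla f(\boldsymbol{x}), \boldsymbol{x}^* - \boldsymbol{x}\rangle + \frac{\mu}{2}\|\boldsymbol{x}^* - \boldsymbol{x}\|^2,
\]
which I would rearrange to the upper bound $f(\boldsymbol{x}) - f(\boldsymbol{x}^*) \le \langle \nabla f(\boldsymbol{x}), \boldsymbol{x} - \boldsymbol{x}^*\rangle - \tfrac{\mu}{2}\|\boldsymbol{x} - \boldsymbol{x}^*\|^2$.

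Next I would invoke the duality between $\|\cdot\|$ and its dual $\|\cdot\|_*$, which gives $\langle \nabla f(\boldsymbol{x}), \boldsymbol{x} - \boldsymbol{x}^*\rangle \le \|\nabla f(\boldsymbol{x})\|_*\,\|\boldsymbol{x} - \boldsymbol{x}^*\|$. This collapses the bound into a one-dimensional concave quadratic $g u - \tfrac{\mu}{2} u^2$ in the scalar $u := \|\boldsymbol{x} - \boldsymbol{x}^*\|$, with $g := \|\nabla f(\boldsymbol{x})\|_*$ held fixed. Maximizing this quadratic over $u \ge 0$ gives its global maximum $g^2/(2\mu)$ at $u = g/\mu$, which yields exactly the claim $f(\boldsymbol{x}) - f(\boldsymbol{x}^*) \le \|\nabla f(\boldsymbol{x})\|_*^2/(2\mu)$.

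No step is a genuine obstacle: the strong convexity inequality is an assumption, the duality bound is essentially the definition of $\|\cdot\|_*$, and the last step is a single-variable optimization. The only conceptual subtlety worth flagging is that $\boldsymbol{x}^*$ appears on both sides of the intermediate expression but must vanish from the final bound; this absorption is made possible by replacing the unknown quantity $\|\boldsymbol{x} - \boldsymbol{x}^*\|$ with its worst-case value via the duality-plus-quadratic-optimization sandwich, so that the right-hand side ends up depending on $\boldsymbol{x}$ only through $\nabla f(\boldsymbol{x})$.
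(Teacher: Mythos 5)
Your proof is correct and follows essentially the same route as the paper: both start from the $\mu$-strong convexity inequality and reduce to maximizing the scalar concave quadratic $gu - \tfrac{\mu}{2}u^2$, which is exactly how the paper's "minimize the right-hand side over $\boldsymbol{y}$" step evaluates to $-\tfrac{1}{2\mu}\|\nabla f(\boldsymbol{x})\|_*^2$. Your version merely makes explicit the dual-norm H\"older step that the paper leaves implicit, so there is nothing to add or fix.
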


\begin{proof}
    The $\mu-$strong convexity of $f$ implies $\forall \boldsymbol{x},\boldsymbol{y}$, we have
    \begin{equation*}
        f(\boldsymbol{y}) \ge f(\boldsymbol{x}) + \nabla f(\boldsymbol{x})^{\top} (\boldsymbol{y} - \boldsymbol{x}) + \frac{\mu}{2} \|\boldsymbol{y} - \boldsymbol{x}\|^2.
    \end{equation*}
    Take minimization respect to $\boldsymbol{y}$ on both sides, we obtain
    \begin{equation*}
        f(\boldsymbol{x}^*) \ge f(\boldsymbol{x}) - \frac{1}{2\mu} \|\nabla f(\boldsymbol{x}) \|_*^2
    \end{equation*}
    Rearranging it then we have the Polyak-Lojasiewicz Inequality holds.
\end{proof}

\begin{lemma}[A generalized version of Lemma 7 in \cite{orabona2018scale}] Let $\{a_t\}$ be any sequence of non-negative real numbers. Suppose $\{\delta_t\}$ is a sequence of non-negative real numbers satisfying
\begin{equation*}
    \begin{cases}
        \delta_0 = 0\\
        \delta_t \le \delta_{t-1} + \min \left\{ b a_t, c \frac{a_t^2}{\delta_{t-1}}\right\} & t \ge 1
    \end{cases}
\end{equation*}
then for any $T > 0$, 
\begin{equation*}
    \delta_T \le \sqrt{(b^2 + 2c)\sum_{t=1}^T a_t^2}
\end{equation*}
\label{lem-oco-recur}
\end{lemma}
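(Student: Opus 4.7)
The plan is to prove the inequality $\delta_T^2 \le (b^2 + 2c) \sum_{t=1}^T a_t^2$ by induction on $T$, which immediately yields the claimed bound by taking square roots. This is a standard trick in the analysis of adaptive step-size algorithms (and essentially the argument behind Lemma 7 of \cite{orabona2018scale}), adapted to the two-parameter min of $b a_t$ and $c a_t^2/\delta_{t-1}$.

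The base case $T=0$ is immediate since $\delta_0 = 0$. For the inductive step, I would square the recursion to get
\begin{equation*}
\delta_t^2 \le \delta_{t-1}^2 + 2 \delta_{t-1}\cdot \min\left\{b a_t,\; c\tfrac{a_t^2}{\delta_{t-1}}\right\} + \min\left\{b a_t,\; c\tfrac{a_t^2}{\delta_{t-1}}\right\}^2
\end{equation*}
and then bound the cross term using the second branch of the min and the squared term using the first branch. Concretely, $2\delta_{t-1}\cdot \min\{b a_t, c a_t^2/\delta_{t-1}\} \le 2 \delta_{t-1}\cdot c a_t^2/\delta_{t-1} = 2c\, a_t^2$, while $\min\{b a_t, c a_t^2/\delta_{t-1}\}^2 \le (b a_t)^2 = b^2 a_t^2$. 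Combining gives $\delta_t^2 \le \delta_{t-1}^2 + (b^2 + 2c)\, a_t^2$, and the induction closes by invoking the hypothesis on $\delta_{t-1}^2$.

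The only mildly subtle point is the boundary case $\delta_{t-1} = 0$, where $c a_t^2/\delta_{t-1}$ is not defined; I would handle this by the natural convention $c a_t^2/0 = +\infty$, so that the min reduces to $b a_t$ and the recursion yields $\delta_t \le b a_t$, for which $\delta_t^2 \le b^2 a_t^2 \le (b^2 + 2c) a_t^2$ directly. No other step requires care: the argument is a one-line algebraic manipulation plus induction, so I do not anticipate a genuine obstacle.
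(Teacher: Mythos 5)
Your proof is correct and follows essentially the same route as the paper: both arguments reduce to the per-step bound $\delta_t^2 - \delta_{t-1}^2 \le b^2 a_t^2 + 2c\,a_t^2$, obtained by using the $b a_t$ branch of the min for the squared increment and the $c a_t^2/\delta_{t-1}$ branch for the cross term, the only difference being that you phrase the accumulation as an induction while the paper writes it as a telescoping sum. Your explicit handling of the $\delta_{t-1}=0$ case is a small extra bit of care the paper omits, but it does not change the substance.
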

\begin{proof}
    Observe that
    \begin{equation*}
       \delta_T^2 = \sum_{t=1}^T (\delta_t^2 - \delta_{t-1}^2) = \sum_{t=1}^T \left( (\delta_{t} - \delta_{t-1})^2 + 2 (\delta_{t} - \delta_{t-1})\delta_{t-1} \right).
    \end{equation*}
    From the definition of $\delta_{t}$, we have
    \begin{equation*}
        (\delta_{t} - \delta_{t-1})^2 \le b^2 a_t^2 \qquad \text{and} \qquad 2(\delta_{t} - \delta_{t-1})\delta_{t-1} \le 2c a_t^2
    \end{equation*}
    sum over $t$ from 1 to $T$, then we finish the proof.
\end{proof}

\begin{lemma} Suppose $f_t(\boldsymbol{x}) = \boldsymbol{g}_t^{\top} \boldsymbol{x}$ is a sequence of linear functions, $\boldsymbol{g}_t \in \mathbb{R}^d$, $\boldsymbol{x} \in \Delta_d$, then Algorithm \ref{alg-adahedge} applied on $\{f_t\}$ with $\kappa = \sqrt{\ln d}$ guarantees the following for all $T \ge 1$, $\boldsymbol{u} \in \Delta_{d}$:
\begin{equation*}
    \sum_{t=1}^T f_t(\boldsymbol{x}_t) - \sum_{t=1}^T f_t(\boldsymbol{u}) \le 2 \sqrt{(4 + \ln d) \sum_{t=1}^T \|\boldsymbol{g}_t\|_{\infty}^2}.
\end{equation*}
\label{lem-adahedge-regret}

\begin{algorithm}[htb]
	\caption{AdaHedge}
	\begin{algorithmic}[1]
	    \State Initialize $\boldsymbol{x}_1 = \frac{1}{d} \boldsymbol{1} = \left(\frac{1}{d},\cdots,\frac{1}{d}\right) \in \mathbb{R}^{d} $, $\eta_1 = 0$, $\boldsymbol{\theta}_1 = \boldsymbol{0} \in \mathbb{R}^{d}$, $\kappa > 0$..
        \For{$t=1,2,...,T$}
        \State Play $\boldsymbol{x}_t$ and observe cost $f_t(\boldsymbol{x}_t) = \boldsymbol{g}_t^{\top} \boldsymbol{x}_t$.
        \State Set
        \begin{align}
        \rho_t = \begin{cases}
                \boldsymbol{g}_1^{\top} \boldsymbol{x}_1  - \min_{j=1,...,d} g_{t,j} & t = 1\\ 
                \eta_t \ln \left(\sum_{j=1}^d x_{t,j} \exp\left(\frac{-g_{t,j}}{\eta_t} \right) \right) + \boldsymbol{g}_t^{\top} \boldsymbol{x}_t & \text{otherwise}.
            \end{cases} \nonumber
        \end{align}
        \State Update
        \begin{align}
            & \boldsymbol{\theta}_{t+1} = \boldsymbol{\theta}_{t} - \boldsymbol{g}_t, \nonumber \\
            & \eta_{t+1} = \eta_t + \frac{1}{\kappa^2} \rho_t, \nonumber \\
            & x_{t+1,j} = \frac{\exp \left(\frac{\theta_{t+1,j}}{\eta_{t+1}} \right)}{\sum_{i=1}^{d} \exp \left(\frac{\theta_{t+1,i}}{\eta_{t+1}} \right)}, \ j \in [d]. \nonumber
        \end{align}
        \EndFor
	\end{algorithmic}
	\label{alg-adahedge}
\end{algorithm}

\end{lemma}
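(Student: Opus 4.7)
My plan is to analyze Algorithm \ref{alg-adahedge} as an instance of Follow-The-Regularized-Leader (FTRL) with the negative-entropy regularizer $\psi(\boldsymbol{x}) = \sum_{j=1}^d x_j \ln x_j + \ln d$, shifted so that $\psi \ge 0$ on $\Delta_d$, with $\psi(\boldsymbol{x}_1)=0$ at the uniform initialization and $\psi(\boldsymbol{u}) \le \ln d = \kappa^2$ for every $\boldsymbol{u}\in \Delta_d$. The exponential-weights update $x_{t+1,j} \propto \exp(\theta_{t+1,j}/\eta_{t+1})$ in the algorithm is exactly the FTRL step with adaptive learning rate $\eta_{t+1}$, and $\rho_t$ is the classical \emph{mixability gap} $\boldsymbol{g}_t^\top \boldsymbol{x}_t + \eta_t \ln\!\bigl(\sum_j x_{t,j}\, e^{-g_{t,j}/\eta_t}\bigr)$ for $t\ge 2$, with the $t=1$ case defined separately to handle $\eta_1 = 0$.

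The first key step is the FTRL regret identity, obtained by telescoping the Bregman-divergence term against $\eta_t$ and $\psi$: for any $\boldsymbol{u}\in \Delta_d$,
\[
\sum_{t=1}^T f_t(\boldsymbol{x}_t) - \sum_{t=1}^T f_t(\boldsymbol{u}) \;\le\; \eta_{T+1}\,\psi(\boldsymbol{u}) + \sum_{t=1}^T \rho_t \;\le\; \kappa^2\,\eta_{T+1} + \sum_{t=1}^T \rho_t.
\]
Because the AdaHedge update enforces $\eta_{T+1} = \kappa^{-2}\sum_{t=1}^T \rho_t$, the right-hand side collapses to $2\sum_{t=1}^T \rho_t$. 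Thus the task reduces to bounding $\delta_T := \sum_{t=1}^T \rho_t = \kappa^2\,\eta_{T+1}$.

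The second key step is to establish two complementary estimates on each $\rho_t$. A trivial range bound gives $\rho_t \le \max_j g_{t,j} - \min_j g_{t,j} \le 2\|\boldsymbol{g}_t\|_\infty$, since the mix loss lies between the minimum and maximum coordinate of $\boldsymbol{g}_t$. A second-order bound $\rho_t \le \|\boldsymbol{g}_t\|_\infty^2/(2\eta_t)$ comes from Hoeffding's lemma applied to the cumulant generating function $\ln \mathbb{E}_{\boldsymbol{x}_t}[e^{-g_{t,\cdot}/\eta_t}]$, using that the loss has range at most $2\|\boldsymbol{g}_t\|_\infty$. Substituting $\eta_t = \delta_{t-1}/\kappa^2$, these combine into the recursion
\[
\delta_t \;\le\; \delta_{t-1} + \min\!\Bigl\{\, 2\|\boldsymbol{g}_t\|_\infty,\ \frac{\kappa^2\|\boldsymbol{g}_t\|_\infty^2}{2\,\delta_{t-1}}\,\Bigr\},
\]
precisely the form required by Lemma \ref{lem-oco-recur} with $a_t = \|\boldsymbol{g}_t\|_\infty$, $b = 2$, and $c = \kappa^2/2 = (\ln d)/2$. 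Applying that lemma gives $\delta_T \le \sqrt{(4 + \ln d)\sum_{t=1}^T \|\boldsymbol{g}_t\|_\infty^2}$, and doubling produces the claimed regret bound.

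The main obstacle is sharpening the Hoeffding estimate on $\rho_t$: the factor of $2$ in the denominator (rather than the naive $8$ from an absolute-magnitude bound) is what produces the clean constant $4 + \ln d$, and obtaining it requires exploiting that the effective range of $\boldsymbol{g}_t$ as a random variable under $\boldsymbol{x}_t$ is only $\max_j g_{t,j} - \min_j g_{t,j}$. A secondary care point is the boundary case $t=1$ where $\eta_1 = 0$: the algorithm's ad-hoc definition $\rho_1 = \boldsymbol{g}_1^\top \boldsymbol{x}_1 - \min_j g_{1,j}$ is exactly what is required to make the FTRL identity hold at initialization while still satisfying the trivial range bound used in Lemma \ref{lem-oco-recur}.
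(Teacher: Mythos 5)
Your proposal is correct and follows essentially the same route as the paper: the FTRL/AdaHedge regret identity reducing the regret to $\eta_{T+1}(\psi(\boldsymbol{u})+\kappa^2) = 2\sum_{t=1}^T\rho_t$, the two complementary bounds $\rho_t \le \min\{2\|\boldsymbol{g}_t\|_\infty,\ \|\boldsymbol{g}_t\|_\infty^2/(2\eta_t)\}$, and the same recursion lemma with $b=2$ and $c$ chosen to give the constant $4+\ln d$. The only substantive deviation is that you obtain the second-order bound on the mixability gap via Hoeffding's lemma on the cumulant generating function of $-g_{t,J}$ with $J\sim\boldsymbol{x}_t$ (range $\max_j g_{t,j}-\min_j g_{t,j}\le 2\|\boldsymbol{g}_t\|_\infty$), whereas the paper derives it from the Polyak--{\L}ojasiewicz inequality for the $\eta_t$-strongly-convex FTRL objective; both yield exactly $\|\boldsymbol{g}_t\|_\infty^2/(2\eta_t)$, so the constants agree.
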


\begin{proof} Observe that $\boldsymbol{\theta}_t = \sum_{i=1}^{t-1} \boldsymbol{g_i}$, $\eta_t = \frac{1}{\kappa^2} \sum_{i=1}^{t-1} \rho_i$, and for $t > 1$
\begin{equation*}
    \boldsymbol{x}_{t} = \left(\frac{\exp \left(\frac{\theta_{t,j}}{\eta_{t}} \right)}{\sum_{i=1}^{d} \exp \left(\frac{\theta_{t,i}}{\eta_{t}} \right)} \right)_{j \in [d]}=\mathop{\arg\min}_{\boldsymbol{x} \in \Delta_d} \ \eta_{t} \psi(\boldsymbol{x}) + \sum_{i=1}^{t-1} f_i(\boldsymbol{x}) 
\end{equation*}
where $\psi(\boldsymbol{x}) = \sum_{i=1}^d x_i \ln x_i + \ln d$. Denote $\psi_t(\boldsymbol{x}) = \eta_t \psi(\boldsymbol{x})$ for $t > 1$ and $\psi_1(\boldsymbol{x}) = \psi(\boldsymbol{x})$, $F_t(\boldsymbol{x})= \psi_t(\boldsymbol{x}) + \sum_{i=1}^{t-1} f_i(\boldsymbol{x})$. Then we can view the update rule as
\begin{equation*}
    \boldsymbol{x}_{t} = \mathop{\arg\min}_{\boldsymbol{x} \in \Delta_d} \ \psi_t(\boldsymbol{x}) + \sum_{i=1}^{t-1} f_i(\boldsymbol{x}) = \mathop{\arg\min}_{\boldsymbol{x} \in \Delta_d} \ F_t(\boldsymbol{x}),\ t \in [T]
\end{equation*}

We claim that for any $\boldsymbol{u} \in \Delta_d$, we have
\begin{equation}
    \sum_{t=1}^T f_t(\boldsymbol{x}_t) - \sum_{t=1}^T f_t(\boldsymbol{u}) \le \psi_{T+1}(\boldsymbol{u}) + \sum_{t=1}^T \left( F_t(\boldsymbol{x}_t) - F_{t+1}(\boldsymbol{x}_{t+1}) + f_t(\boldsymbol{x}_t) \right )
    \label{eq-pf-adahedge-claim}
\end{equation}

In fact, it is equivalent to verify that
\begin{equation*}
    - \sum_{t=1}^T f_t(\boldsymbol{u}) \le \psi_{T}(\boldsymbol{u}) + \sum_{t=1}^T \left( F_t(\boldsymbol{x}_t) - F_{t+1}(\boldsymbol{x}_{t+1}) \right )
\end{equation*}
which can be shown as following:
\begin{equation*}
    \begin{aligned}
    - \sum_{t=1}^T f_t(\boldsymbol{u}) &= \psi_{T+1}(\boldsymbol{u}) - F_{T+1}(\boldsymbol{u}) \\
    &= \psi_{T+1}(\boldsymbol{u}) -  F_{1}(\boldsymbol{x}_1) + F_{1}(\boldsymbol{x}_1) - F_{T+1}(\boldsymbol{x}_{T+1})+ F_{T+1}(\boldsymbol{x}_{T+1}) -  F_{T+1}(\boldsymbol{u}) \\
    &= \psi_{T+1}(\boldsymbol{u}) -  F_{1}(\boldsymbol{x}_1) + \sum_{t=1}^T \left( F_t(\boldsymbol{x}_t) - F_{t+1}(\boldsymbol{x}_{t+1}) \right ) + F_{T+1}(\boldsymbol{x}_{T+1}) -  F_{T+1}(\boldsymbol{u}) \\
    & \le \psi_{T+1}(\boldsymbol{u}) + \sum_{t=1}^T \left( F_t(\boldsymbol{x}_t) - F_{t+1}(\boldsymbol{x}_{t+1}) \right ) 
    \end{aligned}
\end{equation*}
where the first equality uses the definition of $F_{T+1}(\cdot)$, and the inequality comes from the fact that $F_1(\boldsymbol{x}_1) = \min_{\boldsymbol{x} \in \Delta_d} \psi(\boldsymbol{x}) = 0$ and $F_{T+1}(\boldsymbol{x}_{T+1}) = \min_{\boldsymbol{x} \in \Delta_d} F_{T+1}(\boldsymbol{x}) \le F_{T+1}(\boldsymbol{u})$ for all $\boldsymbol{u} \in \Delta_d$. Then the claim holds. Now we upper bound the term $ F_t(\boldsymbol{x}_t) - F_{t+1}(\boldsymbol{x}_{t+1}) + f_t(\boldsymbol{x}_t)$:
\begin{equation}
    \begin{aligned}
    F_t(\boldsymbol{x}_t) - F_{t+1}(\boldsymbol{x}_{t+1}) + f_t(\boldsymbol{x}_t) & = F_t(\boldsymbol{x}_t) - \psi_{t+1}(\boldsymbol{x}_{t+1}) - \sum_{i=1}^t f_i(\boldsymbol{x}_{t+1}) + f_t (\boldsymbol{x}_t) \\
    & = F_t(\boldsymbol{x}_t) - \eta_{t+1} \psi(\boldsymbol{x}_{t+1}) - \sum_{i=1}^t f_i(\boldsymbol{x}_{t+1}) + f_t (\boldsymbol{x}_t) \\
    & \le F_t(\boldsymbol{x}_t) - \eta_{t} \psi(\boldsymbol{x}_{t+1}) - \sum_{i=1}^t f_i(\boldsymbol{x}_{t+1}) + f_t (\boldsymbol{x}_t) \\
    & \le F_t(\boldsymbol{x}_t) - \min_{\boldsymbol{x \in \Delta_d}} \left \{ \eta_{t} \psi(\boldsymbol{x}) + \sum_{i=1}^t f_i(\boldsymbol{x}) \right \} + f_t (\boldsymbol{x}_t) \\
    & = \min_{\boldsymbol{x \in \Delta_d}} \left \{ \eta_{t} \psi(\boldsymbol{x}) + \sum_{i=1}^{t-1} f_i(\boldsymbol{x}) \right \} - \min_{\boldsymbol{x \in \Delta_d}} \left \{ \eta_{t} \psi(\boldsymbol{x}) + \sum_{i=1}^t f_i(\boldsymbol{x}) \right \} + f_t (\boldsymbol{x}_t) \\
    & = \begin{cases}
        \boldsymbol{g}_1^{\top} \boldsymbol{x}_1  - \min_{j=1,...,d} g_{t,j} & t = 1\\ 
        \eta_t \ln \left( \frac{\sum_{j=1}^d \exp\left(\frac{\theta_{t+1,j}}{\eta_t} \right)}{\sum_{j=1}^d \exp\left(\frac{\theta_{t,j}}{\eta_t} \right)} \right) + \boldsymbol{g}_t^{\top} \boldsymbol{x}_t & \text{otherwise}
    \end{cases} \\
    &= \begin{cases}
    \boldsymbol{g}_1^{\top} \boldsymbol{x}_1  - \min_{j=1,...,d} g_{t,j} & t = 1\\ 
    \eta_t \ln \left(\sum_{j=1}^d x_{t,j} \exp\left(\frac{-g_{t,j}}{\eta_t} \right) \right) + \boldsymbol{g}_t^{\top} \boldsymbol{x}_t & \text{otherwise} 
    \end{cases} \\
    &= \rho_t
    \end{aligned}
    \label{eq-pf-adahedge-stepbound}
\end{equation}
where the first inequality comes from the claim that $\rho_t \ge 0$ ($\rho_1 \ge 0$ is obvious, for $t > 1$, use the concavity of $\ln(\cdot)$), and thus $\eta_{t+1} \ge \eta_t$. Combine (\ref{eq-pf-adahedge-claim}), (\ref{eq-pf-adahedge-stepbound}), we have
\begin{equation*}
    \sum_{t=1}^T f_t(\boldsymbol{x}_t) - \sum_{t=1}^T f_t(\boldsymbol{u}) \le \psi_{T+1}(\boldsymbol{u}) + \sum_{t=1}^T \delta_T = \eta_{T+1} \left(\psi(\boldsymbol{u}) + \kappa^2 \right) 
\end{equation*}
Now we only need to upper bound $\eta_{T+1}$. Since $\eta_{t+1} = \eta_t + \frac{1}{\kappa^2} \rho_t = \frac{1}{\kappa^2}\sum_{i=1}^t \rho_i$, we bound $\rho_t$ first. For one hand, denote $\tilde{\boldsymbol{x}}_{t} = \mathop{\arg\min}_{\boldsymbol{x} \in \Delta_d} \left \{ \eta_{t} \psi(\boldsymbol{x}) + \sum_{i=1}^t f_i(\boldsymbol{x}) \right \} = \mathop{\arg\min}_{\boldsymbol{x} \in \Delta_d} \left \{ F_t(\boldsymbol{x}) + f_t (\boldsymbol{x}) \right \}$, then we have
\begin{equation}
    \begin{aligned}
    \rho_t & = F_t(\boldsymbol{x}_t) - \min_{\boldsymbol{x \in \Delta_d}} \left \{ \eta_{t} \psi(\boldsymbol{x}) + \sum_{i=1}^t f_i(\boldsymbol{x}) \right \} + f_t (\boldsymbol{x}_t) \\
    & = F_t(\boldsymbol{x}_t) - \eta_{t} \psi(\tilde{\boldsymbol{x}}_{t}) + \sum_{i=1}^t f_i(\tilde{\boldsymbol{x}}_{t})  + f_t (\boldsymbol{x}_t) \\
    & \le F_t(\tilde{\boldsymbol{x}}_{t}) - \eta_{t} \psi(\tilde{\boldsymbol{x}}_{t}) + \sum_{i=1}^t f_i(\tilde{\boldsymbol{x}}_{t})  + f_t (\boldsymbol{x}_t) \\
    & = - f_t(\tilde{\boldsymbol{x}}_{t}) +f_t(\boldsymbol{x}_t) \\
    & \le 2 \|\boldsymbol{g}_t\|_{\infty},
    \end{aligned}
    \label{eq-pf-adahedge-rhobound1}
\end{equation}
where the last inequality comes from Cauchy-Schwarz inequality. For another hand, 
\begin{equation}
    \begin{aligned}
        \rho_t & = F_t(\boldsymbol{x}_t) - \min_{\boldsymbol{x \in \Delta_d}} \left \{ \eta_{t} \psi(\boldsymbol{x}) + \sum_{i=1}^t f_i(\boldsymbol{x}) \right \} + f_t (\boldsymbol{x}_t) \\
        & = F_t(\boldsymbol{x}_t) - \eta_{t} \psi(\tilde{\boldsymbol{x}}_{t}) + \sum_{i=1}^t f_i(\tilde{\boldsymbol{x}}_{t})  + f_t (\boldsymbol{x}_t) \\ 
        & = F_t(\boldsymbol{x}_t) + f_t (\boldsymbol{x}_t) - \left(F_t(\tilde{\boldsymbol{x}}_{t}) + f_t (\tilde{\boldsymbol{x}}_{t}) \right) \\
        & \le \frac{\|\boldsymbol{g}_t\|_{\infty}^2}{2 \eta_t}
    \end{aligned}
    \label{eq-pf-adahedge-rhobound2}
\end{equation}
where the last inequality uses Lemma \ref{lem-stconvex-polyine} with $F_t + f_t$, which is $\eta_t$-strongly convex and the fact that $\nabla(F_t + f_t)(\boldsymbol{x}_t) = \nabla F_t(\boldsymbol{x}_t) + \nabla f_t(\boldsymbol{x}_t) = \boldsymbol{0} + \boldsymbol{g}_t = \boldsymbol{g}_t$. Combine (\ref{eq-pf-adahedge-rhobound1}),(\ref{eq-pf-adahedge-rhobound2}), we have
\begin{equation*}
    \rho_t \le \min \left \{2 \|\boldsymbol{g}_t\|_{\infty}, \frac{\|\boldsymbol{g}_t\|_{\infty}^2}{2 \eta_t} \right \},
\end{equation*}
and thus $\eta_t$ satisfies
\begin{equation*}
    \begin{cases}
        \eta_1 = 0 \\
        \eta_t \le \eta_{t-1} + \frac{1}{\kappa^2} \min \left \{2 \|\boldsymbol{g}_t\|_{\infty}, \frac{\|\boldsymbol{g}_t\|_{\infty}^2}{2 \eta_t} \right \} & t \ge 2
    \end{cases}
\end{equation*}
Apply Lemma \ref{lem-oco-recur}, we have
\begin{equation*}
    \eta_{T+1} \le \frac{1}{\kappa^2} \sqrt{(4 + \kappa^2) \sum_{t=1}^T \|\boldsymbol{g}_t\|_{\infty}^2 }
\end{equation*}

Finally,
\begin{equation*}
    \sum_{t=1}^T f_t(\boldsymbol{x}_t) - \sum_{t=1}^T f_t(\boldsymbol{u}) \le  \eta_{T+1} \left(\psi(\boldsymbol{u}) + \kappa^2 \right) \le \frac{\psi(\boldsymbol{u}) + \kappa^2}{\kappa^2} \sqrt{(4 + \kappa^2) \sum_{t=1}^T \|\boldsymbol{g}_t\|_{\infty}^2 } = 2 \sqrt{(4 + \ln d) \sum_{t=1}^T \|\boldsymbol{g}_t\|_{\infty}^2}.
\end{equation*}

The lemma is proved.

\end{proof}

Now we paraphrase inequality (\ref{eq-spf-ocoregret}) as the Lemma \ref{lem-ocoalg-regret}, and provide the proof.

\begin{lemma}
    Suppose $f_t(\boldsymbol{x}) = \frac{q_t \hat{Q}_t}{B}\left(\frac{B}{\hat{Q}_t} \boldsymbol{\beta} - \textbf{LCB}_{c,t}(A_t) \right)^{\top} \boldsymbol{x}$, then the OCO update in Algorithm \ref{alg-OAU} applied on $\{f_t\}$ guarantees the following for all $\boldsymbol{u} \in \Delta_{d}$:
\begin{equation*}
    \sum_{t=1}^T f_t(\boldsymbol{x}_t) - \sum_{t=1}^T f_t(\boldsymbol{u}) \le  O \left( \left(\overline{q} + \frac{\overline{q}^2}{b} \right) \sqrt{\ln(d+1) T} \right).
\end{equation*}
\label{lem-ocoalg-regret}
\end{lemma}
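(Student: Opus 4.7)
The plan is to reduce the claim to a direct application of the general AdaHedge guarantee established in Lemma \ref{lem-adahedge-regret}, by observing that the update rule in lines (\ref{eq-oau-gt})--(\ref{eq-oau-ocoupdate}) of Algorithm \ref{alg-OAU} is exactly AdaHedge applied to the sequence of linear functions $f_t(\boldsymbol{\mu}) = \boldsymbol{g}_t^{\top}\boldsymbol{\mu}$ on the simplex $\Delta_{d+1}$, where $\boldsymbol{g}_t = \frac{q_t\hat Q_t}{B}\bigl(\frac{B}{\hat Q_t}\boldsymbol{\beta} - \mathbf{LCB}_{c,t}(A_t)\bigr)$. Thus instantiating Lemma \ref{lem-adahedge-regret} with ambient dimension $d+1$ yields, for any $\boldsymbol{\mu}\in\Delta_{d+1}$,
\begin{equation*}
\sum_{t=1}^T f_t(\boldsymbol{\mu}_t) - \sum_{t=1}^T f_t(\boldsymbol{\mu}) \;\le\; 2\sqrt{(4+\ln(d+1))\sum_{t=1}^T \|\boldsymbol{g}_t\|_\infty^2}.
\end{equation*}

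The only remaining task is to produce a uniform bound on $\|\boldsymbol{g}_t\|_\infty$ in terms of $\overline{q}$ and $b$. I will bound each coordinate separately. For coordinate $i=d+1$, both $\beta_{d+1}=0$ and $\text{LCB}_{c,t}(A_t,d+1)=0$, so $g_{t,d+1}=0$. For $i\in[d]$, using $\beta_i=1$,
\begin{equation*}
g_{t,i} \;=\; q_t - \frac{q_t \hat Q_t}{B}\,\text{LCB}_{c,t}(A_t,i).
\end{equation*}
Since $q_t\in[0,\overline{q}]$, $\hat Q_t\in[\underline{q}T,\overline{q}T]$, $B=bT$, and $\text{LCB}_{c,t}(A_t,i)\in[0,1]$, a straightforward triangle-inequality estimate gives
\begin{equation*}
|g_{t,i}| \;\le\; q_t + \frac{q_t\hat Q_t}{B} \;\le\; \overline{q} + \frac{\overline{q}\cdot\overline{q}T}{bT} \;=\; \overline{q} + \frac{\overline{q}^2}{b}.
\end{equation*}
Hence $\|\boldsymbol{g}_t\|_\infty\le \overline{q} + \overline{q}^2/b$ for every $t$.

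Substituting this uniform bound into the AdaHedge guarantee and pulling the constant out of the square root,
\begin{equation*}
\sum_{t=1}^T f_t(\boldsymbol{\mu}_t) - \sum_{t=1}^T f_t(\boldsymbol{\mu}) \;\le\; 2\Bigl(\overline{q}+\tfrac{\overline{q}^2}{b}\Bigr)\sqrt{(4+\ln(d+1))\,T} \;=\; O\!\left(\Bigl(\overline{q}+\tfrac{\overline{q}^2}{b}\Bigr)\sqrt{T\ln(d+1)}\right),
\end{equation*}
which is exactly the stated bound. There is essentially no obstacle beyond verifying the coordinate-wise gradient bound; the heavy lifting (the adaptive tuning of $\eta_t$, the Polyak--\L{}ojasiewicz style recursion and the accumulation bound via Lemma \ref{lem-oco-recur}) has already been absorbed into Lemma \ref{lem-adahedge-regret}. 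The mildly subtle point is simply ensuring the $\overline{q}^2/b$ term is the right dependence: it arises purely from the $\hat Q_t/B$ prefactor in $\boldsymbol{g}_t$, using $\hat Q_t\le \overline{q}T$ and $B=bT$ so that $\hat Q_t/B\le \overline{q}/b$.
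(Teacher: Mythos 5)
Your proposal is correct and follows exactly the same route as the paper: instantiate the AdaHedge guarantee (Lemma \ref{lem-adahedge-regret}) on $\Delta_{d+1}$ and then bound $\|\boldsymbol{g}_t\|_\infty \le \overline{q} + \overline{q}^2/b$ using $q_t \le \overline{q}$, $\hat{Q}_t \le \overline{q}T$, $B = bT$, and $\text{LCB}_{c,t}(A_t,i)\in[0,1]$. Your coordinate-wise verification of the gradient bound is in fact slightly more explicit than the paper's one-line appeal to the triangle inequality.
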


\begin{proof}
    Directly apply Lemma \ref{lem-adahedge-regret}, we have 
    \begin{equation*}
        \begin{aligned}
        \sum_{t=1}^T f_t(\boldsymbol{x}_t) - \sum_{t=1}^T f_t(\boldsymbol{u}) & \le 2 \sqrt{(4 + \ln (d+1)) \sum_{t=1}^T \left \|\frac{q_t \hat{Q}_t}{B}\left(\frac{B}{\hat{Q}_t} \boldsymbol{\beta} - \textbf{LCB}_{c,t}(A_t) \right) \right \|_{\infty}^2} \\
        & \le  O \left( \left(\overline{q} + \frac{\overline{q}^2}{b} \right) \sqrt{\ln(d+1) T} \right).
        \end{aligned}
    \end{equation*}
    where the last inequality comes from the triangle inequality of norm. Then the lemma holds.
\end{proof}

\section{Proofs for Regret Upper Bound for Linearly increasing and AR(1) demand models}

This section provides the proof for the regret upper bound for our algorithm under linearly increasing and AR(1) demand model. 

\subsection{Proof for Lemma \ref{lem-timeseries-accuracy-linear}}

Clearly $\overline{q} = \alpha + \beta T + M = \Theta \left( \beta T \right)$. Observe that for $t > 1$,
\begin{equation*}
   \hat{\beta}_t = \frac{(t-1)\sum_{s=1}^{t-1} s q_s - \left(\sum_{s=1}^{t-1} s \right) \left(\sum_{s=1}^{t-1} q_s \right)}{(t-1)\sum_{s=1}^{t-1} s^2 - \left(\sum_{s=1}^{t-1} s \right)^2} = \beta + \frac{12}{t(t-1)(t-2)} \sum_{s=1}^{t-1} \xi_s \left(s - \frac{t}{2} \right),
\end{equation*}
and similarly,  
\begin{equation*}
    \hat{\alpha}_t = \frac{\sum_{s=1}^{t-1} q_s - \hat{\beta}_t \sum_{s=1}^{t-1} s}{t-1} = \alpha + \frac{2}{(t-1)(t-2)} \sum_{s=1}^{t-1} \xi_s \left(2 t - 3 s -1 \right).
\end{equation*}

By the definition of $\hat{Q}_t$, we have
\begin{equation*}
    \begin{aligned}
        \left|\hat{Q}_t - Q \right| & = \left | \sum_{s=t}^T \left( \hat{\alpha}_t + \hat{\beta}_t s - \alpha - \beta s - \xi_s \right) \right| \\
        & \le (T - t + 1) \left|\hat{\alpha}_t - \alpha \right | + \frac{1}{2} (T - t + 1)(T+t) \left|\hat{\beta}_t - \beta \right | + \left | \sum_{s=t}^T \xi_s \right |.
    \end{aligned}
\end{equation*}

Hence
\begin{equation}
    \begin{aligned}
    \mathbb{P}\left(\left|\hat{Q}_t - Q \right| > \epsilon_t \right) & \le \mathbb{P} \left(\left|\hat{\alpha}_t - \alpha \right | > \frac{\lambda_1}{T-t+1} \epsilon_t  \right) + \mathbb{P} \left(\left|\hat{\beta}_t - \beta \right | > \frac{2\lambda_2}{(T-t+1)(T+t)} \epsilon_t  \right) \\
    & + \mathbb{P}\left(\left | \sum_{s=t}^T \xi_s \right | > \lambda_3 \epsilon_t \right),
    \end{aligned}
    \label{eq-pf-upper-linear-prob}
\end{equation}
where $\lambda_1 = \frac{(t-1)\sqrt{T-t+1}}{A}$, $\lambda_2 = \frac{(T+t)\sqrt{T-t+1}}{A}$, $\lambda_3 = \frac{(t-1)^{\frac{3}{2}}}{A}$, and $A$ is the value that makes $\lambda_1 + \lambda_2 + \lambda_3 = 1$. Now we apply Hoeffding's inequality for each term for $t \ge 4$:
\begin{equation}
    \begin{aligned}
        \mathbb{P} \left(\left|\hat{\alpha}_t - \alpha \right | > \frac{\lambda_1}{T-t+1} \epsilon_t  \right) & \le \mathbb{P} \left(\left| \sum_{s=1}^{t-1} \xi_s (2 t - 3 s - 1) \right | > \frac{(t-1)(t-2)}{2(T-t+1)} \lambda_1\epsilon_t  \right) \\
        & \le 2 \exp \left(- \frac{(t-1)(t-2)}{(T-t+1)^2 \left (t-\frac{1}{2} \right)} \cdot \frac{\lambda_1^2 \epsilon_t^2}{8 M^2}\right) \\
        & \le 2 \exp \left(- \frac{(t-1)^3}{T-t+1} \cdot \frac{ \epsilon_t^2}{16 A^2 M^2}\right),
    \end{aligned}
    \label{eq-pf-upper-linear-probalpha}
\end{equation}
\begin{equation}
    \begin{aligned}
        \mathbb{P} \left(\left|\hat{\beta}_t - \beta \right | > \frac{2\lambda_2}{(T-t+1)(T+t)} \epsilon_t  \right) & \le \mathbb{P} \left(\left| \sum_{s=1}^{t-1} \xi_s \left (s - \frac{t}{2} \right) \right | > \frac{t(t-1)(t-2)}{6(T-t+1)(T+t)} \lambda_2 \epsilon_t  \right) \\
        & \le 2 \exp \left(- \frac{t(t-1)(t-2)}{(T-t+1)^2 (T+t)^2} \cdot \frac{\lambda_2^2 \epsilon_t^2}{6 M^2}\right) \\
        & \le 2 \exp \left(- \frac{(t-1)^3}{T-t+1} \cdot \frac{ \epsilon_t^2}{16 A^2 M^2}\right),
    \end{aligned}
    \label{eq-pf-upper-linear-probbeta}
\end{equation}
\begin{equation}
    \begin{aligned}
        \mathbb{P}\left(\left | \sum_{s=t}^T \xi_s \right | > \lambda_3 \epsilon_t \right) \le 2 \exp \left( - \frac{\lambda_3^2 \epsilon_t^2}{2 M^2 (T-t+1)} \right) \le 2 \exp \left(- \frac{(t-1)^3}{T-t+1} \cdot \frac{ \epsilon_t^2}{16 A^2 M^2}\right).
    \end{aligned}
    \label{eq-pf-upper-linear-probvarep}
\end{equation}

Combine (\ref{eq-pf-upper-linear-prob}), (\ref{eq-pf-upper-linear-probalpha}), (\ref{eq-pf-upper-linear-probbeta}), (\ref{eq-pf-upper-linear-probvarep}), we have
\begin{equation*}
    \mathbb{P}\left(\left|\hat{Q}_t - Q \right| > \epsilon_t \right) \le 6 \exp \left(- \frac{(t-1)^3}{T-t+1} \cdot \frac{ \epsilon_t^2}{16 A^2 M^2}\right).
\end{equation*}
Thus, with probability $\ge 1 - \delta$, 
\begin{equation*}
    \left|\hat{Q}_t - Q \right| \le O \left(AM \sqrt{\log \left( \frac{1}{\delta}\right) \frac{(T-t+1)}{(t-1)^3}} \right) = O \left(M T^2 \sqrt{ \log \left( \frac{1}{\delta}\right)}  \left( t-1 \right)^{-\frac{3}{2}}\right) := \epsilon_t.
\end{equation*}



Take the union bound for all $t$ then we finish the proof.

\subsection{Proof for Theorem \ref{thm-upperbound-linear}}

Based on the expression of $\epsilon_t$ derived in Lemma \ref{lem-timeseries-accuracy-linear}, with probability $\ge 1 - T \delta$, we have
\begin{equation*}
    \begin{aligned}
    \sum_{t=1}^{\tau - 1} q_t \epsilon_t & \le 3 \overline{q} T + \sum_{t=4}^{\tau - 1} q_t \epsilon_t \\
    & \le 3 \overline{q} T +  \sum_{t=4}^{\tau - 1} (\alpha + \beta t + M) \epsilon_t  \\
    & = O \left(M T^2 \sqrt{\log \left( \frac{1}{\delta}\right)} \right) \cdot O \left( \sum_{t=4}^{\tau - 1} \beta t \cdot \left( t-1 \right)^{-\frac{3}{2}} \right) \\
    & = O\left(\overline{q} M T  \sqrt{(\tau - 1)\log \left( \frac{1}{\delta}\right)} \right).
    \end{aligned}
\end{equation*}

Since $Q = \sum_{t=1}^T q_t = \Theta(\overline{q} T)$, $B = b T$, we have
\begin{equation*}
    \left(\frac{1}{Q} + \frac{1}{B} \right) \sum_{t=1}^{\tau-1} q_t \epsilon_t = O \left( M\left( 1 + \frac{\overline{q}}{b}\right) \sqrt{(\tau-1) \log\left( \frac{1}{\delta}\right)} \right).
\end{equation*}

Hence, combine Theorem \ref{thm-upperbound-1}, with probability $\ge 1 - 3KTd \delta - T\delta $, we have
\begin{equation*}
    \begin{aligned}
        \text{OPT}_{\text{LP}} - \sum_{t=1}^{\tau - 1} q_t R_t  & \le   O \left( \left( \text{OPT}_{\text{LP}} \sqrt{\frac{\overline{q} K}{B}} + 
    \sqrt{\overline{q} K \text{OPT}_{\text{LP}} } \right) \log\left(\frac{1}{\delta} \right) \right. \\
    & +  \left. \left(\frac{1}{Q} + \frac{1}{B} \right) \sum_{t=1}^{\tau-1} q_t \epsilon_t + \left(\overline{q} + \frac{\overline{q}^2}{b} \right) \sqrt{(\tau - 1)\ln(d+1)} \right) \\
    & = O \left( \left( \text{OPT}_{\text{LP}} \sqrt{\frac{\overline{q} K}{B}} + 
    \sqrt{\overline{q} K \text{OPT}_{\text{LP}} } \right) \log\left(\frac{1}{\delta} \right) \right. \\
    & +  \left.  M\left( 1 + \frac{\overline{q}}{b}\right) \sqrt{(\tau-1) \log\left( \frac{1}{\delta}\right)} + \left(\overline{q} + \frac{\overline{q}^2}{b} \right) \sqrt{(\tau - 1)\ln(d+1)} \right) \\
    & = \Tilde{O} \left(\text{OPT}_{\text{LP}} \sqrt{\frac{\overline{q} K}{B}} + 
    \sqrt{\overline{q} K \text{OPT}_{\text{LP}} } + (M+\overline{q})\sqrt{\tau - 1} \right).
    \end{aligned}
\end{equation*}
Now the theorem is proved.

\subsection{Proof for Lemma \ref{lem-timeseries-accuracy-ar1}}

We prove this lemma by showing thw following 2 claims:
\begin{enumerate}
    \item With probability $\ge 1 - T \delta$, $\underline{q} \le q_t \le \overline{q}$, for all $t$, where 
    \begin{equation*}
    \overline{q} = \max \left \{q_1,\frac{\alpha}{1 - \beta} \right \} + \sigma \sqrt{\frac{2}{1 - \beta^2} \log \left(\frac{2}{\delta} \right)}, \quad \underline{q} = \min \left \{q_1,\frac{\alpha}{1 - \beta} \right \} - \sigma \sqrt{\frac{2}{1 - \beta^2} \log \left(\frac{2}{\delta} \right)}.
\end{equation*}
\textbf{Proof:} We recursively explore the expression of $q_t$:
\begin{equation*}
    \begin{aligned}
        q_t &= \alpha + \beta q_{t-1} + \xi_t \\
        & = \alpha + \beta \left(\alpha + \beta q_{t-2} + \xi_{t-1} \right) + \xi_t \\
        & = \alpha (1 + \beta) + \beta^2 q_{t-2} + \beta \xi_{t-1} + \xi_t \\
        & \ \  \vdots \\
        & = \alpha \left(1 + \beta + \cdots + \beta^{t-2} \right) + \beta^{t-1} q_1 + \beta^{t-2} \xi_2 + \cdots + \beta \xi_{t-1} + \xi_t \\
        & = \frac{\alpha}{1 - \beta} \left(1 - \beta^{t-1} \right) + \beta^{t-1} q_1 +  \beta^{t-2} \xi_2 + \cdots + \beta \xi_{t-1} + \xi_t. 
    \end{aligned}
\end{equation*}

Since $\{\xi_t\}$ is a sequence of independent random variables with zero-mean $\sigma^2-$subgaussian distribution, we can derive that 
\begin{equation*}
    \sum_{s=2}^t \beta^{t-s}  \xi_s \ \ \text{follows a zero-mean $A'\sigma^2$-subgaussian distribution,} 
\end{equation*}
where
\begin{equation*}
    A' = \sum_{s=2}^t \left(\beta^{t-s} \right)^2 = \frac{1 - \beta^{2(t-1)}}{1 - \beta^2}.
\end{equation*}
Hence with probability $1 - \delta$, 
\begin{equation*}
    \begin{aligned}
    q_t & \in \left[ \frac{\alpha}{1 - \beta} \left(1 - \beta^{t-1} \right) + \beta^{t-1} q_1 - \sigma \sqrt{\frac{2\left(1 - \beta^{2(t-1)}\right)}{1 - \beta^2} \log \left(\frac{2}{\delta} \right)}, \right. \\
    & \left .\frac{\alpha}{1 - \beta} \left(1 - \beta^{t-1} \right) + \beta^{t-1} q_1 + \sigma \sqrt{\frac{2\left(1 - \beta^{2(t-1)}\right)}{1 - \beta^2} \log \left(\frac{2}{\delta} \right)} \right] \\
    & \subseteq \left[\min \left \{q_1,\frac{\alpha}{1 - \beta} \right \} - \sigma \sqrt{\frac{2}{1 - \beta^2} \log \left(\frac{2}{\delta} \right)}, \max \left \{q_1,\frac{\alpha}{1 - \beta} \right \} + \sigma \sqrt{\frac{2}{1 - \beta^2} \log \left(\frac{2}{\delta} \right)} \right] \\
    & = \left[\underline{q},\overline{q} \right]
    \end{aligned}
\end{equation*}
Take the union bound for $t=1,2,...,T$, then with probability $\ge 1- T\delta$, $\underline{q} \le q_t \le \overline{q}$.
 
    \item Suppose $q_t \ge \underline{q}$ holds for all $t$, then with probability $\ge 1 - 2 T \delta$, 
    \begin{equation*}
    \left|\hat{Q}_t - Q \right| \le O \left(\frac{T-t+1}{\sqrt{t-1}} \cdot \frac{\alpha A}{\underline{q}(1-M)^2} + \sqrt{T-t+1} \cdot \frac{\sigma}{1 - \beta} \sqrt{\log \left(\frac{1}{\delta} \right)} \right) = \epsilon_t,
\end{equation*}
where $\underline{q}$, $\overline{q}$ are defined as above and 
\begin{equation*}
    A = O \left(\sqrt{\log \left(\frac{1}{\delta} \right)} + \sqrt{\log \left(T \right)} \right).
\end{equation*}
\textbf{Proof:} Apply Lemma 2 in \cite{bacchiocchi2022autoregressive} with $k = 1$ and $n = 1$, then with probability $\ge 1 - \delta$, we have
\begin{equation*}
    \begin{aligned}
    \left \| \hat{\boldsymbol{\gamma}}_t - \boldsymbol{\gamma} \right \|_{\boldsymbol{V}_t} \le \Delta_t &=  \sqrt{\lambda} \|\boldsymbol{\gamma} \|_2 + \sigma \sqrt{2 \log \left(\frac{1}{\delta} \right) + \log \left(\frac{\det(\boldsymbol{V}_t)}{\lambda^2} \right)} \\ 
    & \le \sqrt{\lambda} \|\boldsymbol{\gamma} \|_2 + \sigma \sqrt{2 \log \left(\frac{1}{\delta} \right) + 2 \log \left(\frac{2 \lambda + \overline{q}^2 (t-1)}{\lambda^2} \right)} \\
    & = O \left(\sqrt{\log \left(\frac{1}{\delta} \right)} + \sqrt{\log \left(T \right)} \right) := A.
    \end{aligned}
\end{equation*}
By the definition of $\boldsymbol{V}_t$, i.e.
\begin{equation*}
     \boldsymbol{V}_t = \lambda I_2 + \sum_{s=1}^{t-1} \boldsymbol{z}_{s-1} \boldsymbol{z}_{s-1}^{\top} = \begin{pmatrix}
     \lambda + t - 1 & \sum_{s=1}^{t-1} q_{s-1} \\
     \sum_{s=1}^{t-1} q_{s-1} & \lambda + \sum_{s=1}^{t-1} q_{s-1}^2
     \end{pmatrix},
\end{equation*}
when $t > 2$ and $\lambda = 1$, with probability $\ge 1 - \delta$,
\begin{equation*}
     (\lambda + t - 1) \left |\hat{\alpha}_t - \alpha \right|^2 \le  \left \| \hat{\boldsymbol{\gamma}}_t - \boldsymbol{\gamma} \right \|_{\boldsymbol{V}_t}^2 \le \Delta_t^2 \quad \Rightarrow \quad \left |\hat{\alpha}_t - \alpha \right| \le \frac{\Delta_t}{\sqrt{\lambda + t-1}} = O \left( \frac{\Delta_t}{\sqrt{t-1}}\right).
\end{equation*}
Similarly, with probability $\ge 1 - \delta$, we have
\begin{equation*}
    \left(\lambda + \sum_{s=1}^{t-1} q_{s-1}^2\right) \left |\hat{\beta}_t - \beta \right|^2 \le  \left \| \hat{\boldsymbol{\gamma}}_t - \boldsymbol{\gamma} \right \|_{\boldsymbol{V}_t}^2 \le \Delta_t^2 \quad \Rightarrow \quad \left |\hat{\beta}_t - \beta \right| \le \frac{\Delta_t}{\sqrt{\lambda + \sum_{s=1}^{t-1}q_s^2 }} = O \left( \frac{ \Delta_t}{\underline{q}\sqrt{t-1}}\right).
\end{equation*}

Denote $\phi = \frac{\alpha}{1 - \beta}$, then we obtain: with probability $\ge 1-\delta$,
\begin{equation*}
    \left |\hat{\phi}_t - \phi \right| = \left |\frac{\hat{\alpha}_t}{1 - \hat{\beta}_t}- \frac{\alpha}{1 - \beta} \right| \le \frac{1}{(1-M)^2} \left|\hat{\alpha}_t (1-\beta) - \alpha (1 - \hat{\beta}_t ) \right| = O \left(\frac{\alpha \Delta_t}{\underline{q}(1-M)^2 \sqrt{t-1}} \right).
\end{equation*}
Now we recursively explore the expression of $Q$:
\begin{equation*}
    \begin{aligned}
        Q &= \sum_{t=1}^T q_t \\
        & = \sum_{t=1}^{T-1} q_t + \alpha + \beta q_{T-1} + \xi_T \\
        & = \sum_{t=1}^{T-2} q_t + \phi(1 - \beta) + (\beta + 1) q_{T-1} + \xi_T \\
        & \ \ \vdots \\
        & = \sum_{s=1}^{t-1} q_s + \frac{\beta - \beta^{T-t+1}}{1 - \beta} q_{t-1} + \phi \left(T-t+1 - \beta + \beta^{T-t+2}\right) + \sum_{s=t}^T \frac{1 - \beta^{T-s+1}}{1 - \beta} \xi_s.
    \end{aligned}
\end{equation*}
By the definition of $\hat{Q}_t$ (\ref{eq-timeseries-ar1-prediction}), we can analyze the prediction error $\epsilon_t$ as following: with probability $1 - \delta$,
\begin{equation}
    \begin{aligned}
    \left|\hat{Q}_t - Q \right| & \le \overline{q} \left |  \frac{\hat{\beta}_t - \hat{\beta}_t^{T-t+1}}{1 - \hat{\beta}_t} - \frac{\beta - \beta^{T-t+1}}{1 - \beta} \right | + (T-t+1)\left|\hat{\phi}_t - \phi \right| \\ &+ \left |\hat{\phi}_t \left(\hat{\beta}_t - \hat{\beta}_t^{T-t+2} \right) - \phi \left(\beta - \beta^{T-t+2} \right) \right | + \left|\sum_{s=t}^T \frac{1 - \beta^{T-s+1}}{1 - \beta} \xi_s \right | \\
    & \le O \left( \frac{\overline{q}}{1 - M }\right) + O \left(\frac{\alpha \Delta_t (T-t+1)}{\underline{q}(1-M)^2 \sqrt{t-1}} \right) \\
    &+ \left |\hat{\phi}_t \left(\hat{\beta}_t - \hat{\beta}_t^{T-t+2} \right) - \phi \left(\beta - \beta^{T-t+2} \right) \right | + \left|\sum_{s=t}^T \frac{1 - \beta^{T-s+1}}{1 - \beta} \xi_s \right |.
    \end{aligned}
    \label{eq-pf-upper-ar1-errortotal}
\end{equation}
Now we study the last two terms respectively. First, with probability $\ge 1 - \delta$, we have 
\begin{equation} 
    \begin{aligned}
    \left |\hat{\phi}_t \left(\hat{\beta}_t - \hat{\beta}_t^{T-t+2} \right) - \phi \left(\beta - \beta^{T-t+2} \right) \right | & \le \left |\left( \hat{\phi}_t - \phi \right )\left(\hat{\beta}_t - \hat{\beta}_t^{T-t+2} \right) \right | \\
    & + \left | \phi \left(\hat{\beta}_t - \hat{\beta}_t^{T-t+2} - \beta + \beta^{T-t+2} \right) \right | \\
    & \le 2 \left | \hat{\phi}_t - \phi  \right | + 4 |\phi| \\
    & = O \left(\frac{\alpha \Delta_t}{\underline{q}(1-M)^2 \sqrt{t-1}} + \phi \right).
    \end{aligned}
    \label{eq-pf-upper-ar1-error1}
\end{equation}
Second, since $\{\xi_t\}$ is a sequence of independent random variables with zero-mean $\sigma^2-$subgaussian distribution, we can derive that 
\begin{equation*}
    \sum_{s=t}^T \frac{1 - \beta^{T-s+1}}{1 - \beta} \xi_s \quad \text{follows a zero-mean $A'\sigma^2$-subgaussian distribution,}
\end{equation*}
where
\begin{equation*}
    A' = \sum_{s=t}^T \left(\frac{1 - \beta^{T-s+1}}{1 - \beta} \right)^2 = \frac{T-t+1}{(1-\beta)^2} - 2 \frac{\beta-\beta^{T-t+2}}{(1-\beta)^3} + \frac{\beta^2 - \beta^{2(T-t+2)}}{(1-\beta)^2(1-\beta^2)} = O \left( \frac{T-t+1}{(1-\beta)^2} \right).
\end{equation*}
Hence with probability $\ge 1 - \delta$, 
\begin{equation}
    \left|\sum_{s=t}^T \frac{1 - \beta^{T-s+1}}{1 - \beta} \xi_s \right | \le O \left(\frac{\sqrt{T-t+1}}{1-\beta} \sigma \sqrt{\log \left(\frac{1}{\delta} \right)}\right).
    \label{eq-pf-upper-ar1-error2}
\end{equation}
Combine (\ref{eq-pf-upper-ar1-errortotal}), (\ref{eq-pf-upper-ar1-error1}), (\ref{eq-pf-upper-ar1-error2}) and take union bound on event 
\begin{equation*}
    \left \{\left|\sum_{s=t}^T \frac{1 - \beta^{T-s+1}}{1 - \beta} \xi_s \right | > \Omega \left(\frac{\sqrt{T-t+1}}{1-\beta} \sigma \sqrt{\log \left(\frac{1}{\delta} \right)}\right) \right\}
\end{equation*}
and $\left\{\left \| \hat{\boldsymbol{\gamma}}_t - \boldsymbol{\gamma} \right \|_{\boldsymbol{V}_t} > \Delta_t \right\}$, we obtain the explicit expression of $\epsilon_t$: with probability $\ge 1 - 2 \delta$
\begin{equation*}
    \begin{aligned}
        \left|\hat{Q}_t - Q \right| & \le O \left(\frac{\alpha \Delta_t (T-t+1)}{\underline{q}(1-M)^2 \sqrt{t-1}} \right) + O \left(\frac{\sqrt{T-t+1}}{1-\beta} \sigma \sqrt{\log \left(\frac{1}{\delta} \right)} \right) \\
        & = O \left(\frac{T-t+1}{\sqrt{t-1}} \cdot \frac{\alpha A}{\underline{q}(1-M)^2} + \sqrt{T-t+1} \cdot \frac{\sigma}{1 - \beta} \sqrt{\log \left(\frac{1}{\delta} \right)} \right) := \epsilon_t.
    \end{aligned}
\end{equation*}
Take the union bound for $t = 1,2,...,T$, the claim is proved.
\end{enumerate}

Altogether, the lemma is proved.

\subsection{Proof for Theorem \ref{thm-upperbound-ar1}}

Based on the expression of $\epsilon_t$ derived in Lemma \ref{lem-timeseries-accuracy-ar1}, with probability $\ge 1 - 3 T \delta$, we have
\begin{equation*}
    \begin{aligned}
    \sum_{t=1}^{\tau- 1} q_t \epsilon_t &= 2 O\left(\overline{q}T \right) +  \overline{q} \sum_{t=3}^{\tau - 1} \epsilon_t  \\
    & \le O \left(2 \overline{q} T + \sum_{t=3}^{\tau - 1}  \left( \frac{T-t+1}{\sqrt{t-1}} \cdot \frac{\alpha A}{\underline{q}(1-M)^2} + \sqrt{T-t+1} \cdot \frac{\sigma}{1 - \beta} \sqrt{\log \left(\frac{1}{\delta} \right)} \right)\right) \\
    & \le O \left(2 \overline{q} T + (T - \tau + 1) \sqrt{\tau - 1} \cdot \frac{\alpha A}{\underline{q}(1-M)^2} + (\tau - 1) \sqrt{T} \cdot \frac{\sigma}{1 - \beta}\sqrt{\log \left(\frac{1}{\delta} \right)}  \right) \\
    & = O \left(\Gamma' T \sqrt{\tau - 1} \right),
    \end{aligned}
\end{equation*}
where
\begin{itemize}
\item 
\begin{equation*}
    \Gamma' = \frac{\alpha A}{\underline{q}(1-M)^2} + \frac{\sigma}{1 - \beta} \sqrt{\log \left(\frac{1}{\delta} \right)} = \frac{\alpha}{\underline{q}(1-M)^2} \left( \sqrt{\log \left(\frac{1}{\delta} \right)} + \sqrt{\log \left(T \right)}  \right)+ \frac{\sigma}{1 - \beta} \sqrt{\log \left(\frac{1}{\delta} \right)}.
\end{equation*}
\item The second inequality comes from the following two facts:
\begin{equation*}
    \sum_{t=3}^{\tau - 1}  \frac{T-t+1}{\sqrt{t-1}} \le \int_2^{\tau - 1}  \frac{T-t+1}{\sqrt{t-1}} dt = \Theta \left( T\sqrt{\tau - 1} - (\tau-1)^{\frac{3}{2}}\right) = \Theta \left((T-\tau+1)\sqrt{\tau - 1} \right),
\end{equation*}
and
\begin{equation*}
    \sum_{t=3}^{\tau - 1} \sqrt{T-t+1} \le \int_2^{t-1} \sqrt{T-t+1} dt = \Theta\left(T^{\frac{3}{2}} - (T - \tau+1)^{\frac{3}{2}} \right) = \Theta \left((\tau - 1)\sqrt{T} \right).
\end{equation*}
\end{itemize}

Since $Q \ge \underline{q}T$, $B = bt$, utimately we have
\begin{equation*}
    \left(\frac{1}{Q} + \frac{1}{B} \right) \sum_{t=1}^{\tau-1} q_t \epsilon_t = O \left(\Gamma \sqrt{\tau - 1} \right),
\end{equation*}
where
\begin{equation*}
    \Gamma = \left(\frac{1}{\underline{q}} + \frac{1}{b} \right) \Gamma' = \left(\frac{1}{\underline{q}} + \frac{1}{b} \right) \left(\frac{\alpha}{\underline{q}(1-M)^2} \left( \sqrt{\log \left(\frac{1}{\delta} \right)} + \sqrt{\log \left(T \right)}  \right)+ \frac{\sigma}{1 - \beta} \sqrt{\log \left(\frac{1}{\delta} \right)} \right).
\end{equation*}

Same as the proof for Theorem \ref{thm-upperbound-linear}, we can obtain that with probability $\ge 1 - 3 KTd\delta - 3T \delta$, 
\begin{equation*}
        \text{OPT}_{\text{LP}} - \sum_{t=1}^{\tau - 1} q_t R_t \le \Tilde{O} \left( \text{OPT}_{\text{LP}} \sqrt{\frac{\overline{q} K}{B}} + 
    \sqrt{\overline{q} K \text{OPT}_{\text{LP}} } + \left(\Gamma + \overline{q} \right) \sqrt{\tau - 1} \right).
\end{equation*}



\end{document}